 \newcommand{\mac}{\mathcal}
 \newcommand{\f}{\mathbb}
 \newcommand{\vf}{\varphi}
 \newcommand{\ol}{\overline}
 \newcommand{\cu}{\subseteq}
 \newcommand{\wt}{\widetilde}
 \newcommand{\ve}{\varepsilon}
 \newcommand{\R}{\mathbb{R}}
 \newcommand{\lp}{\left(}
 \newcommand{\rp}{\right)}
 \DeclareMathOperator{\rank}{rank}
 \DeclareMathOperator{\conv}{conv}
 \DeclareMathOperator{\vol}{vol}
 \DeclareMathOperator{\cone}{cone}
 \newtheorem{definition}{Definition}
 \newtheorem{theorem}{Theorem}
 \newtheorem{assumption}{Assumption}
 \newtheorem{notation}{Notation}
 \newtheorem{corollary}{Corollary}
 \newtheorem{lemma}{Lemma}
 \newtheorem{remark}{Remark}
\definecolor{brightpink}{rgb}{1.0, 0.0, 0.5}
\title{Robustness of Minimum-Volume Nonnegative %simplex-structured 
Matrix Factorization \\ under an Expanded Sufficiently Scattered Condition} 
\author{Giovanni Barbarino \qquad  Nicolas Gillis\qquad Subhayan Saha \thanks{Emails: \{giovanni.barbarino, nicolas.gillis, subhayan.saha\}@umons.ac.be. Authors acknowledge the support  by the European Union (ERC consolidator, eLinoR, no 101085607).  
 } \\ 
 Department of Mathematics and Operational Research \\ 
University of Mons, Rue de Houdain 9, 7000 Mons, Belgium   
	}
\date{}
\begin{document}

\maketitle

\begin{abstract}
Minimum-volume nonnegative matrix factorization (min-vol NMF) has been used successfully in many applications, such as hyperspectral imaging, chemical kinetics, spectroscopy, topic modeling, and audio source separation. However, its robustness to noise has been a long-standing open problem. In this paper, we prove that min-vol NMF identifies the groundtruth factors in the presence of noise  under a condition referred to as the expanded sufficiently scattered condition which requires the data points to be sufficiently well scattered in the latent simplex generated by the basis vectors.  
\end{abstract}

\section{Introduction}

Let $\{x_1,x_2,\dots,x_n\}\cu \f R^m$ be a dataset, and 
let  $X = [x_1,x_2,\dots,x_n]\in \f R^{m\times n}$ be the corresponding matrix whose columns are the data points $x_i$'s. 
An approximation of $X$ as the product of two smaller matrices, $W\in \f R^{m\times r}$ and  $H\in \f R^{n\times r}$  with $r\ll \min\{n,m\}$, such that $X \approx WH^\top$ 
  gives us insight on the information contained in $X$. 
  In fact, low-rank approximations are a central tool in data analysis, being equivalent to linear dimensionality reductions techniques, with PCA and the truncated SVD as the workhorse approaches~\cite{vidalgeneralized16, udell2016generalized, markovsky2012low}.  
  
  However, due to the sheer number of possible such decompositions, the information provided is hardly interpretable. This motivated researchers to introduce more constrained low-rank approximations. Among them, nonnegative matrix factorization (NMF) focuses on nonnegative input matrices $X$ and imposes the factors, $W$ and $H$, to be nonnegative entry-wise. 
Nonnegativity is motivated by \emph{physical constraints}, such as nonnegative sources and activations in 
hyperspectral imaging~\cite{bioucas2012hyperspectral}, 
chemometrics~\cite{de2006multivariate} and 
audio source separation~\cite{ozerov2009multichannel}, 
and 
by \emph{probabilistic modeling}, 
such as topic modeling~\cite{lee1999learning, AGHMMSWZ13} and unmixing of independent distributions~\cite{kubjas2015fixed}.  
Moreover, NMF leads to an easily-interpretable and part-based representation of the data~\cite{lee1999learning}. 
See also~\cite{cichocki2009nonnegative, xiao2019uniq, Gil20} and the references therein. 

\paragraph{Geometric interpretation of NMF} In the exact case, when $X = WH^\top$, up to a preprocessing of the matrix $X \geq 0$ that normalizes the columns of $X$ to have unit $\ell_1$ norm, it is possible to assume without loss of generality that $H$ has stochastic rows, that is, $He=e$ where $e$ is the vector of all ones of appropriate dimension. 
This condition allows a simple geometric interpretation of the decomposition: every data point, $x_i = WH(i,:)^\top$, is a convex combination of the $r$ columns of $W$, since $H(i,:) \geq 0$ and $\sum_k H(i,k) = 1$. 
Hence the convex hull of the $x_i$'s, denoted as $\conv(X)$, is contained in $\conv(W)$. 
Notice that even if the number of vertices of $\conv(X)$ may be as large as $n$, the number of vertices of $\conv(W)$ is instead at most $r\ll n$. Such a decomposition is called a simplex-structured matrix factorization (SSMF) \cite{lin2018maximum, abdolali2021simplex}.  

\paragraph{Minimum-volume NMF} The existence of an exact SSMF alone is not sufficient to ensure the uniqueness of a polytope $\conv(W)$ with $r$ vertices containing all the $x_i$'s. 
In fact, we can typically generate an infinite number of such decompositions just by enlarging $\conv(W)$, as long as it remains within the nonnegative orthant. 
As a consequence, %we work with a more restricted model that is focused on 
researchers have looked for solutions with additional constraints,  sparsity being among the most popular one~\cite{hoyer2004non, kim2007sparse, gillis2012sparse}. 
Another approach, motivated by geometric considerations, looks for a 
minimum-volume solution, trying to make the basis vectors as close as possible to the data points. 
In particular, it considers  the following optimization problem, referred to as  minimum-volume (min-vol) NMF:   
\begin{equation}\label{eq:optminvol} 
    \min_{W \in \mathbb{R}^{m \times r}, H \in \mathbb{R}^{n \times r}} \; \vol(W) 
    \quad \text{such that } 
    \quad X = W H^\top, 
    H e = e,  
     \text{ and } H \geq 0, 
\end{equation}
where $\vol(W) := \det(W^\top W)$ is the squared volume of the polytope whose vertices are the columns of $W$ and the origin, up to the factor $1/r!$. 
Note that this problem is equivalent to 
\begin{equation*} 
    \min_{W \in \mathbb{R}^{m \times r}} \; \vol(W) 
    \quad \text{such that} \quad
    \conv(X)\cu \conv(W). 
\end{equation*} 

%\ngc{A bit confusing since we assume sum to one in this paper... I think we should first mention the earlier results, then mention it was relaxed to $H^\top e=e$, OK? Note that this is not really a relaxation, because the normalization $H^\top e=e$ can destroy SSness... cf. thesis of Olivier. To be discussed.}  

\begin{remark}[Nonnegativity of $X$ and $W$] \label{rem:nonneg} The nonnegativity of $X$ and $W$ has been removed from~\eqref{eq:optminvol}. 
The main reason is twofold: 
\begin{enumerate}
    \item It makes the problem more general, sometimes referred to as semi-NMF~\cite{ding2008convex, gillis2015exact}, or 
    ``finding a latent simplex''~\cite{bhattacharyya2020near, bakshilearning} or ``learning high-dimensional simplices''~\cite{najafi2021statistical, saberi2025fundamental}.  

    \item Nonnegativity of $W$ is not useful in most identifiability proofs of min-vol NMF; see the next paragraph. 
\end{enumerate} 
Hence, there is a slight abuse of language when referring to min-vol NMF since, in such decompositions, $W$ could potentially have negative entries, although the variant where $W$ is imposed to be nonnegative is often used in practice. 
The reason is that these models appeared in the NMF literature, hence authors kept the name NMF, although using the term semi-NMF would have been more appropriate. We refer the interested reader to the discussions in~\cite[Chapter~4]{Gil20} for more details. 
We will focus in this paper on the case where \emph{$W$ is not imposed to be nonnegative}. 
\end{remark}

%\ngc{Should we change the name to min-vol SSMF?} 

%is not necessary for Theorem~\ref{thm:idenminvol} to hold

\paragraph{Identifiability of min-vol NMF} 

Identifiability for min-vol NMF was proved in~\cite{FMHS15, lin2015identifiability}: %under the constraint $H e = e$. %which is not w.l.o.g.\ in the prese since normalizing the rows of $H$ makes the columns of $X$ belong to the convex hull of the columns of $W$.  
%Fu et al.\ proved~\cite{fu2018identifiability} that 
if $X \in \mathbb R^{m \times n}$ admits a decomposition  $X = W^{\#} (H^{\#})^{\top}$ where $H^{\#} \in \f R^{n \times r}_+$ satisfies the \textit{sufficiently scattered condition (SSC)} and $r = \rank(X)$, then the optimal solution $(W^*,H^*)$ of \eqref{eq:optminvol} is  \textit{identifiable}, that is, it is unique and $W^*$ coincides with $W^\#$ up to a permutation of its columns. 
In particular, this implies that there exists a unique minimum-volume  polytope $\conv(W^*)$ with $r$ vertices containing $\conv(X)$, and it coincides with $\conv(W^\#)$. 

We will provide a formal and detailed definition of SSC in 
Section~\ref{sec:ssc}. 
The geometric intuition is that  a row stochastic matrix $H \in \f R^{n \times r}_+$ is SSC whenever $\conv(H^\top)$ contains the hyper-sphere $\mac Q_{\sqrt{r-1}}$ that is internally tangent to the unit simplex $\Delta^r := \{ x \ | \ x \geq 0, e^\top x = 1 \}$, that is, 
\[
\mac Q_{\sqrt{r-1}} = \left\{   x\in \Delta^r \ \Big | \ x = \frac er + w, \ \| w\|^2\le \frac{1}{r-1}-\frac 1r       \right\} \ \cu \ 
\conv(H^\top).
\]
This is illustrated on the left image of Figure~\ref{fig:SSCvsPSSC}.  
In this case, we say that $H$ is sufficiently scattered inside $\Delta^r$. Equivalently, this requires that the data points $x_i$'s are sufficiently scattered inside $\conv(W)$. 
%In a sense, this condition strongly identifies the SSC decomposition $WH^\top$ and in particular the polytope  $\conv(W)$. 
The SSC requires some sparsity in $H$, since rows of $H$ must be located on the boundary of the unit simplex; in fact, one can show that $H$ requires at least $r-1$ zeros per column~\cite{xiao2019uniq}.  
\begin{figure}
    \centering
    \includegraphics[width=.9\linewidth]{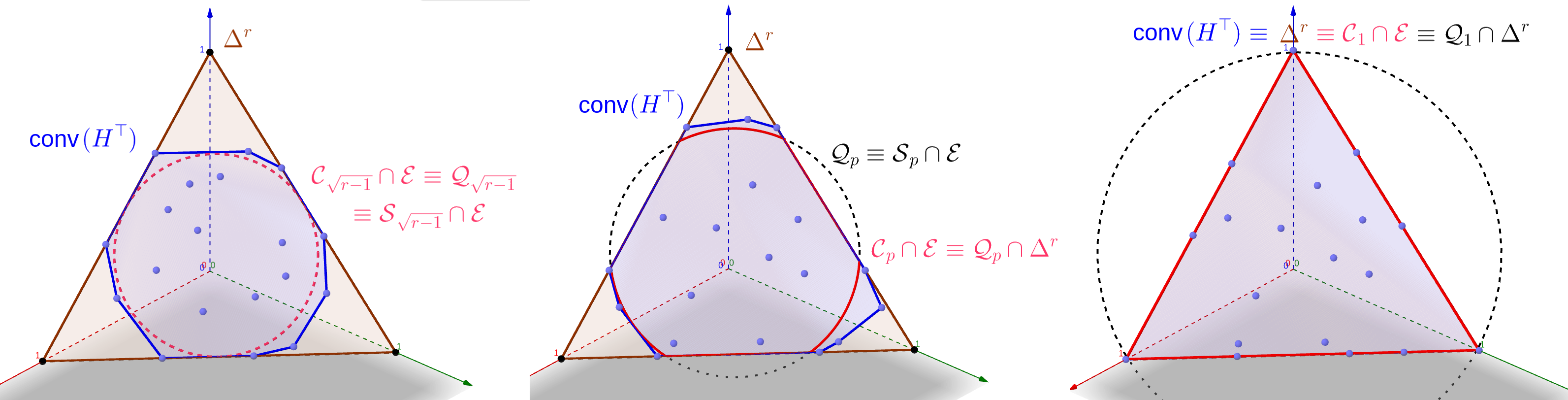}
    \caption{
    Geometric intuition for SSC on the left, $p$-SSC on the center with $1< p< \sqrt{r-1}$, and separability on the right. 
    Visualization on the unit simplex $\Delta^r$ in the case $r=3$ and for $H$ row stochastic. 
    }
    \label{fig:SSCvsPSSC}
\end{figure}

\begin{remark}[Relaxation of the sum-to-one constraint]
The sum-to-one constraint on the rows of $H$, $He=e$, 
can be relaxed to the normalization $H^\top e=e$~\cite{fu2018identifiability}, or to $W^\top e = e$~\cite{leplat2020blind}; see the discussion in~\cite[Chapter~4]{Gil20} for more details. In this paper, we focus on the sum-to-one constraint $He=e$, that is, we focus on simplex-structured matrix factorizations. 
\end{remark}

\paragraph{Importance of min-vol NMF} 

This intuitive idea behind min-vol NMF was introduced 
% \ngc{Adapt from book:The main idea behind min-vol NMF dates back to Full, Erlich, and Klovan [178] (1981)
% and from Craig’s belief [109] (1994) that a minimum-volume solution would correspond to the
% true materials within a hyperspectral image, given that the data points (that is, the pixels) are
% sufficiently spread within conv(W).
% A first NMF-based formulation and algorithm was proposed in [343] (2007)
% } 
in hyperspectral unmixing~\cite{full1981extended, craig1994minimum} and analytical chemistry~\cite{perczel1991convex, perczel1992analysis}; 
see also~\cite{miao2007endmember, chan2009convex, rajko2010comments, zhang2017robust} and the references therein.  
Given a set of spectral signatures (that is, fractions of light reflected  depending on the wavelength), 
 the goal is to recover the spectra of the materials present in the image or chemical reaction (the columns of $W$) and their abundances in these signatures (the rows of $H$). 
Since then, it has been used in many different contexts, including 
 topic modeling~\cite{Fuetal19, jang2019minimum}, 
 blind audio source separation~\cite{leplat2020blind, wang2021minimum}, 
crowd sourcing~\cite{ibrahim2019crowdsourcing}, 
recovering joint probability~\cite{ibrahim2021recovering}, 
label-noise learning~\cite{li2021provably}, deep constrained clustering~\cite{nguyen2023deep},  dictionary learning~\cite{hu2023dico}, and tensor decompositions~\cite{sun2023volume, saha2025identifiability}.  

Min-vol NMF is also motivated by statistical considerations: 
if we assume that the rows of $H$ follow a uniform Dirichlet distribution, min-vol NMF is the maximum likelihood estimator~\cite{nascimento2011hyperspectral, jang2019minimum, wu2021probabilistic}; 
this is closely related to the latent Dirichlet allocation model in topic modeling~\cite{blei2003latent}.

\paragraph{Open question: presence of noise and robustness of min-vol NMF} 

Despite its importance in applications, the identifiability of min-vol NMF has only been studied in noiseless scenarios. 
In the presence of noise, one can ask for an approximated decomposition, that is, where the norm of $X-WH^\top$ is smaller than a certain tolerance level $\ve\ge 0$. We thus turn to the following  min-vol NMF problem
   \begin{equation}\label{eq:apprminvol} 
   \min_{W \in \mathbb{R}^{m \times r}, H \in \mathbb{R}^{n \times r}} \  \det(W^\top W)  
   \quad 
   \text{such that} 
    \quad \|X-WH^\top\|_{1,2}\le \ve, \  
    He = e,   
    \text{ and } H\ge 0,   
\end{equation} 
where the norm $\|A\|_{1,2} = \max_j \|a_j\|$ is the maximum Euclidean norm of the columns of $A$.

The main objective of this article is to study the solution of \eqref{eq:apprminvol} and characterize under which conditions it is possible to recover $W^\#$ and $H^\#$, up to some controlled error, 
from 
%when $X$ admits a ground-truth SSC decomposition up to a perturbation of norm at most $\ve$, that is, under the assumption that  
  \begin{equation}\label{eq:assumption1} 
   X = W^\# (H^\#)^\top + N^\#,
  \end{equation}
where $ (H^\#)^\top$ is column stochastic, $W_\#$ is full rank, and $\|N^\#\|_{1,2}\le \ve$. %In other words, we want to know under which conditions a matrix is robust under a small enough perturbation.
This is, to the best of our knowledge, an important open question in the NMF literature~\cite{xiao2019uniq, Gil20}. Let us quote~\cite{lin2015identifiability}: 
\begin{quote}
    The whole work has so far assumed the noiseless case, and sensitivity in the
noisy case has not been touched. These challenges are left as future work. 
\end{quote}

It is known that the SSC alone is not enough to  \textit{robustly} recover $W^\#$  by solving~\eqref{eq:apprminvol}: for any $\ve > 0$, there exist a matrix $X_\ve$ respecting \eqref{eq:assumption1}, but for which the optimization problem \eqref{eq:apprminvol} has an optimal  solution $(W^*,H^*)$ far from the ground truth $(W^\#,H^\#)$~\cite{lin2015identifiability}.

This problem is closely related to the problem of learning high-dimensional simplices in noisy regimes~\cite{najafi2021statistical, saberi2025fundamental}. In~\cite{saberi2025fundamental}, it is mentioned that 
\begin{quote}
    the minimum-volume simplex estimator
proposed by Najafi et al.~(2021)~\cite{najafi2021statistical} can become highly inaccurate in the presence of noise. In
high dimensions (that is, when $r \gg 1$), the corrupted samples are likely to fall outside the
true simplex, leading to significant estimation errors. 
\end{quote}  
In this paper, we mitigate this issue by allowing approximate solutions, via the constraint $\|X-WH^\top\|_{1,2}\le \ve$, while proving robustness of the solution recovered by min-vol NMF~\eqref{eq:apprminvol}.

%\ngc{I realize we do not discuss at all the recovery of $H^\#$: Should we mention we only care about $W^\#$ (my preference), or derive a bound on $H^\#$ as well? --this would follow directly from the identifiability of $W^\#$ and matrix inequalities, I think.} 

%At the same time, the cases of non-stability are quite artificial, and they do not occur if we impose slightly stronger conditions to the ground-truth matrix $H^\#$. 

\paragraph{Expanded SSC} 

Since the SSC is not enough in the presence of noise, we must define a more restrictive condition for the matrix $H^\#$, and we use the expanded SSC or $p$-SSC. 
We say that $H$ is $p$-SSC with $1 \leq p\leq \sqrt{r-1}$ if 
\[
\mac C_p:= \left\{x \in \f R_+^r \ \big| \ e^{\top}x \geq p \|x\| \right\} \; \subseteq \; \cone\left(H^\top\right). 
\] 
We will discuss in depth this property in Section~\ref{sec:pssc}, but the geometric intuition is that 
a row stochastic matrix $H \in \f R^{n \times r}_+$ is $p$-SSC whenever $\conv(H^\top)$ contains $\mac Q_p\cap \Delta^r$, where $\mac Q_p$ is an enlarged version  of the hyper-sphere $\mac Q_{\sqrt{r-1}}$. We have 
\[
\mac Q_p \cap \Delta^r = \left\{   x\in \Delta^r \ \Big | \ x = \frac er + w, \ \| w\|^2\le \frac{1}{p^2}-\frac 1r       \right\} \ \cu \ 
\conv(H^\top).
\]
This is illustrated on the right image of Figure~\ref{fig:SSCvsPSSC}.  
It is possible to prove that for any $p< \sqrt {r-1}$, a $p$-SSC matrix is in particular SSC, and any SSC matrix  $H$ is a limit of $p$-SSC matrices for $p\to \sqrt {r-1}$; see Section~\ref{sec:pssc} for more details.  
Note that the notion of $p$-SSC is equivalent to the notion of \emph{uniform pixel purity level} introduced in~\cite{lin2015identifiability}; see Section~\ref{subs:uniformpurlev}.

\paragraph{Summary of our main contributions}

%At the same time, 
Our main results show that if $X = W^\#(H^\#)^\top + N^\#$ admits a decomposition as in \eqref{eq:assumption1} where $H^\#$ is $p$-SSC for $p< \sqrt {r-1}$, then the solution of min-vol NMF~\eqref{eq:apprminvol}  \textit{robustly} identifies $(W^\#,H^\#)$, up to some error depending on 
the perturbation level $\ve$, 
the value of $p$, 
and the conditioning of $W^\#$. %(that is, $\|W^\#\|/\sigma_r(W^\#)$). 

In order to formally write our main results (Theorems~\ref{th:main} and~\ref{th:main2} below), 
let us define our assumptions rigorously.  
\begin{assumption}
   \label{ass:perturbed_pSSC} The matrix $X\in \f R^{m\times n}$ admits the decomposition   
   \[
   X = W^\#(H^\#)^\top + N^\#,
   \]
   where the involved matrices satisfy the following:
   \begin{itemize}
       \item $H^\#\in \f R^{n\times r}$ is row stochastic and $p$-SSC with $r\ge 2$,
       \item   $1\le p<\sqrt {r-1}$ for $r > 2$, and $p=1$ for $r=2$,
       \item the rank of $W^\#\in \f R^{m\times r}$ is $r\ge 2$, that is, the $r$th singular values of $W^\#$ is positive,  $\sigma_r(W^\#) > 0$, 
       \item $N^\#\in \f R^{m\times n}$ and $\|N^\#\|_{1,2}\le \ve$ for a constant $\ve >0$.
   \end{itemize}
 
\noindent We denote $(W^*,H^*)$ an optimal solution of the following min-vol NMF problem
\[
\min_{W \in \mathbb{R}^{m \times r}, H \in \mathbb{R}^{n \times r}} \  \det(W^\top W)  \quad \text{such that} 
    \quad \|X-WH^\top\|_{1,2}\le \ve, \  He = e, 
    \text{ and } H\ge 0,  
\]
and let $N^* := X-W^*H^{*\top}$ and  $q := \sqrt{r-p^2}$.
\end{assumption}

Note that the case $r=1$ is trivial, since every column of $X$ is equal to the unique column of $W^\#$, up to the noise level. 
We can now state our main results as follows.

\begin{theorem}
  \label{th:main}  Under Assumption~\ref{ass:perturbed_pSSC}, there exist absolute positive constants $C_\ve,C_e>0$   such that if the  level of perturbation $\ve$ satisfies
    \[
    \ve  \; \le \;  C_\ve
\lp \min\{q,\sqrt 2\} -1\rp^2
\frac {\sigma_r(W^\#)}{r^{9/2}}\frac {q^2}{p^2}, 
    \]
    then    
    \begin{align*}    
 \min_{\Pi\in \mac P_r} \|W^\#-W^*\Pi\|_{1,2} \; \le  \;    C_e \ \|W^\#\|     \sqrt {
     \frac { \ve}{\min\{q^2-1,1\}}
   \frac{ r^{7/2}}{\sigma_r(W^\#)}\frac {p^2}{q^2}, 
   }
\end{align*}
where $\|W^\#\|$ is the matrix $\ell_2$-norm of $W^\#$, and 
$\mac P_r$ is the set of $r\times r$ permutation matrices.
\end{theorem}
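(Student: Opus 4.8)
The plan is to prove that any feasible, volume-optimal $W^*$ must lie close to $W^\#$ by combining three ingredients: the feasibility of the ground truth (which caps the optimal volume), the geometric rigidity supplied by the $p$-SSC condition, and a second-order analysis of the volume functional that produces the characteristic $\sqrt{\ve}$ loss.

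First I would reduce to the essentially square case. Since $\rank(W^\#)=r$, I project the whole problem onto the $r$-dimensional column space of $W^\#$ (equivalently, factor $W^\#$ through its thin SVD), tracking how the constraint $\|X-WH^\top\|_{1,2}\le\ve$ and the perturbation $N^\#$ transform; any out-of-subspace component of $W^*$ only increases $\det(W^{*\top}W^*)$, so for the minimizer it can be discarded up to an error governed by $\sigma_r(W^\#)$. In these coordinates $W^\#$ is invertible, so I write $W^* = W^\# A$ for a unique $A\in\f R^{r\times r}$, and the target becomes a bound on $\|I-A\Pi\|$ for a suitable permutation, since $\|W^\#-W^*\Pi\|_{1,2}\le \|W^\#\|\,\|I-A\Pi\|$.

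Next I would extract the two quantitative consequences of feasibility. On the one hand, $(W^\#,H^\#)$ is itself feasible (its residual is exactly $N^\#$), so $\det(W^{*\top}W^*)\le\det(W^{\#\top}W^\#)$, which in the reduced coordinates reads $|\det A|\le 1$. On the other hand, each column satisfies $\|x_i-W^*h_i^*\|\le\ve$ with $h_i^*\in\Delta^r$, so every noiseless point $W^\# H^\#(i,:)^\top$ lies within $2\ve$ of $\conv(W^*)$. Invoking the $p$-SSC hypothesis — that $\conv((H^\#)^\top)$ contains the enlarged inscribed region $\mathcal Q_p\cap\Delta^r$ — and applying $(W^\#)^{-1}$, this transfers into the statement that $\mathcal Q_p\cap\Delta^r$ is contained in $\conv(A)$ up to an error of order $\ve/\sigma_r(W^\#)$. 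The role of $p<\sqrt{r-1}$ (equivalently $q>1$) is that this inscribed region is strictly larger than the inscribed sphere of $\Delta^r$, furnishing a genuine margin measured by $q^2-1$.

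The core of the argument is then a stability lemma for the configuration ``$\mathcal Q_p\cap\Delta^r\subseteq\conv(A)$ and $|\det A|\le 1$.'' In the exact-containment case the only such $A$ are permutation matrices (the identifiability result), and I would upgrade this to a quantitative modulus of continuity: a deviation $\delta=\|A-\Pi\|$ from the nearest permutation must either violate containment of the enlarged ball by an amount proportional to $\delta$ times the margin, or inflate the volume by an amount of order $\delta^2$. Since the slack available on both sides is $O(\ve)$, matching them gives $\delta^2\lesssim \ve\cdot(\text{conditioning factors})/(q^2-1)$, i.e.\ the square-root loss; the factors $\min\{q^2-1,1\}$ and $(\min\{q,\sqrt 2\}-1)^2$ arise from separating the regime $q$ near $1$ from $q$ bounded away from $1$, and tracking the powers of $r$ through the normalization and the facet-distance estimates yields the exponents $r^{7/2}$ and $r^{9/2}$. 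I expect this rigidity lemma to be the main obstacle: because the volume functional is first-order flat at the optimum, one cannot linearize and must argue at second order, showing that moving any vertex of $\conv(A)$ inward to respect $|\det A|\le 1$ necessarily uncovers part of the enlarged inscribed ball unless the move is $O(\sqrt\ve)$, and controlling this uniformly over all $r$ vertices and all permutations. Converting these per-facet violations into a bound on $\|A-\Pi\|$ with the correct dependence on $p,q,r$ is where the delicate estimates concentrate; propagating the result back through $\|W^\#-W^*\Pi\|_{1,2}\le\|W^\#\|\,\|I-A\Pi\|$ is then routine.
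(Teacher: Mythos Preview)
Your high-level structure is right --- volume minimality gives one constraint, $p$-SSC a containment constraint, and the tension forces the factor to be a near-permutation --- but the concrete mechanism you sketch for the key rigidity step is not the one the paper uses and, as written, is too vague to carry the argument.

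The paper proceeds in the opposite direction and in the dual. Rather than writing $W^*=W^\#A$ and working with the primal containment $\mathcal Q_p\cap\Delta^r\subseteq\conv(A)$, it constructs an explicit $R$ with $W^\# = W^*R + M$, namely $R=(H^*)^\top V H_p^{-1}$, where $V$ realizes the $p$-SSC reference vertices $H_p$ as convex combinations of rows of $H^\#$ and $\|M\|_{1,2}=O(r\ve)$. Volume minimality then gives $\det(R)^2\ge 1-O(\ve)$, while nonnegativity of $H^*$ yields $R(H^\#)^\top\ge -O(\ve)$; via the $p$-SSC this places every row $\tilde r_i$ of $R$ approximately in the \emph{dual} cone $\mathcal C_p^*$, in particular $\|\tilde r_i\|\le e^\top\tilde r_i+O(\ve)$. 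The $\sqrt\ve$ loss is not obtained by a second-order Taylor expansion of the volume but by a Hadamard/AM--GM argument: from $\det(R)^2\ge 1-O(\ve)$, the row-sum bound, and $\sum_i e^\top\tilde r_i=r$, one deduces that every $\|\tilde r_i\|$ and every $e^\top\tilde r_i$ is within $O(\sqrt\ve)$ of $1$. Finally, an explicit geometric decomposition of the set $\mathcal P\setminus\mathcal B$ (the slice of $\mathcal C_p^*$ at level $\beta\approx 1$, minus the ball of radius $1-O(\sqrt\ve)$) into disjoint components near the $e_j$ --- which is exactly where the threshold $\min\{q,\sqrt 2\}-1$ and the factor $(\min\{q^2-1,1\})^{-1}$ appear --- pins each $\tilde r_i$ to a distinct unit vector.

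Your proposed alternative, ``a deviation $\delta$ either uncovers the inscribed region by $\sim\delta\cdot(\text{margin})$ or inflates volume by $\sim\delta^2$,'' is an appealing heuristic, but you do not say how you would rule out motions that preserve both constraints to leading order (approximate rotations or shears of $\conv(A)$ that keep the enlarged ball inside while $|\det A|\le 1$), nor how you would make this uniform over all $r$ vertices simultaneously to recover the specific $r^{7/2}$, $r^{9/2}$ exponents. That is precisely the hard part, and your sketch does not yet contain a mechanism for it; the paper's dual-cone/AM--GM route is one concrete way through, and it is different in kind from a primal second-order volume analysis.
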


For $p^2$ approaching $r-1$, that is, when we approach the classical SSC, the parameter $q^2$  tends to $1$, and therefore the allowed level of perturbation $\ve$ goes to zero because of the term $\left(\min\{q,\sqrt 2\} -1\right)$, meaning that any small  perturbation might totally modify the solution of min-vol NMF~\eqref{eq:apprminvol}. Moreover all the bounds get better as $p$ gets smaller, that is, as the $p$-SSC gets stronger.

 The case $p=1$ is the best and strongest assumption that we can impose on the ground truth solution, and in the literature this is called the \textit{separability} condition~\cite{donoho2004does, AGKM11}. 
 In geometrical terms,  a row stochastic matrix $H \in \f R^{n \times r}_+$ is separable (or $1$-SSC) whenever $\conv(H^\top)=\Delta^r$, meaning that $\conv(X) = \conv(W)$, that is, the columns of $W$ are samples from the columns of $X$. This is the so-called pure-pixel assumption in hyperspectral imaging~\cite{bioucas2012hyperspectral}, 
 and the anchor-word assumption in topic modeling~\cite{AGKM11}. 
% We will prove the above claim in the following sections, but first let us state a further result that 
In this case,  when $H^\#$ is $p$-SSC with $p$ close enough to $1$, the error dependence on the perturbation improves from $\sqrt\ve$ to $\ve$, as shown in our second main theorem. 
\begin{theorem}
   \label{th:main2} Under Assumption~\ref{ass:perturbed_pSSC}, there exist absolute positive constants $C_\ve,C_p, C_e>0$   such that
   if the level of perturbation $\ve$ and the parameter $p$ satisfy
    \[
    \ve \le C_\ve
    \frac{\sigma_r(W^\#)   }{r\sqrt r}, 
    \quad \text{ and } \quad p\le 1 + C_p
    \frac 1r,
    \]
    then 
    \[   \min_{\Pi\in \mac P_r} \|W^\#-W^*\Pi\|_{1,2}  \; \le \;  C_e  \|W^*\| \left(     \frac{r\sqrt r}{\sigma_r(W^\#)}   \ve  + r(p-1)\right), 
\] 
%\ngc{Shouldn't it be $\|W^\#\|$ in the last term? This would make more sense, like in Theorem 1.}
where $\mac P_r$ is the set of $r\times r$ permutation matrices.
\end{theorem}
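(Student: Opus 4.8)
The plan is to exploit the rigidity of the near-separable regime: when $p$ is close to $1$, the $p$-SSC condition forces the rows of $H^\#$ to cluster near the vertices of $\Delta^r$, so that genuine data columns sit close to the columns of $W^\#$. This lets me bypass the curvature (second-order) argument responsible for the $\sqrt\ve$ rate in Theorem~\ref{th:main} and instead pin the vertices down \emph{linearly} in the perturbation. Throughout I write $w_i^\# := W^\# e_i$ for the $i$th column of $W^\#$.

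\textbf{Step 1 (near-anchors).} I would first quantify how close $\conv((H^\#)^\top)$ must come to each vertex $e_i$ of $\Delta^r$. Since $p$-SSC gives $\mac Q_p\cap\Delta^r\cu\conv((H^\#)^\top)$ and the inscribed radius $\sqrt{1/p^2-1/r}$ approaches the circumradius $\sqrt{1-1/r}$ as $p\to 1$, a supporting-hyperplane argument shows that for each $i$ some row $H^\#(j_i,:)$ lies in the spherical cap $\{x\in\Delta^r : x_i\ge 1-\delta\}$ with $\delta=O(p-1)$, hence $\|H^\#(j_i,:)^\top-e_i\|=O(p-1)$. Pushing this through $W^\#$ together with $\|N^\#\|_{1,2}\le\ve$ produces data columns with $\|x_{j_i}-w_i^\#\|\le\ve+O(\|W^\#\|(p-1))=:\eta$.

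\textbf{Step 2 (two matrices that nearly agree).} Because $(W^*,H^*)$ is feasible, $x_{j_i}=W^*H^*(j_i,:)^\top+N^*_{j_i}$ with $\|N^*\|_{1,2}\le\ve$, so each $w_i^\#$ equals $W^*$ times a point of $\Delta^r$ up to error $O(\eta)$. Collecting the $r$ relations gives $W^\#=W^*A+E$ with $A$ column-stochastic and $\|E\|_{1,2}=O(\eta)$. On the other hand $W^\#$ is itself feasible (take $H=H^\#$, so $X-W^\#(H^\#)^\top=N^\#$ has $\|\cdot\|_{1,2}\le\ve$), whence optimality yields $\vol(W^*)\le\vol(W^\#)$. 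Combining this with $\det\!\big((W^*A)^\top W^*A\big)=(\det A)^2\vol(W^*)$ forces $(\det A)^2\ge\big(\vol(W^\#)/\vol(W^*)\big)(1-O(\eta))\ge 1-O(\eta')$, while column-stochasticity gives $|\det A|\le 1$.

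\textbf{Step 3 and the main obstacle.} The crux is a quantitative stability statement: a column-stochastic $A$ with $|\det A|\ge 1-\delta$ lies within $O(\delta)$ of a permutation $\Pi\in\mac P_r$, \emph{linearly} in $\delta$. For $r=2$ this is transparent since $\det A=a-b$ with $a,b\in[0,1]$; the general case needs a {\L}ojasiewicz-type bound for $|\det|$ on the compact set of column-stochastic matrices, whose maximizers are exactly the permutations. Granting this, I recover $\|A-\Pi\|=O(\eta')$, and then
\[
\min_{\Pi\in\mac P_r}\|W^\#-W^*\Pi\|_{1,2}\le\|E\|_{1,2}+\|W^*\|\,\|A-\Pi\|=O(\eta)+\|W^*\|\,O(\eta').
\]
Tracking the conversions between the spectral and $\|\cdot\|_{1,2}$ norms and between $\eta$ and $\eta'$ (the latter involving $\vol(W^*)^{-1}$, hence $\sigma_r(W^\#)$ and dimension factors) yields the announced bound of order $\tfrac{r\sqrt r}{\sigma_r(W^\#)}\ve+r(p-1)$. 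The hypotheses $\ve\le C_\ve\,\sigma_r(W^\#)/(r\sqrt r)$ and $p\le 1+C_p/r$ are exactly what keeps $W^*$ nondegenerate and makes the linearization valid; establishing the linear determinant-to-permutation bound with explicit constants is the step I expect to be hardest.
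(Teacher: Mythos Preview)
Your approach is sound in outline and genuinely different from the paper's, but there is a quantitative gap: as written it does not deliver the stated $(p-1)$-dependence.

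First, the step you flag as hardest is in fact the easiest. For a \emph{nonnegative} column-stochastic matrix $A$, Hadamard gives $|\det A|\le\prod_i\|a_i\|_2$, and since $\|a_i\|_2^2\le\|a_i\|_\infty\|a_i\|_1=\|a_i\|_\infty\le 1$, the hypothesis $|\det A|\ge 1-\delta$ forces $\|a_i\|_\infty\ge 1-2\delta$ for every $i$; stochasticity then gives $\|a_i-e_{k_i}\|_1\le 4\delta$, and a pigeonhole on $|\det A|$ rules out collisions among the $k_i$. The linear rate is immediate---no {\L}ojasiewicz machinery needed.

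The real issue lies in Step~1. Your near-anchor argument correctly produces rows $H^\#(j_i,:)$ within $O(p-1)$ of $e_i$, hence data columns $x_{j_i}$ within $\ve+O(\|W^\#\|(p-1))$ of $w_i^\#$. But this means your error matrix $E$ in $W^\#=W^*A+E$ satisfies $\|E\|_{1,2}=O(\ve+\|W^\#\|(p-1))$, and after the volume-perturbation step (the analogue of Lemma~\ref{lem:lower_bound_detR}) you obtain
\[
(\det A)^2\ge 1-O\!\left(\frac{r^{3/2}}{\sigma_r(W^\#)}\bigl(\ve+\|W^\#\|(p-1)\bigr)\right).
\]
Pushing this through Step~3 and multiplying by $\|W^*\|$ gives a $(p-1)$ coefficient of order $\|W^*\|\,\kappa(W^\#)\,r^{3/2}$, not the claimed $\|W^*\|\,r$. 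Your route therefore proves a weaker theorem, with an extra $\sqrt{r}\,\kappa(W^\#)$ factor on the near-separability term.

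The paper avoids this by \emph{not} selecting individual anchor rows. Instead it uses that the deterministic points $v_i$ of $H_p$ lie in $\conv((H^\#)^\top)$, so $H_p=(H^\#)^\top V$ for some column-stochastic $V$; then $W^\#=W^*R+M$ with $R=(H^*)^\top V H_p^{-1}$ and $\|M\|_{1,2}\le 2\ve\|H_p^{-1}\|_1=O(r\ve)$---no $\|W^\#\|(p-1)$ contamination. The price is that $R$ is no longer nonnegative; Lemma~\ref{lem:lower_bound_R_sepcase} shows its entries are bounded below by $-\beta_p$ with $\beta_p=O(p-1)$, and the Hadamard/stochasticity argument is run with this one-sided slack, producing $\|R-\Pi\|_{1,2}=O\bigl(r^{3/2}\ve/\sigma_r(W^\#)+r\beta_p\bigr)$. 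This is where the clean $r(p-1)$ term, decoupled from the conditioning of $W^\#$, comes from.
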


We will compare these bounds with the error bounds of separable NMF algorithms specifically designed for the case $p=1$; see Section~\ref{sec:sepNMFcompa}.

\paragraph{Recovery of $H^\#$} We focus in this paper on the identifiability of $W^\#$, as most previous works. The reason is that once $W^\#$ is identified, $H^\#$ can be recovered by solving a linearly constrained least squares since $W^\#$ is full rank. 
In our case, since $W^\#$ and $W^*$ are close to each other, and  
$W^* (H^*)^\top + N^* = W^\# (H^\#)^\top + N^\#$, we have  
% \[
% (H^*)^\top = (W^*)^\dagger \left( W^\# (H^\#)^\top \right) + (W^*)^\dagger \left(N^\# - N^* \right), 
% \] 
% and 
% \[
% (H^* - H^\#)^\top
% = 
% (W^*)^\dagger \left( W^\# - W^{*}\right) (H^\#)^\top  + (W^*)^\dagger \left(N^\# - N^* \right), 
% \] 
% so 
% \[
% \left\|(H^* - H^\#)^\top \right\|_{1,2}  
% \; \leq \; \frac{1}{\sigma_r(W^\#)} \left(\|W^\# - W^{*}\|_{1,2} + 2\ve \right),  
% \]  
\[
(H^\#)^\top = (W^\#)^\dagger \left( W^* (H^*)^\top \right) + (W^\#)^\dagger \left(N^\# - N^* \right), 
\] 
where $(W^\#)^\dagger \in \mathbb{R}^{r \times m}$ denotes the pseudoinverse of $W^\#$, hence  
\[
(H^\# - H^*)^\top
= 
(W^\#)^\dagger \left( W^{*} - W^\# \right) (H^*)^\top  
+ (W^\#)^\dagger \left(N^\# - N^* \right), 
\] 
so 
\[
\left\|(H^* - H^\#)^\top \right\|_{1,2}  
\; \leq \; \frac{1}{\sigma_r(W^\#)} \left(\|W^\# - W^{*}\|_{1,2} + 2\ve \right),  
\]  
using the facts that $\|H^*\|_1 = 1$,  
$\|(W^*)^\dagger\| = \frac{1}{\sigma_r(W^\#)}$, 
and  
the matrix norm inequalities from Lemma~\ref{lem:norm_inequalities}, namely $\|ABC\|_{1,2} \le \|A\|\|B\|_{1,2}\|C\|_1$ for any matrices $(A,B,C)$ of appropriate dimensions.

\paragraph{Outline of the paper} 

In Section~\ref{sec:pssc}, we define the $p$-SSC, 
discuss its geometric interpretation, 
show that it trivially implies identifiability of min-vol NMF in the noisless case, 
make the connection between the SSC and separability, 
and 
provide an important necessary condition.  
In Section~\ref{sec:robustnearsep}, we provide a sketch of the proof of Theorem~\ref{th:main2}, and, in Section~\ref{sec:generalrobust}, a sketch of the proof of Theorem~\ref{th:main}. Our goal in these two sections is to provide the high-level ideas of the proofs to make the paper more pleasant to read.  
Most of the technical details of the proofs are postponed to the Appendix.

\paragraph{Notation} 

Given a vector $x \in \mathbb{R}^m$, we denote $\|x\|$ its $\ell_2$ norm. 
Given a matrix 
$X \in \mathbb{R}^{m \times n}$, we denote $X^\top $ its transpose, 
its $i$th column by $x_i$, 
its $i$th row by $\tilde x_i$, 
%\ngc{to do --there are MANY places where to make this modification :-/} 
its entry at position $(k,i)$ by $x_{k,i}$, 
%$X(:,j)$ its $j$th column, 
%$X(i,:)$ its $i$th row, 
%$X(i,j)$ its entry as position $(i,j)$, 
$X(:,\mathcal{K})$ the submatrix of $X$ whose columns are indexed by $\mathcal K$, $X(\mathcal{K},:)$ similarly for the rows,  
$\|X\| = \sigma_{\max}(X)$ its $\ell_2$ norm which is equal to its largest singular value, 
$\|X\|_F^2 = \sum_{i,j} X(i,j)^2$ its squared Frobenius norm, 
$\sigma_r(X)$ its $r$th singular value,  
 $\rank(X)$ its rank. For $m=n$, we denote $\det(X)$ its determinant. 
 
We denote $e_k$ the $k$th unit vector, $I$ the identity matrix, $e$ the vector of all ones, and $E$ the matrix of all ones, 
all of appropriate dimension depending on the context.   
The set $\mathbb{R}^{m \times n}_+$ denotes the $m$-by-$n$ component-wise nonnegative matrices. A matrix $H  \in \mathbb{R}^{n \times r}$ has stochastic rows if $H \geq 0$ and $H  e = e$. 

Given $W \in \mathbb{R}^{m \times r}$, the convex hull generated by the columns of $W$ is denoted $\conv(W) = \{ Wh \ | \ e^\top  h = 1, h \geq 0\}$, the cones it generates by $\cone(W) = \{ Wh \ | \ h \geq 0 \}$, and its volume as $\vol(W) = \det(W^\top W)$; this is a slight abuse of language since $\sqrt{\vol(W)}/r!$ is the volume of the polytope whose vertices are the columns of $W$ and the origin, within the subspace spanned by the columns of $W$, given that $\rank(W) = r$. The pseudoinverse of $W$ is denoted $W^\dagger \in \mathbb{R}^{r \times m}$.

Given an integer $r$, we denote the set of integers from 1 to $r$ as $[r] = \{1,2,\dots,r\}$.  
Given disjoint sets $\mac A_1,\dots, \mac A_t$, their disjoint union is denoted as $\sqcup_i \mac A_i$.

\section{Expanded SSC: definition and properties} 
\label{sec:pssc}

In this section, we first define the expanded SSC and discuss its geometric interpretation including in the dual space  (Section~\ref{sec:pssc_def}). We then link it with the separability condition and the SSC (Section~\ref{subs:separability_SSC}), show how it implies identifiability of min-vol NMF in the noiseless case (Section~\ref{sec:psscminvol}), and finally a necessary condition for the expanded SSC to be satisfied (Section~\ref{subs:suffnec}).

\subsection{Definition and geometry} \label{sec:pssc_def}

Let us formally define the expanded SSC, which was introduced in~\cite{saha2025identifiability} in the context of the identifiability of nonnegative Tucker decompositions in order to show that the Kronecker product of two $p$-SSC matrices is SSC.    
\begin{definition}\label{def:SSCexpanded}[Expanded SSC ($p$-SSC)] 
Let $H \in \R_+^{n \times r}$, $r\ge 2$ and $p \geq 1$. The matrix $H$ satisfies the $p$-SSC if  $$
 \mathcal{C}_p := \left\{x \in \R_+^r \ \big| \ e^{\top}x \geq p \|x\| \right\} \; \subseteq \; \cone\left(H^\top\right). 
 $$
\end{definition} 
In order to explain the geometric intuition behind the definition, we first need a nice way to visualize the cone $\mac C_p$. First of all, $\mac C_p$ is the intersection of an ice-cream cone $\mac S_p$ with the positive orthant $\f R_+^r$, where 
\[
 \mac S_p := \left\{x \in \R^r \ \big| \ e^{\top}x \geq p \|x\| \right\}, \qquad \mac C_p = \mac S_p\cap \f R^r_+.
\]
Noteworthy examples are the cases $p = 1$ and $p = \sqrt{r-1}$: 
\begin{itemize}
    \item for $p = 1$, $\mac S_1$ is the smallest ice cream cone with central axis along the vector $e$ and containing $\f R_+^r$, meaning that $\mac C_1 = \f R^r_+$;
    
    \item  for $p=\sqrt{r-1}$, $\mac S_{\sqrt{r-1}} = \mac C_{\sqrt{r-1}}$ is the largest ice cream cone with central axis along the vector $e$ and contained in $\f R_+^r$. 
\end{itemize}

Notice that any nonzero $x\in \mac S_p$ satisfies $e^\top x\ge p\|x\| > 0$, meaning that $x$ is a positive multiple of a vector $y$ such that $e^\top y  = 1$. Since the same holds for $\mac C_p$, and both are cones, it follows that $\mac S_p = \cone(\mac S_p\cap \mac E)$ and $\mac C_p = \cone(\mac C_p\cap \mac E)$, where $\mac E$ is the affine subspace $\mac E:=\{x\ |\ e^\top x = 1\}$.

% Since by definition $\mac C_p\cu \R_+^r$, and $\R_+^r$ is the disjoint union of the nonnegative multiples of the unit simplex $\R_+^r = \sqcup_{\lambda \ge 0}\, \lambda \Delta^r$, then $\mac C_p$ must be the disjoint union of the nonnegative multiples of $\mac C_p \cap \Delta^r$. As a consequence, the shape of $\mac C_p$ is determined by the shape of $\mac C_p \cap \Delta^r$. 

By restricting to the space $\mac E$, we can show that $\mac Q_p := \mac S_p \cap \mac E$ is an hyper-sphere relative to the space $\mac E$ with center in the vector $e/r$. Moreover, $\Delta^r = \f R_+^r\cap \mac E$, so $\mac C_p \cap \mac E= \mac S_p\cap \f R^r_+ \cap \mac E = \mac Q_p \cap \Delta^r$, and 
\[
\mac C_p \cu \cone(H^\top) 
\iff  \cone(\mac C_p\cap \mac E)\cu \cone(H^\top)
\iff \mac C_p\cap \mac E \cu \cone(H^\top)
\iff \mac Q_p\cap \Delta^r \cu \cone(H^\top), 
\]
showing that it is possible to test the $p$-SSC of a nonnegative matrix $H$ by looking at what happens on $\mac E$, and, in particular, at the relation between $\mac Q_p$ and $\cone(H^\top)$.

For any nonnegative $H$, if we renormalize the nonzero rows to have unit sum and call the resulting matrix $\wt H$, then  $\cone(H^\top) = \cone(\wt H^\top)$.  The cone $\cone(\wt H^\top)$ is now the conic hull of $\cone(\wt H^\top) \cap\mac E = \conv(\wt H^\top)$, so the above relation is equivalent to  $\mac Q_p \cap \Delta^r \cu \conv(\wt H^\top)$.
In other words, up to a renormalization, we can always rewrite the $p$-SSC as a containment condition between two convex sets on $\Delta^r$. 

We can visualize the relations between the various sets involved in Figure~\ref{fig:SSCvsPSSC}, and we collect some of their properties in the following result. The proof is postponed to Appendix~\ref{app:proof of Lemma on Q_p}.

\begin{lemma}
\label{lem:p-hypersphere}
Define the hyper-sphere $\mac Q_p$ with center $e/r$   and contained in the affine subspace $\mac E=\{x\ |\ e^\top x = 1\}$ as \[
 \mac Q_p := \left\{   x\in \mac E \ \Big | \ x = \frac er + w, \ \| w\|^2\le \frac{1}{p^2}-\frac 1r       \right\}.
 \]
For every $p\ge 1$, it holds that $\mac S_p \cap \mac E =\mac Q_p$, and thus  $\mac C_p \cap \mac E = \mac Q_p \cap \Delta^r$. 
As a consequence,
\begin{itemize}
    \item a row stochastic matrix $H \in \R_+^{n \times r}$ is $p$-SSC if and only if $\mac Q_p\cap \Delta^r \cu \conv(H^\top)$.
    \item a nonnegative matrix $H \in \R_+^{n \times r}$ is $p$-SSC if and only if $\mac Q_p\cap \Delta^r \cu \cone(H^\top)$.
\end{itemize}

\noindent 
The set $\mac Q_p$ shrinks as $p$ gets larger, and makes $\mac C_p \cap \Delta^r$ phase between three different behaviors:
\begin{itemize}
   % \item For $0\le p\le 1$, the unit simplex $\Delta^r$ is contained in $\mac Q_p$, so    $\mac C_p \cap \Delta^r = \Delta^r$. 
    \item For  $1\le p< \sqrt {r-1}$, the convex set  $\mac C_p \cap \Delta^r$ has  mixed curvilinear-polyhedral boundary. In particular, 
\[ \partial\mac Q_1 \cap \Delta^r = \{e_1,e_2,\dots e_r \}, \] so $\mac Q_1$ is exactly the hyper-sphere circumscribed to the hyper-tetrahedron $\Delta^r$.
    \item For  $ \sqrt {r-1}\le p< \sqrt r$, the hyper-sphere $\mac Q_p$  is contained in $\Delta^r$, so   $\mac C_p \cap \Delta^r=\mac Q_p$ is a hyper-sphere. 
    In particular,   \[ \mac Q_{\sqrt{r-1}} \cap \partial\Delta^r = \left\{\frac{e - e_i}{r-1} \ \Big |\ i=1,\dots, r \right\}, \]  so  $\mac Q_{\sqrt{r-1}}$ is exactly the hyper-sphere inscribed to the hyper-tetrahedron $\Delta^r$.
    \item For  $ \sqrt {r}<  p$, the hyper-sphere $\mac Q_p$  is empty, so $\mac C_p \cap \Delta^r =\emptyset$.
    In particular, $\mac Q_{\sqrt{r}}$ is a degenerate hyper-sphere consisting only of the point $e/r$. 
\end{itemize}
\end{lemma}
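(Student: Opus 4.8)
The plan is to establish the central identity $\mac S_p \cap \mac E = \mac Q_p$ by a direct computation, and then to read off everything else either from the discussion preceding the statement or from elementary distance comparisons between the sphere $\mac Q_p$ and the simplex $\Delta^r$.

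First I would prove $\mac S_p \cap \mac E = \mac Q_p$. Fix $x \in \mac E$, so $e^\top x = 1$, and observe that the defining inequality $e^\top x \ge p\|x\|$ of $\mac S_p$ collapses to $\|x\| \le 1/p$ on $\mac E$. Writing $x = e/r + w$, the constraint $e^\top x = 1$ together with $e^\top (e/r) = 1$ forces $e^\top w = 0$, i.e. $w \perp e$; Pythagoras then gives $\|x\|^2 = \|e/r\|^2 + \|w\|^2 = 1/r + \|w\|^2$. Hence $\|x\| \le 1/p$ is equivalent to $\|w\|^2 \le 1/p^2 - 1/r$, which is exactly the membership condition for $\mac Q_p$. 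Intersecting with $\f R^r_+$ and using $\Delta^r = \mac E \cap \f R^r_+$ together with $\mac C_p = \mac S_p \cap \f R^r_+$ gives $\mac C_p \cap \mac E = \mac Q_p \cap \Delta^r$. The two consequences then require no new work: the chain of equivalences already displayed before the statement reduces $\mac C_p \subseteq \cone(H^\top)$ to $\mac Q_p \cap \Delta^r \subseteq \cone(H^\top)$, proving the nonnegative case, while the renormalization identity $\cone(H^\top) \cap \mac E = \conv(\wt H^\top)$ (with $\wt H = H$ when $H$ is row stochastic) together with $\mac Q_p \cap \Delta^r \subseteq \mac E$ upgrades the cone containment to a $\conv$ containment, proving the row-stochastic case.

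For the phase analysis I would introduce the radius $\rho_p := \sqrt{1/p^2 - 1/r}$, defined precisely when $p \le \sqrt r$, and compare it to two distinguished radii of $\Delta^r$: the circumradius $\|e_i - e/r\| = \sqrt{(r-1)/r}$ and the inradius $\|(e-e_i)/(r-1) - e/r\| = 1/\sqrt{r(r-1)}$, both obtained by direct coordinate computations. Matching $\rho_p$ against these yields the three thresholds $p = 1$ (circumscribed), $p = \sqrt{r-1}$ (inscribed), and $p = \sqrt r$ (degenerate/empty), which organize the three regimes. The degenerate regime is immediate: $\rho_p^2 < 0$ for $p > \sqrt r$ makes $\mac Q_p$ empty, and $\rho_{\sqrt r} = 0$ leaves only the center $e/r$. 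For $p = 1$, using again $\|x - e/r\|^2 = \|x\|^2 - 1/r$ on $\mac E$ and the elementary bound $\|x\|^2 = \sum_i x_i^2 \le \sum_i x_i = 1$ for $x \in \Delta^r$ with equality iff $x$ is a vertex, I would conclude $\partial \mac Q_1 \cap \Delta^r = \{e_1, \dots, e_r\}$.

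The hard part will be the inscribed regime $\sqrt{r-1} \le p < \sqrt r$, where I must show both that $\mac Q_p$ sits inside $\Delta^r$ and that for $p = \sqrt{r-1}$ it meets $\partial \Delta^r$ exactly at the facet centroids $(e - e_i)/(r-1)$. Both facts come from a sharp Cauchy--Schwarz estimate: for $x = e/r + w \in \mac Q_{\sqrt{r-1}}$ I would split $\|w\|^2 = w_i^2 + \sum_{j\ne i} w_j^2$ and bound $\sum_{j\ne i} w_j^2 \ge (\sum_{j \ne i} w_j)^2/(r-1) = w_i^2/(r-1)$ using $\sum_{j\ne i} w_j = -w_i$; this gives $\|w\|^2 \ge w_i^2\, r/(r-1)$, hence $w_i^2 \le 1/r^2$ and $x_i = 1/r + w_i \ge 0$, proving containment, with equality in Cauchy--Schwarz (all $w_j$, $j\ne i$, equal) pinning each boundary contact to the single point $(e-e_i)/(r-1)$. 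Since $\rho_p$ is nonincreasing in $p$, the concentric inclusion $\mac Q_p \subseteq \mac Q_{\sqrt{r-1}} \subseteq \Delta^r$ then covers the whole range $\sqrt{r-1} \le p \le \sqrt r$, so $\mac C_p \cap \Delta^r = \mac Q_p$ there. Finally, for $1 \le p < \sqrt{r-1}$ the radius $\rho_p$ strictly exceeds the inradius yet stays within the circumradius, so $\mac Q_p$ is neither contained in nor contains $\Delta^r$; the intersection $\mac C_p \cap \Delta^r$ therefore inherits curved pieces from $\partial \mac Q_p$ where the sphere is interior and flat pieces from the facets of $\Delta^r$ where it protrudes, which is the asserted mixed curvilinear--polyhedral boundary.
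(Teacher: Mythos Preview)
Your proposal is correct and follows essentially the same route as the paper: the identity $\mac S_p\cap\mac E=\mac Q_p$ via Pythagoras, the equivalences via the cone/conv reductions already displayed before the lemma, and the three regimes via radius comparison. The only cosmetic difference is in the inscribed case $p=\sqrt{r-1}$: you apply Cauchy--Schwarz to the coordinates of $w$ to get $w_i^2\le 1/r^2$, whereas the paper applies the same QM--AM inequality to $x^+$ together with a support-size count $s_x\ge r-1$; both arguments are equivalent and pin the boundary contacts to $(e-e_i)/(r-1)$ in the same way.
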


The $p$-SSC for $1 \leq p \leq \sqrt{r-1}$ has been introduced in order to bridge between the classical SSC and the separability condition. In fact, we have SSC for any $p<\sqrt{r-1}$ and we have separability when $p=1$. 
In Section~\ref{subs:separability_SSC}, we reintroduce the two concepts and discuss in detail the relations between the different conditions.
Before doing so, we explore the geometric interpretation of the $p$-SSC in the dual space.

\subsubsection{Geometric interpretation in the dual space}
\label{sec:geom_int_dual_pSSC}

Let us recall the notion of dual cone. 
\begin{definition}[Dual Cone]
     For any cone $\mac F$, its dual is defined as
\begin{align}
    \mac F^* 
    & = 
    \left\{ y \ |\ x^{\top}y \geq 0 \text{ for all } x \in \mac F   \right\}.\label{eq:dual_general_cone} 
\end{align}
If $\mac F$ is the cone generated by the columns of a  matrix $A \in  \R^{m \times n}$, then
\[
\mac F^* =  \cone^*(A) 
     = 
    \left\{y \ |\ A^{\top}y \geq 0 \right\}.
\]
\end{definition}
%Recall from \eqref{eq:dual_general_cone} that 
% For any cone $\mac F$ its dual $\mac F^*$ is defined as
% \begin{align*}
%     \mac F^* 
%     & := 
%     \left\{ y \ |\ x^{\top}y \geq 0 \text{ for all } x \in \mac F   \right\}.
% \end{align*} 
Some key properties of duality of closed convex cones are as follows: 
\begin{itemize}
    \item The dual of a closed convex cone is a closed convex cone.
    \item It inverts the containment relations, that is,  $\mac F\cu \mac G \iff \mac G^*\cu \mac F^*$.
    \item The dual of intersection is the sum of the duals, that is, $(\mac F\cap \mac G)^* = \mac F^* + \mac G^*$.
\end{itemize}

It is easy to show that the dual of the cone $\mac S_p$ is the cone $\mac S_q$ where $r = p^2+q^2$. Since $\mac C_p$ in the interval of interest $p^2\in (1,r-1)$ is a convex cone with partly linear and partly curvilinear boundary, its dual will have the same kind of boundary. Since $\mac C_p = \mac S_p \cap \f R^r_+$, by the property of duality,  $\mac C_p^* = \mac S_q + \R^r_+ = \cone(\mac Q_q \cup \{e_1,\dots, e_r\})$. In particular, we can visualize $\mac C_p^*$ on $\mac E$ as the convex hull of $\mac Q_p$ and the vectors $e_1,\dots,e_r$. 

 Figure~\ref{fig:dual_pSSC} shows the shape of $\mac C_p^*,\mac S_p^*$ and $\mac C_p, \mac S_p$ on  $\mac E$ in dimension $r=3$. In the following lemma, we summarize the above discussion about the dual cones and we refer 
 to Appendix~\ref{app:proof_of_lem_CpSp_dual_cones} for the proof.

\begin{figure}
    \centering
    \includegraphics[width=.9\linewidth]{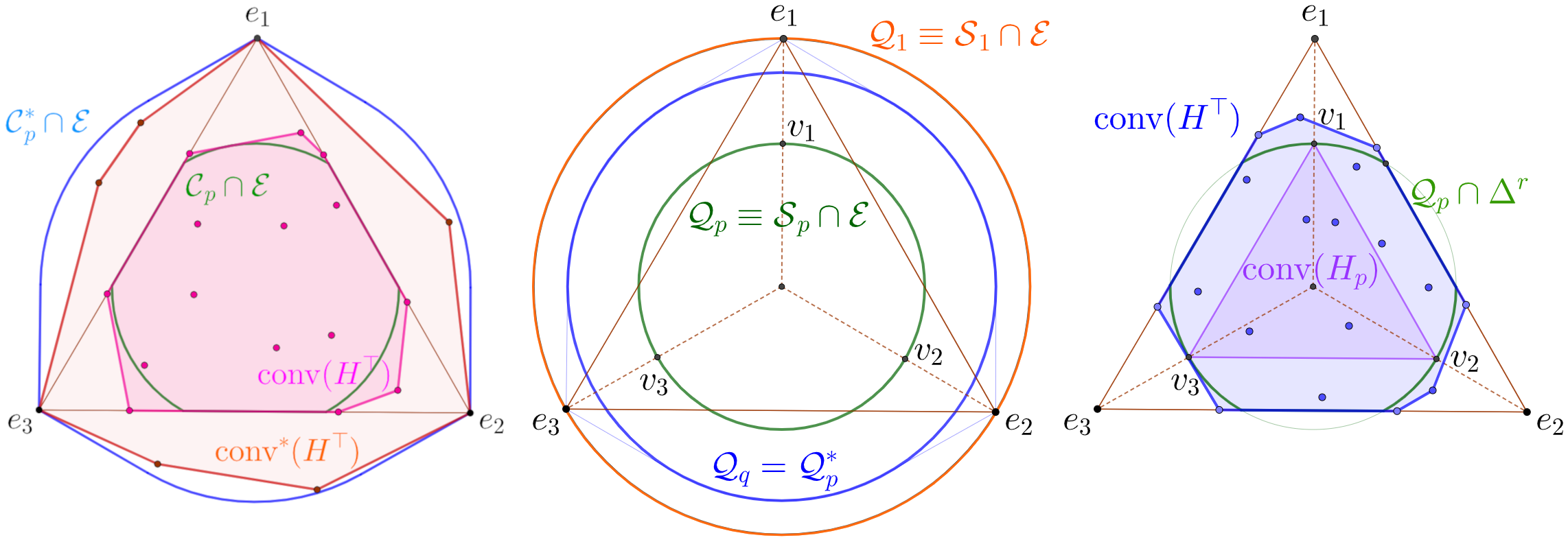}
    \caption{On the left and center, $\mac C_p^*,\mac S_p^*$ and $\mac C_p, \mac S_p$ on  $\mac E$ in dimension $r=3$ for $1<p<\sqrt{r-1}$. 
    On the left, the containments between $\conv(H^\top)$, $\conv^*(H^\top)$, $\mac C_p\cap\mac E$ and $\mac C_p^*\cap \mac E$  for a row stochastic and $p$-SSC matrix $H$. 
    On the right, the points $v_i$, their convex hull $\conv(H_p)$  and $\conv(H^\top)$ for a row stochastic and $p$-SSC $H$. }
    \label{fig:dual_pSSC}
\end{figure}

\begin{lemma}
\label{lem:Cp_Sp_dual_cones_and_inclusions}    
Suppose $r> 2$ and $p\in [1,\sqrt{r-1}]$, with $q = \sqrt{r-p^2}$. Given $\mac Q_p$ from Lemma~\ref{lem:p-hypersphere} and the cones 
\begin{align*}
               \mac S_p &= \left\{x \in \R^r \ \big| \ e^{\top}x \geq p \|x\| \right\} , \qquad 
                \mac C_p = \left\{x \in \R_+^r \ \big| \ e^{\top}x \geq p \|x\| \right\}= \mac S_p\cap \R_+^r,
    \end{align*}
their dual cones according to \eqref{eq:dual_general_cone} are
\begin{align*}
              \mac S_p^* = \mac S_q, \qquad 
               \mac C_p^* = \mac S_q + \f R^r_+ = \cone(\mac Q_q \cup \{e_1,\dots, e_r\})
               , \qquad 
               \mac C_p^*\cap \mac E= \conv(\mac Q_q \cup \{e_1,\dots, e_r\}).
    \end{align*}
\end{lemma}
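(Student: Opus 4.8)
The plan is to treat the three claims in sequence: first derive the dual of the ice-cream cone $\mac S_p$, then obtain $\mac C_p^*$ from the general duality rules already recorded in the excerpt, and finally rewrite everything as conic/convex hulls on the affine slice $\mac E$.

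First I would compute $\mac S_p^* = \mac S_q$ directly. Writing $u = e/\sqrt r$ for the unit vector along the axis and decomposing any $x = (u^\top x)\,u + x_\perp$ with $x_\perp \perp u$, the defining inequality $e^\top x \ge p\|x\|$ becomes $\sqrt r\,(u^\top x) \ge p\|x\|$, i.e.\ $u^\top x \ge (p/\sqrt r)\|x\|$. Thus $\mac S_p$ is exactly the circular (Lorentz) cone of vectors making an angle at most $\theta_p := \arccos(p/\sqrt r)$ with $u$. To find $y \in \mac S_p^*$ I would minimise $y^\top x$ over the boundary rays of $\mac S_p$: decomposing $y = (u^\top y)\,u + y_\perp$ and using Cauchy--Schwarz to take $x_\perp$ antiparallel to $y_\perp$ on the boundary $\sqrt r\,(u^\top x) = p\|x\|$, the condition $y^\top x \ge 0$ for all $x \in \mac S_p$ collapses to $p\,(u^\top y) \ge q\,\|y_\perp\|$, which—after squaring and using $p^2 + q^2 = r$—is precisely $e^\top y \ge q\|y\|$. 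Equivalently, this is the classical fact that the dual of a circular cone of half-angle $\theta_p$ about $u$ is the circular cone of half-angle $\pi/2 - \theta_p$ about $u$, and $\cos(\pi/2 - \theta_p) = \sin\theta_p = q/\sqrt r$ identifies it as $\mac S_q$.

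Next, for $\mac C_p^*$ I would invoke the stated duality rules: since $\mac C_p = \mac S_p \cap \f R_+^r$ and the dual of an intersection is the sum of the duals, $\mac C_p^* = \mac S_p^* + (\f R_+^r)^* = \mac S_q + \f R_+^r$, using $(\f R_+^r)^* = \f R_+^r$. Here I must be slightly careful: the identity $(\mac F \cap \mac G)^* = \mac F^* + \mac G^*$ holds without taking closure only under a constraint qualification, and indeed $e$ lies in the interiors of both $\mac S_p$ and $\f R_+^r$ (since $e^\top e = r > p\sqrt r = p\|e\|$ for $p \le \sqrt{r-1} < \sqrt r$), so the relative interiors meet and no closure is needed. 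To then identify the sum with a conic hull I would use $\mac S_q = \cone(\mac Q_q)$—immediate from Lemma~\ref{lem:p-hypersphere}, as $\mac S_q \cap \mac E = \mac Q_q$ and every nonzero element of $\mac S_q$ has $e^\top x > 0$—together with $\f R_+^r = \cone\{e_1,\dots,e_r\}$ and the elementary rule $\cone(A) + \cone(B) = \cone(A \cup B)$, giving $\mac C_p^* = \cone(\mac Q_q \cup \{e_1,\dots,e_r\})$.

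For the last identity I would intersect with $\mac E$. Since $\mac Q_q \cu \mac E$ and $e^\top e_i = 1$, every generator lies on $\mac E=\{e^\top x = 1\}$; for any generating set $S \cu \mac E$ a nonnegative combination $\sum \lambda_i s_i$ satisfies $e^\top(\sum \lambda_i s_i) = \sum \lambda_i$, so it lands in $\mac E$ exactly when $\sum \lambda_i = 1$, i.e.\ when it is a convex combination. Hence $\cone(\mac Q_q \cup \{e_1,\dots,e_r\}) \cap \mac E = \conv(\mac Q_q \cup \{e_1,\dots,e_r\})$, the claimed formula. The main obstacle I anticipate is not any single step but the careful bookkeeping in the dual computation of $\mac S_p$—getting the orthogonal-decomposition optimisation and the squaring/sign constraints exactly right so that $p^2+q^2=r$ produces $\mac S_q$—and the subsidiary point that the cone sum $\mac S_q + \f R_+^r$ is genuinely closed, best handled by noting that $\mac Q_q \cup \{e_1,\dots,e_r\}$ is compact, lies in $\mac E$, and therefore has closed conic hull, so that no boundary rays are gained or lost when passing between $\mac S_q + \f R_+^r$ and $\cone(\mac Q_q \cup \{e_1,\dots,e_r\})$.
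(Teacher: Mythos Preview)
Your proposal is correct and follows essentially the same route as the paper: both compute $\mac S_p^*$ by choosing, for a given $y$, the worst-case direction in $\mac S_p$ (orthogonal component antiparallel to $y_\perp$) and reduce $y^\top x\ge 0$ to $e^\top y\ge q\|y\|$, then obtain $\mac C_p^*$ via the intersection-to-sum duality rule and slice on $\mac E$. The only cosmetic difference is that you work in the $(u,x_\perp)$ Lorentz-cone coordinates and handle both inclusions at once through the minimisation, whereas the paper parametrises directly on $\mac Q_p=\mac S_p\cap\mac E$ and checks the two inclusions $\mac S_p^*\subseteq\mac S_q$ and $\mac S_q\subseteq\mac S_p^*$ separately; your extra care about constraint qualification and closure of $\mac S_q+\f R_+^r$ is a welcome addition that the paper leaves implicit.
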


Using the properties of duality, we can formulate an equivalent definition for $p$-SSC. 
\begin{corollary}
    \label{cor:def_pSSC_dual}
   A matrix $H \in \R_+^{n \times r}$ satisfies the $p$-SSC if and only if   
   $$
    \cone^*\left(H^\top\right) \; \cu \;  \mac C_p^*, 
 $$
 or,  equivalently, 
 $$
    \cone^*\left(H^\top\right) \cap \mac E \; \cu \;  \conv(\mac Q_q \cup \{e_1,\dots, e_r\}).
 $$
\end{corollary}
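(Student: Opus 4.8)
The plan is to prove the stated chain of equivalences
$$p\text{-SSC} \;\iff\; \cone^*(H^\top)\cu \mac C_p^* \;\iff\; \cone^*(H^\top)\cap\mac E \cu \conv(\mac Q_q\cup\{e_1,\dots,e_r\})$$
by combining the duality dictionary recalled just before the statement with the explicit description of $\mac C_p^*$ furnished by Lemma~\ref{lem:Cp_Sp_dual_cones_and_inclusions}. For the first equivalence, I start from Definition~\ref{def:SSCexpanded}, which reads the $p$-SSC as the inclusion $\mac C_p\cu\cone(H^\top)$. Both sides are \emph{closed convex cones}: $\cone(H^\top)$ is generated by the finitely many rows of $H$, hence polyhedral and closed, while $\mac C_p=\mac S_p\cap\R_+^r$ is an intersection of two closed convex cones. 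The containment-reversing property of duality gives $\mac C_p\cu\cone(H^\top)\Rightarrow\cone^*(H^\top)\cu\mac C_p^*$; for the converse I dualize a second time to obtain $\mac C_p^{**}\cu\cone^{**}(H^\top)$ and invoke biduality ($\mac F^{**}=\mac F$ for closed convex cones) to recover $\mac C_p\cu\cone(H^\top)$. Substituting $\mac C_p^*=\cone(\mac Q_q\cup\{e_1,\dots,e_r\})$ from Lemma~\ref{lem:Cp_Sp_dual_cones_and_inclusions} then rewrites the right-hand cone.

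For the second equivalence I pass to the affine slice $\mac E$, exactly as was done for $\mac C_p$ and $\mac S_p$ in Section~\ref{sec:pssc_def}. By Lemma~\ref{lem:Cp_Sp_dual_cones_and_inclusions} we have $\mac C_p^*\cap\mac E=\conv(\mac Q_q\cup\{e_1,\dots,e_r\})$, and since $\cone(\conv(S))=\cone(S)$, the cone $\mac C_p^*$ is the conic hull of this \emph{bounded} slice; in particular every nonzero $y\in\mac C_p^*$ has $e^\top y>0$ (using $q=\sqrt{r-p^2}\ge 1$), so it is a positive multiple of a point of $\mac E$. The forward direction is then immediate by intersecting $\cone^*(H^\top)\cu\mac C_p^*$ with $\mac E$. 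For the converse, given any nonzero $y\in\cone^*(H^\top)$ with $e^\top y>0$, the normalized point $y/(e^\top y)$ lies in $\cone^*(H^\top)\cap\mac E\cu\mac C_p^*\cap\mac E\cu\mac C_p^*$, and homogeneity of the cone returns $y\in\mac C_p^*$.

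The one point requiring genuine care, and the main obstacle, is that $\cone^*(H^\top)$ may a priori contain a nonzero $y$ with $e^\top y\le 0$, a direction that cannot be normalized onto $\mac E$ and is therefore invisible to the slice inclusion. I would dispose of it by showing any such $y$ forces \emph{both} conditions to fail together. Since $H\ge 0$ one always has $\R_+^r\cu\cone^*(H^\top)$, so $e/r\in\cone^*(H^\top)\cap\mac E$; forming convex combinations of $e/r$ with $y$ inside the convex cone $\cone^*(H^\top)$ and renormalizing onto $\mac E$ yields points whose norm blows up as the $e^\top$-value of the combination tends to $0^+$, so $\cone^*(H^\top)\cap\mac E$ is unbounded and cannot lie inside the bounded set $\conv(\mac Q_q\cup\{e_1,\dots,e_r\})$; simultaneously $y\notin\mac C_p^*$ because nonzero elements of $\mac C_p^*$ have positive $e^\top$-value. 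Hence no such $y$ coexists with either inclusion, and the slice inclusion is equivalent to the cone inclusion. Everything else is a direct application of the duality facts and Lemma~\ref{lem:Cp_Sp_dual_cones_and_inclusions}; the only delicate bookkeeping is the closedness needed for biduality and this boundedness argument.
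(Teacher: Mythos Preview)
Your proposal is correct and follows the same route the paper takes: the paper states the corollary as an immediate consequence of the containment-reversing property of duality (applied to $\mac C_p\cu\cone(H^\top)$) together with the explicit description $\mac C_p^*\cap\mac E=\conv(\mac Q_q\cup\{e_1,\dots,e_r\})$ from Lemma~\ref{lem:Cp_Sp_dual_cones_and_inclusions}, without writing out a proof. Your additional care in handling the passage between the cone inclusion and the slice inclusion on $\mac E$ (the unboundedness argument ruling out directions $y\in\cone^*(H^\top)$ with $e^\top y\le 0$) fills in a detail the paper leaves implicit, but the underlying approach is identical.
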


\subsubsection{An equivalent formulation: uniform pixel purity level} \label{subs:uniformpurlev}

The $p$-SSC condition is equivalent to the so-called \textit{uniform pixel purity level} $\gamma$ defined in \cite{lin2015identifiability}. 
Given a row-stochastic matrix $H$, its uniform pixel purity level  $\gamma$ is  defined as follows: 
\[
\gamma := \sup \Big\{ s\le 1 \ \Big| \    B_s \cap \Delta^r \cu \conv\big(H^\top \big) \Big\} \quad \text{ where } \quad B_s= \{x\in \f R^r \ | \ \|x\|\le s \}.
\]
Notice that, by Lemma~\ref{lem:p-hypersphere}, 
\[
B_s\cap \Delta^r = \left\{  x\in \Delta^r \ | \ \|x\| \le s  \right\}
= \left\{  x\in \Delta^r \ \Big| \ x = \frac er + w, \  \|w\|^2 \le s^2-\frac 1r  \right\}
= \mac Q_{1/s}\cap \Delta^r
= \mac C_{1/s}\cap \Delta^r.
\]
Thus $B_s\cap \Delta^r\cu \conv(H^\top)\iff \mac C_{1/s}\cu \cone(H^\top)$, meaning that a  row-stochastic matrix $H$ satisfies  $p$-SSC if and only if its uniform pixel purity level is at least $\gamma \ge 1/p$.

\subsection{Links with SSC and Separability} \label{subs:separability_SSC}

We now link in more details $p$-SSC with two key conditions in the NMF literature: separability and the SSC.

\subsubsection{Separability}

The notion of separability dates back to the hyperspectral community where it is called the pure-pixel assumption~\cite{boardman1995mapping}. It requires that for each pure material present in the image, there exists a pixel containing only that material. 
The terminology was introduced by Donoho and Stodden~\cite{donoho2004does}, and it was later used by Arora et al.~\cite{AGKM11} to obtain unique and polynomial-time solvable NMF problems; see~\cite[Chapter~7]{Gil20} for a survey on separable NMF. In the context of topic modeling, it was referred to as the anchor-word assumption~\cite{AGHMMSWZ13} and requires that, for each topic, there exists a word that is only used by that topic. 
Let us formally define separability. 
\begin{definition}\label{def:separablematricesfirst}
A matrix $H \in \R^{n \times r}_+$ is called separable if there exists an index set $\mathcal{K} \subseteq [n]$  where $|\mathcal{K}| = r$ such that $H(\mathcal{K},:)$ is a diagonal matrix with positive diagonal elements. 
\end{definition}
Equivalently, a matrix $H$ is \textit{separable} if the convex cone generated by its rows spans the entire nonnegative orthant, that is, $\cone(H^\top) = \mathbb{R}^r_+$. 
 See the right image on Figure~\ref{fig:SSCvsPSSC} for a visualization.
 We say that $X$ admits a \textit{separable NMF} $(W,H)$ of size $r$ if there exists a decomposition of $X$ of the form $X = WH^\top$ of size $r$ such that $H \in \mathbb{R}^{n \times r}$ is a separable matrix. 
 %By restricting to the opportune submatrices, 
 This implies that, up to scaling, $W = X(:,\mathcal{K})$ for some index set $\mathcal K$, that is, the columns of $W$ are a subset of the columns of $X$. 
 In geometrical terms, if $H$ is row stochastic, then $r$ of its rows must be the vectors $e_1,\dots, e_r$, that is, the vertices of the unitary simplex $\Delta^r$. Equivalently, we would have $\conv(H^\top) = \Delta^r$ or $\cone(H^\top) = \f R^r_+$. It is possible to prove that it is also equivalent to say that $H$ is $1$-SSC. 
 \begin{corollary}
     A matrix $H \in \R^{n \times r}_+$ is separable if and only if it is $1$-SSC. \end{corollary}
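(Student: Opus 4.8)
The plan is to reduce both conditions to statements about the cone $\cone(H^\top)$ and to exploit the fact that, at $p=1$, the cone $\mac C_1$ degenerates to the full nonnegative orthant, so that the $1$-SSC inclusion becomes one half of the equality defining separability.

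First I would record the key identity $\mac C_1 = \f R^r_+$, already asserted in the discussion following Definition~\ref{def:SSCexpanded}. The inclusion $\mac C_1 \cu \f R^r_+$ is immediate from the definition, so only the reverse inclusion needs an argument: for any $x\in \f R^r_+$ one has $(e^\top x)^2 = \sum_i x_i^2 + 2\sum_{i<j} x_i x_j \ge \sum_i x_i^2 = \|x\|^2$, since every cross term $x_i x_j$ is nonnegative; taking square roots gives $e^\top x \ge \|x\|$, hence $x\in \mac C_1$. Next I would observe the complementary, automatic inclusion: for any nonnegative $H$, every vector of $\cone(H^\top)$ is a nonnegative combination of the nonnegative rows of $H$, hence nonnegative, so $\cone(H^\top)\cu \f R^r_+$.

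With these two facts in hand the equivalence is a short chain of implications. By Definition~\ref{def:SSCexpanded}, $H$ is $1$-SSC if and only if $\mac C_1 \cu \cone(H^\top)$, which by the first step is the same as $\f R^r_+ \cu \cone(H^\top)$. Combined with the automatic reverse inclusion $\cone(H^\top)\cu \f R^r_+$ from the second step, this is equivalent to the equality $\cone(H^\top) = \f R^r_+$, which is precisely the cone characterization of separability recalled after Definition~\ref{def:separablematricesfirst}. The converse direction is the identical chain read backwards, since separability directly yields $\mac C_1 = \f R^r_+ = \cone(H^\top)$ and hence $1$-SSC.

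I do not expect any genuine obstacle here: the statement is an essentially formal consequence of the degeneration $\mac C_1 = \f R^r_+$ together with the trivial containment $\cone(H^\top)\cu \f R^r_+$ valid for nonnegative $H$. The only point requiring care is the clean justification of $\mac C_1 = \f R^r_+$, i.e. the elementary inequality $e^\top x \ge \|x\|$ on the nonnegative orthant; everything else is the formal manipulation of the two opposite cone inclusions into a single equality.
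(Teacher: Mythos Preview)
Your proposal is correct and follows essentially the same route as the paper: establish $\mac C_1=\f R^r_+$ via the inequality $(e^\top x)^2\ge \|x\|^2$ on $\f R^r_+$, combine with the automatic inclusion $\cone(H^\top)\cu \f R^r_+$ to obtain that $1$-SSC is equivalent to $\cone(H^\top)=\f R^r_+$, and identify this with separability. The only minor difference is that you invoke the cone characterization of separability stated after Definition~\ref{def:separablematricesfirst}, whereas the paper spells out in one line why $\cone(H^\top)=\f R^r_+$ forces $r$ rows of $H$ to be positive multiples of the $e_i$.
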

 \begin{proof}
     For any $x\ge 0$, $(e^\top x)^2 \ge \|x\|^2$. As a consequence, 
     $$  \mathcal{C}_1 = \left\{x \in \R_+^r \ \big| \ e^{\top}x \geq  \|x\| \right\} = \f R^r_+.
     $$    
     It follows that $H$ is $1$-SSC if and only if $\f R ^r_+ = \mac C_1 \cu \cone (H^\top) \cu \f R^r_+$, that is, $\cone(H^\top) = \f R^r_+$. But the conic hull of a set of nonnegative points is $ \f R^r_+$ if and only if $r$ of the points coincide with positive multiples of $e_1,\dots,e_r$, that is, there must exists an index set $\mathcal{K} \subseteq [n]$  where $|\mathcal{K}| = r$ such that $H(\mathcal{K},:)$ is a diagonal matrix with positive diagonal elements.  \end{proof}

\subsubsection{The sufficiently scattered condition (SSC)} 
\label{sec:ssc}

The separability assumption is relatively strong. To relax it, a crucial notion is the \textit{sufficiently scattered condition (SSC)}, which was introduced in~\cite{donoho2004does}; see also~\cite{huang2013non}.   %and later used in~\cite{huang2013non} to provide identifiability results for NMF. 
%In order to define SSC, we first have to introduce the notion of \textit{dual} of a cone.

\begin{definition}[Sufficiently scattered condition (SSC)\footnote{Slight variants of the SSC exist in the literature. Refer to Section 4.2.3.1 in \cite{Gil20} for a more detailed description and the relation between these variants.}]
\label{def:SSCalternate} A matrix $H \in \R_+^{n \times r}$ with $r\ge 2$ satisfies the SSC if the following two conditions hold:  
\begin{enumerate}
    \item SSC1: $\mathcal{S}_{\sqrt{r-1}} \subseteq \cone(H^\top)$.
    
    \item SSC2: $\cone^*(H^\top) \cap \partial \mathcal{S}_1 =  \{\lambda e_k  \ | \  \lambda \geq 0  \text{ and } k \in [r]\}$.
\end{enumerate}
\end{definition}

% Notice that in our notation, $\mac C\equiv \mac C_{\sqrt{r-1}}$ and $\mac C^*\equiv \mac C_1$. 
SSC1 requires that $\cone(H^\top)$ contains the ice-cream cone $\mac S_{\sqrt{r-1}}$ that is tangent to every facet of the nonnegative orthant, or equivalently it requires that $\cone(H^\top)$ contains the hypersphere $\mac Q_{\sqrt {r-1}}$ inscribed to the unit simplex $\Delta^r$. See the left image on Figure~\ref{fig:SSCvsPSSC} for a visualization. 
%; see Figure~\ref{fig:sepSSC} for an illustration. 
%Equivalently, by using the definition of dual cone \eqref{eq:dualcone}, one can rewrite the condition in SSC1 as $    \cone^*(H^\top)\cu \mathcal C^*    $.

SSC2 is typically satisfied if SSC1 is, and allows one to avoid pathological cases; see~\cite[Chapter 4.2.3]{Gil20} for more details. 
Using duality, we prove in 
Appendix~\ref{app:proof_equivalence_SSC2} that it is possible to rewrite SSC2 as follows: 
\[
 \partial \cone (H^\top) \cap \mac S_{\sqrt{r-1}} %= \partial \f R^r_+ \cap \mac C  
 =  \left\{\lambda \frac{e-e_k}{r-1}  \ | \  \lambda \geq 0  \text{ and } k \in [r]\right\}. 
\]
Restricting to the unit simplex $\Delta^r$ and considering a row stochastic matrix $H$, the above formula can be interpreted geometrically as follows: the boundary of $\conv(H^\top)$ must intersect $\mac Q_{\sqrt {r-1}}$  uniquely on the boundary of $\Delta^r$. 
Notice that from Lemma~\ref{lem:p-hypersphere}, we know that  $\Delta^r\cap \mac Q_{\sqrt {r-1}}$  is exactly the set of  $\frac{e-e_i}{r-1}$ for $i=1,\dots,r$.   In particular this means that $\mac Q_{\sqrt {r-1}}$ can be enlarged to $\mac Q_{p}\cap \Delta^r$ for some $p<\sqrt{r-1}$ and it will still be contained in $\conv(H^\top)$. 

Putting together the two conditions of SSC with the definition of $p$-SSC, the following relation holds.

\begin{lemma}[\cite{expandedSSC}] 
    \label{lem:pSSC_SSC}  
      For $r>2$,  a matrix $H \in \R^{n \times r}_+$ is SSC if and only if  $H$ is $p$-SSC for some $p<\sqrt {r-1}$. 
      
      \noindent For $r=2$, SSC, separability and $1$-SSC coincide.      
\end{lemma}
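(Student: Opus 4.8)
The plan is to prove both directions of the equivalence for $r>2$, and then handle the degenerate case $r=2$ separately. For the forward direction, suppose $H$ is SSC. By SSC1, we have $\mac S_{\sqrt{r-1}}\cu \cone(H^\top)$, which on the simplex means $\mac Q_{\sqrt{r-1}}\cap\Delta^r\cu\conv(H^\top)$. The key observation, already flagged in the discussion preceding the lemma, is that the reformulated SSC2 forces the boundary of $\conv(H^\top)$ to meet $\mac Q_{\sqrt{r-1}}$ \emph{only} at the points $\frac{e-e_i}{r-1}$, $i=1,\dots,r$, which lie on $\partial\Delta^r$. Intuitively this means the inscribed hyper-sphere touches $\partial(\conv(H^\top))$ only at these finitely many boundary points and otherwise sits strictly inside $\conv(H^\top)$. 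The goal is to conclude from this that the sphere can be slightly enlarged, i.e., that there exists $p<\sqrt{r-1}$ with $\mac Q_p\cap\Delta^r\cu\conv(H^\top)$, which by Lemma~\ref{lem:p-hypersphere} is exactly $p$-SSC. I would make this rigorous via a compactness argument: the distance from each point of $\mac Q_{\sqrt{r-1}}$ to the complement of $\conv(H^\top)$ is a continuous function, strictly positive except at the finitely many contact points on $\partial\Delta^r$; since enlarging $p^{-1}$ slightly perturbs $\mac Q_p\cap\Delta^r$ in a controlled way that keeps the contact points inside $\Delta^r$, one can choose $p$ close enough to $\sqrt{r-1}$ so that the enlarged region stays within $\conv(H^\top)$.

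For the reverse direction, suppose $H$ is $p$-SSC for some $p<\sqrt{r-1}$. Then $\mac C_p\cu\cone(H^\top)$, and since $p<\sqrt{r-1}$ gives $\mac S_{\sqrt{r-1}}\cu\mac S_p$, we immediately obtain $\mac S_{\sqrt{r-1}}\cap\f R^r_+=\mac C_{\sqrt{r-1}}\cu\mac C_p\cu\cone(H^\top)$; combined with the fact that $\cone(H^\top)\cu\f R^r_+$, this yields $\mac S_{\sqrt{r-1}}\cu\cone(H^\top)$, establishing SSC1. The harder part is recovering SSC2. Here I would dualize via Corollary~\ref{cor:def_pSSC_dual}: $p$-SSC gives $\cone^*(H^\top)\cu\mac C_p^*=\mac S_q+\f R^r_+$ with $q=\sqrt{r-p^2}>1$. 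I then need to show $\cone^*(H^\top)\cap\partial\mac S_1=\{\lambda e_k\}$. Since $q>1$, the curvilinear part $\mac S_q$ of $\mac C_p^*$ lies strictly in the interior of $\mac S_1$ away from the axis $e$, so the only way a ray of $\cone^*(H^\top)$ can reach $\partial\mac S_1$ is through the linear (orthant) part of the boundary of $\mac C_p^*$; a geometric check should pin these down to exactly the $e_k$ directions. The inclusion $\{\lambda e_k\}\cu\cone^*(H^\top)\cap\partial\mac S_1$ follows because $e_k\in\partial\mac S_1$ always, and each $e_k$ must lie in $\cone^*(H^\top)$ since $\cone(H^\top)\cu\f R^r_+$ implies $\f R^r_+{}^*=\f R^r_+\cu\cone^*(H^\top)$.

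I expect the main obstacle to be the forward direction's compactness/perturbation argument, specifically controlling how $\mac Q_p\cap\Delta^r$ varies with $p$ near the boundary $\partial\Delta^r$: as $p$ decreases from $\sqrt{r-1}$, the enlarged sphere $\mac Q_p$ protrudes beyond $\Delta^r$, so $\mac Q_p\cap\Delta^r$ develops the mixed curvilinear-polyhedral boundary described in Lemma~\ref{lem:p-hypersphere}, and one must verify that the new boundary pieces near the old contact points $\frac{e-e_i}{r-1}$ do not escape $\conv(H^\top)$. This requires a careful local analysis at those contact points, since that is precisely where $\mac Q_{\sqrt{r-1}}$ already touches $\partial(\conv(H^\top))$, so there is no slack there and one must exploit that the touching happens on $\partial\Delta^r$ rather than in the relative interior.

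For the degenerate case $r=2$, the simplex $\Delta^2$ is a segment with vertices $e_1,e_2$, and $\conv(H^\top)$ is a subsegment; the condition $p=1$ forces $\mac C_1\cap\mac E=\Delta^2$ (since $\mac C_1=\f R^2_+$), so $1$-SSC means $\conv(H^\top)=\Delta^2$, which is exactly separability, and one checks directly that the SSC1/SSC2 conditions reduce to the same statement, giving the claimed coincidence of all three notions.
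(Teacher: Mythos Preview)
The paper does not prove this lemma; it is imported from the cited reference. So there is no in-paper proof to compare against, and I assess your proposal on its own merits.

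Your reverse direction and the $r=2$ case are correct; the dual computation for SSC2 goes through as you indicate, since for $q>1$ the set $\conv(\mac Q_q\cup\{e_1,\dots,e_r\})\cap\partial\mac Q_1$ reduces to the extreme points $e_k$.

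In the forward direction you correctly isolate the only real obstacle---the behaviour at the contact points $(e-e_i)/(r-1)$---but you do not resolve it. The missing observation is this: since $\mac Q_{\sqrt{r-1}}\subseteq\conv(H^\top)$, any supporting hyperplane of the polytope $\conv(H^\top)$ at $(e-e_i)/(r-1)$ also supports the ball $\mac Q_{\sqrt{r-1}}$ there, and a ball has a \emph{unique} supporting hyperplane at each boundary point, namely $\{x\in\mac E:x_i=0\}$. Uniqueness of the support at a boundary point of a polytope forces that point into the relative interior of a facet, so $\conv(H^\top)$ coincides locally with $\{x\in\mac E:x_i\ge 0\}$ in a neighbourhood of $(e-e_i)/(r-1)$. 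Now the compactness argument you proposed closes cleanly: if $x_n\in(\mac Q_{p_n}\cap\Delta^r)\setminus\conv(H^\top)$ with $p_n\uparrow\sqrt{r-1}$, any accumulation point must lie in $\partial\conv(H^\top)\cap\mac Q_{\sqrt{r-1}}=\{(e-e_k)/(r-1):k\in[r]\}$; but near such a point, leaving $\conv(H^\top)$ forces $(x_n)_i<0$, contradicting $x_n\in\Delta^r$. Without this facet observation, your ``careful local analysis'' remains a promissory note.

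An arguably cleaner route, which avoids the delicate primal geometry altogether, is to dualize the forward direction: SSC says the polytope $\cone^*(H^\top)\cap\mac E$ sits inside $\mac Q_1$ and meets $\partial\mac Q_1$ only at $e_1,\dots,e_r$; each $e_k$ is then necessarily a vertex (it lies on a strictly convex boundary), its finitely many incident edges point strictly into $\mac Q_1$, and a straightforward compactness argument places the whole polytope inside $\conv(\mac Q_q\cup\{e_1,\dots,e_r\})=\mac C_p^*\cap\mac E$ for some $q>1$, i.e., $p<\sqrt{r-1}$.
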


\subsection{Min-vol NMF and identifiability in the noiseless case} \label{sec:psscminvol} 

To take advantage of the SSC and $p$-SSC, we need the notion of volume. 
Given a matrix $W \in \mathbb{R}^{m \times r}$ with $\rank(W) = r$, the quantity $\vol(W) = \det(W^\top W)$ is a measure of the volume of the columns of $W$; namely,  
$\frac{1}{r!} \sqrt{\vol(W)}$ is the volume of the convex hull of the columns of $W$ and the origin in the linear subspace spanned by the columns of $W$. 
%Given a matrix $X \in \mathbb R^{m \times n}$, we can combine the volume minimization of $W\in \mathbb{R}^{m \times r}$ among all the NMFs of $X = WH^\top$ where $H$ is row stochastic.  The further assumption that $H$ is SSC leads to unique NMFs.  
%In fact, let us consider the following optimization problem, referred to a minimum-volume (min-vol) NMF:  
We have the following identifiability result. 
\begin{theorem}[Identifiability of min-vol NMF]\cite{FMHS15, lin2015identifiability} \label{thm:idenminvol}
Let $X \in \mathbb R^{m \times n}$ admit the decomposition  $X = W_{\#} H_{\#}^{\top}$ where $H_{\#} \in \R^{r \times n}_+$ is row stochastic and  satisfies the \textit{SSC}, and $r = \rank(X)$. 
Then for any optimal solution  $(W_*,H_{*})$ of \begin{equation}\tag{\ref{eq:optminvol}} 
    \min_{W \in \mathbb{R}^{m \times r}, H \in \mathbb{R}^{n \times r}} \; \vol(W)  
    \quad \text{such that } 
    \quad X = W H^\top, 
    H e = e,  
     \text{ and } H \geq 0, 
\end{equation}
there exists a permutation matrix $\Pi$ such that $W_* = W_\# \Pi$ and $H_* = H_\#  \Pi$.
% $X = W_{\#} H_{\#}^{\top}$ is essentially unique w.r.t.~.  
%Then any optimal solution of~\eqref{eq:optminvol} is obtained by permutation and scaling of $(W_{\#}, H_{\#})$, that is, any optimal solution $(W^{*}, H^{*})$ of~\eqref{eq:optminvol} has the form $(W^{*}, H^{*}) = (W_{\#} \Pi D, H_{\#} \Pi D^{-1} )$  where $\Pi$ is a permutation and $D$ is a diagonal matrix. %Then $(W,H)$ is an essentially 
%\textit{unique} solution to solution to (\ref{eq:optminvol}) of size $r$. 
\end{theorem}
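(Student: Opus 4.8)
The plan is to reduce the problem to the square case $r\times r$ and then bound a determinant. Since $r = \rank(X)$ and $X = WH^\top$ for any feasible $(W,H)$, every feasible $W$ has $\rank(W) = r$ and $\text{col}(W) = \text{col}(X) = \text{col}(W_\#)$. Because $W_\#$ has full column rank, there is a unique invertible $A \in \R^{r\times r}$ with $W = W_\# A$. This substitution makes the objective explicit: $\vol(W) = \det(A^\top W_\#^\top W_\# A) = \det(A)^2 \vol(W_\#)$, so minimizing $\vol(W)$ is the same as minimizing $|\det A|$. Cancelling $W_\#$ in $W_\# A H^\top = X = W_\# H_\#^\top$ gives $H = H_\# A^{-\top}$; writing $B := A^{-\top}$ and using that $H_\#$ has full column rank, the constraint $He = e$ becomes $\sum_j b_j = e$ (the columns $b_j$ of $B$ sum to $e$), and the constraint $H = H_\# B \ge 0$ becomes $H_\# b_j \ge 0$, i.e.\ $b_j \in \cone^*(H_\#^\top)$ for every $j$. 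Since $|\det B| = 1/|\det A|$, the problem is now: \emph{maximize $|\det[b_1\,|\,\cdots\,|\,b_r]|$ subject to $\sum_j b_j = e$ and $b_j \in \cone^*(H_\#^\top)$.}

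Next I would dualize SSC1. Because $\mac S_{\sqrt{r-1}} \subseteq \cone(H_\#^\top)$, taking duals reverses the inclusion and, by Lemma~\ref{lem:Cp_Sp_dual_cones_and_inclusions} with $p = \sqrt{r-1}$ and $q = 1$, gives $\cone^*(H_\#^\top) \subseteq \mac S_{\sqrt{r-1}}^* = \mac S_1$. Hence every column satisfies $e^\top b_j \ge \|b_j\|$. Summing and using $\sum_j b_j = e$ yields $\sum_j \|b_j\| \le \sum_j e^\top b_j = e^\top e = r$. Chaining Hadamard's inequality with the AM--GM inequality then gives
\[
|\det B| \;\le\; \prod_{j=1}^r \|b_j\| \;\le\; \left(\frac{1}{r}\sum_{j=1}^r \|b_j\|\right)^r \;\le\; 1 .
\]
Since $B = I$ is feasible and attains $|\det B| = 1$, the optimal value is exactly $1$, which corresponds to $\vol(W) = \vol(W_\#)$.

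It remains to show that equality forces $B$ to be a permutation matrix. Every inequality in the display must be tight: AM--GM tightness makes the $\|b_j\|$ all equal, with common value $1$ (as their product is $1$); the bound $\sum_j \|b_j\| = r$ together with $\|b_j\| \le e^\top b_j$ forces $e^\top b_j = \|b_j\|$ for each $j$, i.e.\ $b_j \in \partial \mac S_1$; and Hadamard tightness makes the $b_j$ mutually orthogonal. Now SSC2 enters: since $b_j \in \cone^*(H_\#^\top) \cap \partial \mac S_1 = \{\lambda e_k : \lambda \ge 0,\ k \in [r]\}$ and $\|b_j\| = 1$, each $b_j$ equals some standard basis vector $e_{k_j}$; orthogonality and full rank force the indices $k_j$ to be a permutation of $[r]$, so $B = \Pi$ for a permutation matrix $\Pi \in \mac P_r$. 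Unwinding the substitution, $A = B^{-\top} = \Pi$, whence $W_* = W_\# \Pi$ and $H_* = H_\# A^{-\top} = H_\# \Pi$, as claimed.

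The \textbf{main obstacle} is the equality analysis in the last paragraph: the determinant bound $|\det B| \le 1$ is a clean Hadamard/AM--GM computation, but pinning down \emph{all} optimal $B$ as permutations requires carefully extracting every equality condition (orthogonality, equal unit norms, membership in $\partial\mac S_1$) and then invoking SSC2 precisely to exclude the pathological non-axis boundary directions that would otherwise be compatible with the constraint. The reductions (the column-space argument and the dual-cone translation of the constraints) are routine but must be set up correctly so that SSC1 and SSC2 apply to $\cone^*(H_\#^\top)$.
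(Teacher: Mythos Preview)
Your proof is correct and is essentially the classical argument from \cite{FMHS15, lin2015identifiability}; the paper itself does not give a proof of this theorem but cites it as a known result. In fact, the paper's proof of Theorem~\ref{th:main} (Sections~\ref{sec:generalrobust} and Appendix~\ref{app:proof_of_theo_pSSC}) is precisely a perturbed version of your argument: the matrix $R$ there plays the role of your $A$ (or rather $B^\top$), the dual-cone containment $\cone^*(H_\#^\top)\subseteq \mac S_1$ reappears in Lemma~\ref{lem:upper_bound_detR}, the Hadamard/AM--GM chain is carried out in Lemma~\ref{lem:R_is_close_to_orthogonal}, and the geometric equality analysis (your use of SSC2 to force $b_j=e_{k_j}$) is replaced by the quantitative Lemma~\ref{lem:distance_rk_ei_sscp2} using $p$-SSC in place of SSC2. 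So your write-up doubles as the noiseless skeleton of the paper's main contribution.
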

In simple terms, the SSC of $H_\#$ in an NMF $X = W_\#H_\#^\top$ with $H_\# e = e$ implies that there exist no other factorization where the first factor has a smaller volume. %This intuition was introduced in the hyperspectral unmixing literature~\cite{craig1994minimum}. 
In particular, this also means that for any SSC decomposition of $X=WH^\top$ with row stochastic $H$, we have $\conv(W) \equiv \conv(W_\#)$, that is, the matrices $W$ and $W_\#$ coincide up to a permutation of the columns. 

By Lemma~\ref{lem:pSSC_SSC}, the same holds whenever $H$ is row stochastic and $p$-SSC for $p^2<r-1$ (or for $p=1$ when $r=2$). 

% Note that the first variant of identifiability for min-vol NMF under the SSC was proved in~\cite{FMHS15, lin2015identifiability} under the constraint $H e = e$, which is not w.l.o.g.\ since normalizing the rows of $H$ makes the column of $X$ belong to the convex hull of the columns of $W$. 
% Later it was relaxed to the normalization $H^\top e=e$ in~\cite{fu2018identifiability}, and to $W^\top e = e$ in~\cite{leplat2020blind}. 

\begin{corollary}
\label{cor:pSSC_identifiable_with_minvol}    Let $X \in \mathbb R^{m \times n}$ admit the decomposition  $X = W_{\#} H_{\#}^{\top}$ where $H_{\#} \in \R^{r \times n}_+$ is row stochastic, satisfies the $p$-SSC with $p\in [1,\sqrt{r-1})$ (or $p=1$ for $r=2$),  and $r = \rank(X)$. 
Then for any optimal solution  $X = W_*H_{*}^{\top}$ of \eqref{eq:optminvol}
there exists a permutation matrix $\Pi$ such that $W_* = W_\# \Pi$ and $H_* = H_\#  \Pi$.
\end{corollary}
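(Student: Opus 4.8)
The plan is to reduce the statement directly to the identifiability result Theorem~\ref{thm:idenminvol}, which already guarantees exactly the desired conclusion whenever the factor $H_\#$ satisfies the classical SSC. The entire content of the corollary therefore lies in verifying that the $p$-SSC hypothesis placed on $H_\#$ implies the SSC; once this is established, the conclusion is immediate and no separate argument about volumes or permutations is needed.

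First I would treat the generic case $r > 2$. Here the hypothesis is that $H_\#$ is $p$-SSC for some $p \in [1, \sqrt{r-1})$, that is, $p^2 < r-1$. By Lemma~\ref{lem:pSSC_SSC}, for $r > 2$ a matrix is SSC precisely when it is $p$-SSC for some $p < \sqrt{r-1}$; hence $H_\#$ satisfies the SSC. Since $H_\#$ is moreover row stochastic and $r = \rank(X)$, all hypotheses of Theorem~\ref{thm:idenminvol} are met, so any optimal solution $(W_*, H_*)$ of~\eqref{eq:optminvol} satisfies $W_* = W_\# \Pi$ and $H_* = H_\# \Pi$ for some permutation matrix $\Pi \in \mac P_r$.

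Next I would handle the boundary case $r = 2$ separately, since there the admissible range of $p$ collapses to the single value $p = 1$ and the interval $[1, \sqrt{r-1}) = [1,1)$ is empty, so the $r>2$ reasoning does not apply as stated. By the second assertion of Lemma~\ref{lem:pSSC_SSC}, for $r = 2$ the notions of SSC, separability, and $1$-SSC coincide; thus the assumption that $H_\#$ is $1$-SSC again forces $H_\#$ to be SSC, and Theorem~\ref{thm:idenminvol} applies verbatim to yield the same conclusion.

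Since the argument is a straightforward chaining of two previously established results, I expect no genuine obstacle: the only point requiring care is to keep the $r = 2$ case distinct from $r > 2$, because the equivalence between $p$-SSC and SSC is phrased in two different regimes in Lemma~\ref{lem:pSSC_SSC}. All the substantive work has already been absorbed into that lemma (the geometric equivalence between the expanded and classical scattering conditions) and into Theorem~\ref{thm:idenminvol} (the noiseless identifiability of min-vol NMF), so the corollary is essentially a bookkeeping consequence of the two.
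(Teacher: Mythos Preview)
Your proposal is correct and matches the paper's own argument essentially verbatim: the paper simply notes that, by Lemma~\ref{lem:pSSC_SSC}, $p$-SSC with $p<\sqrt{r-1}$ (or $p=1$ when $r=2$) implies SSC, and then invokes Theorem~\ref{thm:idenminvol}. Your separate handling of the $r=2$ case is a harmless elaboration of the same one-line reduction.
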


\subsection{Necessary conditions for $p$-SSC and the matrix $H_p$} \label{subs:suffnec}

We now provide a necessary condition for the $p$-SSC to hold which will be instrumental in our robustness proofs.

Define the vectors $v_i$ as the intersection between the boundary of the cone $\mac S_p$ and the segment connecting $e/r$ and $e_i$; see Figure~\ref{fig:dual_pSSC} for an illustration.  
Their coordinates can be computed as follows: 
\[
v_i = \alpha_p e + (1-r\alpha_p)e_i, \qquad  \alpha_p 
     = \frac 1r \left(1 - \frac 1{\sqrt{r-1}}\frac qp\right).
\]
We define $H_p\in \f R^{r\times r}$ the matrix whose columns are the vectors $v_i$, that is, 
\[H_p^\top = H_p = 
\begin{pmatrix}
    v_1 &\dots & v_r
\end{pmatrix}
= \alpha_p E + (1- r\alpha_p) I, 
\] 
where $E = ee^\top$ is the matrix of all-ones of appropriate dimension. 
Observe that, by construction, the columns of $H_p$ are contained both in $\mac S_p = \left\{x \in \R^r \ \big| \ e^{\top}x \geq p \|x\| \right\}$ and in $\Delta^r$, so 
\[
\conv(H_p) \; \cu \; \mac S_p \cap \Delta^r \; = \;  \mac Q_p\cap \Delta^r.  
\] 
Moreover, $\conv(H_1) \equiv  \Delta^r$. As a consequence, for a row stochastic $p$-SSC matrix $H\in \f R^{n\times r}$, one has 
\[
\conv(H_p)  \; \cu \; \mac Q_p\cap \Delta^r  \; \cu \;  \conv(H^\top)  \; \cu \;  \Delta^r. 
\]
This implies that $\conv(H^\top)$ must necessarily contain all vectors $v_i$, and the containment $\conv(H_p)\cu \conv(H^\top)$ becomes an equality for $p=1$, that is, when $H$ is separable. 
The following lemma, whose proof is in Appendix~\ref{app:Hp}, 
summarizes the above discussion. 
\begin{lemma}
\label{lem:def_of_Hp}  
Fix  $p\in [1, \sqrt{r-1}]$ and let $q = \sqrt{r-p^2}$. Let $H_p\in \f R^{r\times r}$ be the matrix whose columns are  the intersection between  the boundary of the cone $\mac S_p$ and the segment connecting $e/r$ and $e_i$. Then 
    \[H_p^\top = H_p 
= \alpha_p E + (1- r\alpha_p) I, \qquad  \alpha_p 
     = \frac 1r \left(1 - \frac 1{\sqrt{r-1}}\frac qp\right).
\]
Any $p$-SSC matrix $H\in \f R^{n\times r}$ must necessarily satisfy $\cone(H_p)\cu \cone(H^\top)$ and if $H$ is also row stochastic then $\conv(H_p)\cu \conv(H^\top)$.
\end{lemma}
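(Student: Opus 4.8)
The plan is to establish the three assertions in turn: first compute the coordinates of $v_i$ by an explicit intersection calculation, then read off the closed form of $H_p$ and its symmetry, and finally deduce the two cone/convex containments directly from the definition of $p$-SSC together with Lemma~\ref{lem:p-hypersphere}.

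First I would parametrize the segment from $e/r$ to $e_i$ as $x(t) = \frac er + t\lp e_i - \frac er\rp$ for $t\in[0,1]$. Since $e^\top\lp e_i - \frac er\rp = 1 - 1 = 0$, every point of the segment satisfies $e^\top x(t) = 1$, so the segment lies in $\mac E$ and the boundary condition $e^\top x = p\|x\|$ collapses to $\|x(t)\| = 1/p$. Exploiting the orthogonality $\frac er \perp \lp e_i - \frac er\rp$, Pythagoras gives $\|x(t)\|^2 = \frac 1r + t^2\|e_i - \frac er\|^2 = \frac 1r + t^2\frac{r-1}{r}$, a strictly increasing function running from $1/r$ at $t=0$ to $1$ at $t=1$; because $p\in[1,\sqrt{r-1}]$ forces $1/p\in[1/\sqrt{r-1},1]$, the value $1/p^2$ is bracketed by the endpoints, so there is a unique interior solution $t^\ast = \frac{q}{p\sqrt{r-1}}$. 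Substituting back yields $v_i = \frac{1-t^\ast}{r}e + t^\ast e_i = \alpha_p e + (1-r\alpha_p)e_i$ with $\alpha_p = \frac 1r\lp 1 - \frac{1}{\sqrt{r-1}}\frac qp\rp$, exactly as claimed. Stacking the $v_i$ as columns turns the $\alpha_p e$ part into $\alpha_p E$ and the $(1-r\alpha_p)e_i$ part into $(1-r\alpha_p)I$, giving $H_p = \alpha_p E + (1-r\alpha_p)I$, which is symmetric since $E$ and $I$ are.

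For the containments, I would verify that every $v_i$ lies in $\mac C_p = \mac S_p\cap\f R^r_+$. Membership in $\mac S_p$ holds by construction. Nonnegativity reduces to $\alpha_p\ge 0$ and to the diagonal entry $1-(r-1)\alpha_p\ge 0$: the former follows from $q = \sqrt{r-p^2}\le\sqrt{r-1}$ whenever $p\ge 1$, and the latter simplifies to $\frac 1r\lp 1 + \sqrt{r-1}\,\frac qp\rp > 0$. Hence $v_i\in\mac C_p\cu\cone(H^\top)$ by the definition of $p$-SSC, and since $\cone(H^\top)$ is a convex cone containing every $v_i$ it contains $\cone(H_p)$. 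When $H$ is additionally row stochastic, each $v_i$ also satisfies $e^\top v_i = 1$, so $v_i\in\mac Q_p\cap\Delta^r$; this set is convex (an intersection of a ball and a simplex) and, by Lemma~\ref{lem:p-hypersphere}, contained in $\conv(H^\top)$, whence $\conv(H_p)\cu\mac Q_p\cap\Delta^r\cu\conv(H^\top)$.

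The argument is essentially a direct computation, so I do not anticipate a genuine obstacle. The only point requiring care is confirming that the segment meets $\partial\mac S_p$ in a single interior point across the whole range $p\in[1,\sqrt{r-1}]$, which is handled by the monotonicity of $\|x(t)\|$; the boundary case $p=1$ then gives $t^\ast = 1$, so $v_i = e_i$, $H_1 = I$, and $\conv(H_1) = \Delta^r$, recovering separability as expected.
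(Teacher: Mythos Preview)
Your proposal is correct and follows essentially the same approach as the paper: compute the intersection point by reducing to $\|x\|=1/p$ on the affine slice $\mac E$, verify that the resulting $v_i$ is nonnegative (hence lies in $\mac C_p\cap\Delta^r=\mac Q_p\cap\Delta^r$), and then invoke the definition of $p$-SSC together with Lemma~\ref{lem:p-hypersphere} for the cone and convex-hull containments. The only cosmetic difference is that you solve forward for $t^\ast$ whereas the paper verifies the stated formula for $v_i$ a posteriori; the computations and the appeal to Lemma~\ref{lem:p-hypersphere} are the same.
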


Due to its simple structure, the last singular value of $H_p$ and the norm of its inverse can be calculated exactly, and they will be central quantities in the proofs for our main results; 
see Appendix~\ref{app:Hp} for their closed form and some useful lower and upper bounds. 

Note that, as opposed to SSC and $p$-SSC, the condition $\conv(H_p)\cu \conv(H^\top)$ is easy to check: we just need to verify whether each column of $H_p$ can be written as a convex combination of the rows of $H$, which is a linear system of equalities and inequalities. 
Another necessary condition for the SSC was proposed 
in~\cite[p.~119]{Gil20} (see also~\cite{Gillis_2024}): it requires $\cone(H^\top)$ to contain the tangent points of $\mathcal C_{\sqrt{r-1}}$ on $\Delta^r$, that is, the columns of $(E-I)/(r-1)$.

%the  in Lemma~\ref{lem:def_of_Hp} was already known for the case of the SSC, and referred to as NC-SSC (necessary condition for SSC) in. In that case, $H_{\sqrt{r-1}} = \frac{}{}(E-I)$. 

\section{Robustness under near-separability ($p$-SSC with $p$ close to 1)} \label{sec:robustnearsep}

In this section, we provide a sketch of the proof for Theorem~\ref{th:main2}, giving the basic intuitions behind it. The full proof with all the details is postponed to Appendix~\ref{app:proof_of_theo_sep}. 
We start with some initial results that will be also useful to prove the other main theorem, Theorem~\ref{th:main}, in Section~\ref{sec:generalrobust}.

\subsection{First steps for the proofs of Theorem~\ref{th:main} and~\ref{th:main2}} \label{sec:initial_common_steps}

%\ngc{Question: should we have another section for this}

Let us recall the main notation from Assumption~\ref{ass:perturbed_pSSC}.
Our data matrix $X\in \f R^{m\times n}$ admits the $p$-SSC decomposition
\[
   X = W^\#(H^\#)^\top + N^\#,
   \]
where $W^\#\in \f R^{m\times r}$ is full rank, $H^\#\in \f R^{n\times r}$ is $p$-SSC and row stochastic, $p\in [1,\sqrt{r-1})$  (or $p=1$ for $r=2$) and $\|N^\#\|_{1,2}\le \ve$. The matrix $X$ also admits a different decomposition
\[
   X = W^*(H^*)^\top + N^*,
   \]
where the pair $(W^*,H^*)$ is an optimal solution to the min-vol problem
\begin{equation}
    \tag{\ref{eq:apprminvol}}
   \min_{W \in \mathbb{R}^{m \times r}, H \in \mathbb{R}^{n \times r}} \  \det(W^\top W)  \quad \text{such that} 
    \quad \|X-WH^\top\|_{1,2}\le \ve, \  He = e,   \text{ and } H\ge 0, 
\end{equation}
and hence $\|N^*\|_{1,2} = \|X - W^*(H^*)^\top\|_{1,2}\le \ve$.

Our main results will bound $\min_{\Pi \in \mac P_r}\|W^\# - W^* \Pi\|_{1,2}$, where 
 $\Pi\in\f R^{r\times r}$ is a permutation matrix used to permute the columns of $W^*$ in order to match them with the closest columns of $W^\#$.

\subsubsection{The matrix $R$ linking $W^\#$ and $W^*$}

In order to find a relation linking $W^\#$ and $W^*$, we will use the fact that each column of the invertible matrix $H_p$ introduced in Lemma~\ref{lem:def_of_Hp} is a convex combination of the rows of the $p$-SSC  matrix $H^\#$, that is, $H_p = (H^\#)^\top V$, where $V\in\R_+^{n\times r}$ is column stochastic. 
As a consequence, 
\begin{equation}\label{eq:relation_W*_Whash_with_V}
    W^*(H^*)^\top V + N^*V= XV = W^\#(H^\#)^\top V + N^\# V =  W^\# H_p + N^\# V, 
\end{equation}
and hence 
\[
W^\#  = W^*(H^*)^\top V H_p^{-1} + (N^*-N^\#)VH_p^{-1}  = W^* R + M, 
\]
where we define $R := (H^*)^\top V H_p^{-1}$ and $M := (N^*-N^\#)VH_p^{-1}$. 
Using the inequality $\|AB\|_{1,2} \le \|A\|_{1,2}\|B\|_1$ for any matrix $A$ and $B$ of appropriate dimension (see Lemma~\ref{lem:norm_inequalities}), we obtain 
\[
\| W^\# - W^*R\|_{1,2} = \|M\|_{1,2} \le \|N^*-N^\#\|_{1,2} \|V\|_1 \|H_p^{-1}\|_1 \le  2\ve \|H_p^{-1}\|_1, 
\]
where, by Lemma~\ref{lem:last_sv_of_Hp}, $\|H_p\|^{-1}\le 2\sqrt r \frac pq\le 2r$ for every  $p\in [1,\sqrt{r-1}]$. 
We summarize the above discussion in the following result. 
\begin{lemma}
 \label{lem:definition_of_R}   Under Assumption~\ref{ass:perturbed_pSSC}, there exists a column stochastic $V\in\f R^{n\times r}$ such that $(H^\#)^\top V = H_p^\top$ and 
 \[
W^\# = W^*R + M, 
\]
where  $R := (H^*)^\top V H_p^{-1}\in \f R^{r\times r}$ and $M := (N^*-N^\#)VH_p^{-1}\in \f R^{m\times n}$. Moreover,
\[
e^\top R =e^\top, \qquad \|M\|_{1,2} \le   2\ve \|H_p^{-1}\|_1 \le 2\ve\lp 2\sqrt{r-1} \frac pq - 1\rp\le 2\ve (2r-3) \leq 4 r \ve.
\]
\end{lemma}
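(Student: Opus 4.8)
The plan is to build the column stochastic matrix $V$ from the necessary condition for the $p$-SSC recorded in Lemma~\ref{lem:def_of_Hp}, then derive the relation $W^\# = W^*R + M$ by right-multiplying the two representations of $X$ by $V$, and finally read off the two asserted properties of $R$ and $M$ by tracking stochasticity and applying the norm inequalities of Lemma~\ref{lem:norm_inequalities}. The conceptual heart of the argument is the construction of $V$; the remainder is algebra and norm estimates.

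First I would invoke Lemma~\ref{lem:def_of_Hp}: since $H^\#$ is row stochastic and $p$-SSC, we have $\conv(H_p)\cu \conv((H^\#)^\top)$, so each column $v_i$ of $H_p$ is a convex combination of the columns of $(H^\#)^\top$ (the rows of $H^\#$). Reading off the nonnegative, sum-to-one coefficients of each such combination and collecting them column by column produces a column stochastic $V\in\f R^{n\times r}$ with $(H^\#)^\top V = H_p = H_p^\top$, using the symmetry $H_p^\top = H_p$. This is exactly where the $p$-SSC hypothesis enters. Next, I would use that $X$ admits the two decompositions $W^\#(H^\#)^\top + N^\# = X = W^*(H^*)^\top + N^*$; right-multiplying by $V$ and substituting $(H^\#)^\top V = H_p$ gives \eqref{eq:relation_W*_Whash_with_V}. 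Since $H_p$ is invertible (its smallest singular value is positive, as recorded after Lemma~\ref{lem:def_of_Hp}), right-multiplying by $H_p^{-1}$ isolates $W^\# = W^*(H^*)^\top V H_p^{-1} + (N^*-N^\#)VH_p^{-1}$, which is the claimed $W^*R + M$ with $R := (H^*)^\top V H_p^{-1}$ and $M := (N^*-N^\#)VH_p^{-1}$.

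For $e^\top R = e^\top$, I would propagate the all-ones vector through $R = (H^*)^\top V H_p^{-1}$: row stochasticity of $H^*$ gives $e^\top(H^*)^\top = (H^*e)^\top = e^\top$, column stochasticity of $V$ gives $e^\top V = e^\top$, and $H_p = \alpha_p E + (1-r\alpha_p)I$ is symmetric with $H_p e = e$, hence $e^\top H_p^{-1} = (H_p^{-1}e)^\top = e^\top$; chaining these three identities yields the claim. For the bound on $M$, I would apply Lemma~\ref{lem:norm_inequalities} to get $\|M\|_{1,2}\le \|N^*-N^\#\|_{1,2}\,\|V\|_1\,\|H_p^{-1}\|_1$, then bound $\|N^*-N^\#\|_{1,2}\le 2\ve$ by the triangle inequality and the feasibility constraints $\|N^\#\|_{1,2},\|N^*\|_{1,2}\le\ve$, use $\|V\|_1 = 1$ since $V$ is column stochastic, and insert the explicit estimate $\|H_p^{-1}\|_1 \le 2\sqrt{r-1}\,\frac pq - 1$ from Lemma~\ref{lem:last_sv_of_Hp}; the final simplifications to $2r-3$ and $4r\ve$ follow from $p/q\le\sqrt{r-1}$ whenever $p\le\sqrt{r-1}$. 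The only genuinely quantitative input is the control of $\|H_p^{-1}\|_1$, which is deferred to Lemma~\ref{lem:last_sv_of_Hp}, so within this lemma the real work reduces to correctly constructing $V$ and bookkeeping the stochasticity.
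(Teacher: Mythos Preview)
Your proposal is correct and follows essentially the same approach as the paper: constructing $V$ from the containment $\conv(H_p)\cu\conv((H^\#)^\top)$ given by Lemma~\ref{lem:def_of_Hp}, right-multiplying the two representations of $X$ by $V$ and then by $H_p^{-1}$, and bounding $\|M\|_{1,2}$ via the norm inequality $\|AB\|_{1,2}\le\|A\|_{1,2}\|B\|_1$ together with Lemma~\ref{lem:last_sv_of_Hp}. Your derivation of $e^\top R = e^\top$ by propagating stochasticity through the three factors is in fact more explicit than the paper's own discussion, which states the identity without justification.
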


Both the remainders of the proofs for the main results focus uniquely on estimating how far the matrix $R$ is from a permutation matrix. 
In fact, this can then be used to compute a bound on the target error $\min_{\Pi \in \mac P_r} \|W^\# - W^*\Pi\|_{1,2}$ as follows. 
\begin{corollary}
    \label{cor:Error_estimation_given_R_is_permutation}
Under the assumptions and the notation of Lemma~\ref{lem:definition_of_R}, 
    \begin{align*}   
    \min_{\Pi \in \mac P_r} \|W^\#-W^*\Pi\|_{1,2} &\le    \|W^*\|\min_{\Pi \in \mac P_r} \|R-\Pi\|_{1,2}    +4r\ve.  
\end{align*}
\end{corollary}

\begin{proof}
  We have  %and Lemma~\ref{lem:norm_inequalities}, 
  \begin{align*} 
  \min_{\Pi \in \mac P_r} \|W^\#-W^*\Pi\|_{1,2} & =   
   \min_{\Pi \in \mac P_r} \|W^*R + M-W^*\Pi\|_{1,2} \le    \|W^*\|\min_{\Pi \in \mac P_r} \|R-\Pi\|_{1,2}    +4r\ve , 
\end{align*}
where we used $\|M\|_{1,2} \leq 4r \ve$ (Lemma~\ref{lem:definition_of_R}), 
and the matrix inequality $\|W^* (R-\Pi)\|_{1,2} \leq \|W^*\| \|R-\Pi\|_{1,2}$ (see Lemma~\ref{lem:norm_inequalities}). 
\end{proof}

\subsubsection{The volume of $R$ is lower bounded }

The matrix $W^*$ is the optimal solution of the min-vol NMF problem \eqref{eq:apprminvol}, while the $p$-SSC decomposition $X =  W^\#(H^\#)^\top + N^\#$ is  a feasible solution. 
As a consequence, the volume of $W^*$ will necessarily be smaller than the volume of $W^\#$. 

From Lemma~\ref{lem:definition_of_R}, $W^\# = W^*R + M$ where $M$ is a small perturbation. As a consequence, $R$ will need to act as an enlarger of the volume of $W^*$ in order to get it on par with the volume of $W^\#$. In particular, the volume of $R$ itself cannot be less than $1$ minus a perturbation term coming from $M$.  
This means that 
\[
\vol(W^\#)= \vol(W^*R + M)\approx \vol(W^*R)  = \vol(W^*)\vol(R)\le  \vol(W)\vol(R)
\implies \vol(R) \ \wt{\ge}\  \frac{\vol(W^\#)}{\vol(W^*)} \ge 1.
\]
The precise result is as follows. Its proof can be found in Appendix~\ref{app:proof_of_lower_bound_detR}. 

\begin{lemma}
    \label{lem:lower_bound_detR}
    Under Assumption~\ref{ass:perturbed_pSSC}, if
    \[  \ve 
 = \mac O \left(
    \frac{\sigma_r(W^\#)   }{r^2}
\frac {q}p \right) , 
    \]
   then the matrix $R$ in Lemma~\ref{lem:definition_of_R} satisfies
   \[
    \det(R)^2 \ge  1 -\mac O\left(     \frac{r^2}{\sigma_r(W^\#)}   \frac pq \ve  \right).
   \]
\end{lemma}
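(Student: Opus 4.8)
The plan is to combine the optimality of $(W^*,H^*)$ with the factorization $W^\# = W^*R + M$ from Lemma~\ref{lem:definition_of_R} and turn the problem into a perturbation estimate for a volume ratio. First I would observe that $(W^\#,H^\#)$ is itself feasible for~\eqref{eq:apprminvol}, since $\|N^\#\|_{1,2}\le\ve$ and $H^\#$ is row stochastic and nonnegative; optimality of $W^*$ therefore gives $\vol(W^*)\le\vol(W^\#)$. The starting algebraic identity is the multiplicativity of the volume under right multiplication by a square matrix, $\vol(W^*R)=\det\!\big(R^\top W^{*\top}W^*R\big)=\det(R)^2\vol(W^*)$. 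Using $W^*R = W^\# - M$ and rearranging, I would write
\[
\det(R)^2 \;=\; \frac{\vol(W^\# - M)}{\vol(W^*)} \;\ge\; \frac{\vol(W^\# - M)}{\vol(W^\#)},
\]
where the last inequality is precisely the optimality bound $\vol(W^*)\le\vol(W^\#)$. Everything then reduces to lower bounding the volume ratio $\vol(W^\# - M)/\vol(W^\#)$ for the small perturbation $M$.

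To control this ratio I would pass to singular values, writing $\vol(W)=\prod_{i=1}^r\sigma_i(W)^2$ and invoking Weyl's perturbation inequality $\sigma_i(W^\# - M)\ge\sigma_i(W^\#)-\|M\|$ for the spectral norm. Using $\sigma_i(W^\#)\ge\sigma_r(W^\#)$, each factor satisfies
\[
\frac{\sigma_i(W^\# - M)^2}{\sigma_i(W^\#)^2} \;\ge\; \left(1-\frac{\|M\|}{\sigma_r(W^\#)}\right)^{2},
\]
and taking the product over $i\in[r]$ followed by Bernoulli's inequality $(1-x)^{2r}\ge 1-2rx$ (valid once $x=\|M\|/\sigma_r(W^\#)\le 1$) yields $\det(R)^2 \ge 1 - 2r\,\|M\|/\sigma_r(W^\#)$.

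The remaining step is to convert the known $\|\cdot\|_{1,2}$ bound on $M$ into the spectral norm. Since $M$ has $r$ columns, $\|M\|\le\sqrt r\,\|M\|_{1,2}$, and Lemma~\ref{lem:definition_of_R} together with the estimate $\|H_p^{-1}\|_1\le 2\sqrt r\,\tfrac pq$ from Lemma~\ref{lem:last_sv_of_Hp} gives $\|M\|_{1,2}\le 2\ve\|H_p^{-1}\|_1=\mac O\!\big(\sqrt r\,\tfrac pq\,\ve\big)$, hence $\|M\|=\mac O\!\big(r\,\tfrac pq\,\ve\big)$. Substituting into the previous display produces $\det(R)^2\ge 1-\mac O\!\big(\tfrac{r^2}{\sigma_r(W^\#)}\tfrac pq\,\ve\big)$, as claimed; the hypothesis $\ve=\mac O\!\big(\tfrac{\sigma_r(W^\#)}{r^2}\tfrac qp\big)$ is exactly what forces $\|M\|/\sigma_r(W^\#)\le 1$, keeping Weyl's lower bounds nonnegative and licensing the Bernoulli step.

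I expect the main obstacle to be conceptual rather than computational, and to lie in two places. First, the identity $\det(R)^2=\vol(W^*R)/\vol(W^*)$ requires $W^*$ to be full rank, i.e.\ $\vol(W^*)>0$; I would justify this by noting that $W^*(H^*)^\top$ lies within $2\ve$ in $\|\cdot\|_{1,2}$ (hence within $\mac O(\sqrt n\,\ve)$ spectrally) of $W^\#(H^\#)^\top$, whose $r$th singular value is bounded below by $\sigma_r(W^\#)\sigma_r(H^\#)>0$, so for $\ve$ small enough $W^*(H^*)^\top$, and therefore $W^*$, has rank exactly $r$. Second, one must track constants carefully so that the smallness threshold demanded by Weyl and Bernoulli is genuinely implied by the stated order of $\ve$; this is where the explicit constant in $\|H_p^{-1}\|_1$ and the passage from $\|\cdot\|_{1,2}$ to $\|\cdot\|$ must be handled with care.
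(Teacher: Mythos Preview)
Your core argument is correct and very close to the paper's. You use optimality to get $\vol(W^*)\le\vol(W^\#)$, write $\det(R)^2=\vol(W^*R)/\vol(W^*)\ge\vol(W^\#-M)/\vol(W^\#)$, and then lower bound the ratio via Weyl and Bernoulli; the paper instead upper bounds $\vol(W^\#)=\vol(W^*R+M)$ by $\det(R)^2\vol(W^*)+[\text{perturbation}]\le\det(R)^2\vol(W^\#)+[\text{perturbation}]$ and only then divides by $\vol(W^\#)$. The perturbation estimates and the bound $\|M\|=\mac O\big(r\tfrac pq\ve\big)$ are essentially identical in both versions.

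The one genuine issue is your treatment of the full-rank requirement on $W^*$. Your identity $\det(R)^2=\vol(W^*R)/\vol(W^*)$ needs $\vol(W^*)>0$, and your proposed justification---passing from $\|\cdot\|_{1,2}$ to the spectral norm on the $m\times n$ matrix $W^*(H^*)^\top$---brings in a factor $\sqrt{n}$ that is not controlled by the hypothesis on $\ve$ (which depends only on $r$, $p$, $q$, and $\sigma_r(W^\#)$). The paper's arrangement sidesteps this entirely: by first upper bounding $\vol(W^*R+M)$ and replacing $\det(R)^2\vol(W^*)$ with $\det(R)^2\vol(W^\#)$ before any division occurs, it only ever divides by $\vol(W^\#)>0$ and never needs $W^*$ to be full rank. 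If you prefer your route, the cleanest fix is to invoke Lemma~\ref{lem:full_rank_W}, whose proof is independent of this lemma (it uses $(H^\#)^\top V=H_p$ to avoid any $n$-dependence) and whose hypothesis is implied by $\ve=\mac O\big(\tfrac{\sigma_r(W^\#)}{r^2}\tfrac qp\big)$.
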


\subsubsection{$W^*$ is full rank}

The min-vol NMF problem \eqref{eq:apprminvol} aims to find the decomposition $X = WH^\top + N$ with the minimum volume $\vol(W) = \det(W^\top W)$. 
In particular, it may lower the rank of $W$ in order to get the volume equal to zero. 
When $\ve = 0$, that is, no noise, 
 Corollary~\ref{cor:pSSC_identifiable_with_minvol} shows that the $p$-SSC solution $X = W^\#(H^\#)^\top$ is also the only optimal solution to the min-vol NMF problem \eqref{eq:apprminvol}, up to permutation of columns. 
 In particular  
\[
 W^\# \ \text{ full rank } \implies 
\det((W^*)^\top W^*) = \det((W^\#)^\top W^\#) > 0.
\]
The introduction of a small perturbation $\|N\|_{1,2} = \ve > 0$ does not usually impact the rank of $W^*$, except in the case when $W^\#$ is already close to be rank deficient, that is, when its last singular value $\sigma_r(W^\#)$ is close to zero. As a consequence, we need an upper bound on $\ve$ in function of $\sigma_r(W^\#)$ to ensure that $W^*$ is full rank. 
\begin{lemma}
    \label{lem:full_rank_W} Under Assumption~\ref{ass:perturbed_pSSC}, 
    \[
    \sigma_r(W^*) \ge 
     \frac{\sigma_r(W^\#)   }{\sqrt {r(r-1)}}
\frac qp
-  2\ve. 
    \]
As a consequence  if  
\[
\ve 
< 
    \frac{\sigma_r(W^\#)   }{2\sqrt {r(r-1)}}
\frac {q}p, 
    \] 
    then $W^*$ is full rank. 
\end{lemma}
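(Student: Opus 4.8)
The plan is to start from the identity \eqref{eq:relation_W*_Whash_with_V}, rearranged as
\[
W^* C = W^\# H_p + B, \qquad C := (H^*)^\top V, \quad B := (N^\# - N^*) V,
\]
where $V$ is the column stochastic matrix furnished by Lemma~\ref{lem:definition_of_R}. Since $H^*$ is row stochastic and $V$ is column stochastic, the product $C$ is again column stochastic ($C\ge 0$ and $e^\top C = e^\top V = e^\top$), so $\|C\| \le \|C\|_F \le \sqrt r$. The strategy is to lower bound $\sigma_r(W^\# H_p)$, upper bound the perturbation $\|B\|$, and then transfer the resulting estimate to $\sigma_r(W^*)$ using $\|C\|\le\sqrt r$ together with Weyl's inequality and the standard singular value inequalities for products.

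First, since $W^\#$ has full rank $r$ and $H_p$ is invertible, I would use $\sigma_r(W^\# H_p) \ge \sigma_r(W^\#)\sigma_r(H_p)$, and invoke the exact spectrum of $H_p = \alpha_p E + (1-r\alpha_p)I$ (Lemma~\ref{lem:last_sv_of_Hp}): its eigenvalues are $1$ and $\frac{q}{p\sqrt{r-1}}$ with multiplicity $r-1$, hence $\sigma_r(H_p) = \frac{q}{p\sqrt{r-1}}$. For the perturbation, each column of $B$ equals $(N^\# - N^*)v_j$ with $v_j$ a probability vector, so by Lemma~\ref{lem:norm_inequalities} and the triangle inequality $\|(N^\# - N^*)v_j\| \le \|N^\# - N^*\|_{1,2}\|v_j\|_1 \le 2\ve$; summing over the $r$ columns gives $\|B\| \le \|B\|_F \le 2\sqrt r\,\ve$.

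Finally I would combine these through Weyl and the product inequality $\sigma_r(W^* C) \le \sigma_r(W^*)\|C\|$:
\[
\sigma_r(W^\#)\frac{q}{p\sqrt{r-1}} \le \sigma_r(W^\# H_p) = \sigma_r(W^* C - B) \le \sigma_r(W^* C) + \|B\| \le \sqrt r\,\sigma_r(W^*) + 2\sqrt r\,\ve.
\]
Dividing by $\sqrt r$ yields $\sigma_r(W^*) \ge \frac{\sigma_r(W^\#)}{\sqrt{r(r-1)}}\frac{q}{p} - 2\ve$, the claimed bound; the full-rank consequence is then immediate, as the stated upper bound on $\ve$ forces the right-hand side to be strictly positive.

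The delicate points will be (i) invoking the two opposite-direction product inequalities correctly, namely the lower bound $\sigma_r(W^\# H_p) \ge \sigma_r(W^\#)\sigma_r(H_p)$ (valid because both factors have full rank) and the upper bound $\sigma_r(W^* C) \le \sigma_r(W^*)\|C\|$ (the instance $i=r,\,j=1$ of $\sigma_{i+j-1}(AB) \le \sigma_i(A)\sigma_j(B)$); and (ii) making sure the factor $\sqrt r$ generated when bounding $\|B\|$ via the Frobenius norm cancels \emph{exactly} against the $\sqrt r = \|C\|$ on the other side, so that the final noise term is $2\ve$ rather than $2\sqrt r\,\ve$. Everything else is bookkeeping with the exact value $\sigma_r(H_p) = q/(p\sqrt{r-1})$.
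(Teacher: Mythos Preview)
Your argument is correct and matches the paper's proof essentially line for line: both start from the identity $W^*(H^*)^\top V = W^\# H_p + (N^\#-N^*)V$ of Lemma~\ref{lem:definition_of_R}, bound $\|(H^*)^\top V\|\le\|(H^*)^\top V\|_F\le\sqrt r$ using column stochasticity, use $\sigma_r(W^\# H_p)\ge\sigma_r(W^\#)\sigma_r(H_p)=\sigma_r(W^\#)\tfrac{q}{p\sqrt{r-1}}$, and control the perturbation by $\|(N^\#-N^*)V\|\le 2\sqrt r\,\ve$ before dividing through by $\sqrt r$. The only cosmetic difference is that you run the chain of inequalities starting from $\sigma_r(W^\# H_p)$ and bounding upward, while the paper starts from $\sigma_r(W^*)$ and bounds downward.
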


\noindent Its proof is in Appendix~\ref{app:full_rank_W} and uses~\eqref{eq:relation_W*_Whash_with_V} that gives  
\[
\sigma_r(W^*) \ge \frac{\sigma_r(W^*(H^*)^\top V)}{\|(H^*)^\top V\|} 
\ge  \frac{\sigma_r(W^\# H_p^\top) -  \| (N^*-N^\#)V \| }{\sqrt{r}}
\ge \frac{\sigma_r(W^\#)\sigma_r( H_p) }{\sqrt{r}}
-2\ve.
\]
Notice that the bound on $\ve$ in Lemma~\ref{lem:full_rank_W} gets stricter the more we approach $p=\sqrt{r-1}$ (the original SSC condition). 
In fact, 
 \begin{itemize}
    \item for $H^\#$ SSC and $p = \sqrt{r-1}$, the bound reads $ \ve <
    \frac{\sigma_r(W^\#)   }{2(r-1)\sqrt {r}}$, 
  \item for $H^\#$ separable and $p = 1$, the bound reads  $  \ve<     \frac{\sigma_r(W^\#)   }{2\sqrt {r}}$.
\end{itemize}
This shows that for larger $p$, it is easier for the perturbation matrix $N^\#$ to induce a min-vol NMF optimal solution $W^*$ that is rank deficient, thus making the model less robust.

\subsection{Sketch of the proof of Theorem~\ref{th:main2}}

By Corollary~\ref{cor:Error_estimation_given_R_is_permutation}, it remains to prove that $R$ is close to a permutation matrix $\Pi$, or, equivalently, 
that every column $r_i$ of $R$ is close to some canonical basis vector $e_j$, and that two distinct columns are not close to the same $e_j$.
%\ngc{Why do you call $\wt r_i$ the columns of $R$, and not simply $r_i$?} \ngc{I see you use $r_i$ as the rows later... but this makes the notation not consistent with $x_i$. To be adapted.} 

By Lemma~\ref{lem:definition_of_R}, the matrix $R$ is equal to $(H^*)^\top VH_p^{-1}$. For $p=1$, we have $H_p = H_p^{-1} = I$, and in particular all entries of $R$ are nonnegative. It stands to reason that when $p$ is close to $1$, then $H_p^{-1}$ is still close to $I$, and in particular its negative entries have small magnitude proportional to $p-1$. The same can thus be said for the matrix $R$.

A direct computation is enough to lower bound all entries of $R$ by a constant $-\beta_p = -\mac O(p-1)$. 
The proof, in Appendix~\ref{app:proof_to_lower_bound_R_sepcase}, only makes use of the column stochasticity of the matrix $(H^*)^\top V$ to carry on the computation for the entries of $R$. 
\begin{lemma}
  \label{lem:lower_bound_R_sepcase}  Suppose that $p - 1 = \mac O(1/r)$ where $p\in [1,\sqrt{r-1})$ and $q = \sqrt{r-p^2}$. All the entries of the matrix $R$ defined in Lemma~\ref{lem:definition_of_R} are lower bounded by $-\beta_p\le 0$ and
    \[
    \beta_p = \mac O(p-1) = \mac O\left(\frac 1r\right),  \qquad \|R\|_1\le 1 + 2\beta_p, \qquad \frac pq\sqrt r = \mac O(1)
    .
    \]
\end{lemma}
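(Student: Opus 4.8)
The plan is to exploit the explicit form $H_p = \alpha_p E + (1-r\alpha_p)I$ from Lemma~\ref{lem:def_of_Hp} and invert it in closed form. Writing $E = ee^\top$ and checking the product, the inverse of $aI + bE$ is $\frac1a I - \frac{b}{a(a+rb)}E$; taking $a = 1-r\alpha_p$ and $b = \alpha_p$, one has $a+rb = 1$, so the formula collapses to the clean identity
\[
H_p^{-1} = \frac{1}{1-r\alpha_p}\left(I - \alpha_p E\right).
\]
Substituting $\alpha_p = \frac1r\left(1 - \frac{1}{\sqrt{r-1}}\frac qp\right)$ gives $1 - r\alpha_p = \frac{1}{\sqrt{r-1}}\frac qp > 0$ and $\alpha_p \ge 0$, with $\alpha_p = 0$ exactly at $p=1$ (recovering $H_1 = I$). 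This identity drives every subsequent estimate.

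Next I set $P := (H^*)^\top V$. Since $H^*$ is row stochastic and $V$ is column stochastic, $P$ is a product of column-stochastic matrices, hence itself column stochastic: entrywise nonnegative with $e^\top P = e^\top$, so $\|P\|_1 = 1$. Writing $R = P H_p^{-1} = \frac{1}{1-r\alpha_p}\left(P - \alpha_p\, PE\right)$ and noting $PE = (Pe)e^\top$, each entry reads $R_{ij} = \frac{1}{1-r\alpha_p}\left(P_{ij} - \alpha_p s_i\right)$, where $s_i = \sum_k P_{ik} \in [0,r]$ is the $i$th row sum of $P$. Because $P_{ij}\ge 0$, $\alpha_p\ge 0$ and $1-r\alpha_p>0$, the crude bound $s_i \le r$ yields the entrywise lower bound $R_{ij} \ge -\frac{r\alpha_p}{1-r\alpha_p} =: -\beta_p$, and substituting the values above gives the closed form $\beta_p = \sqrt{r-1}\,\frac pq - 1 \ge 0$.

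It then remains to establish the three asymptotic claims, two of which are immediate. For $\|R\|_1$, submultiplicativity of the matrix $1$-norm (Lemma~\ref{lem:norm_inequalities}) with $\|P\|_1 = 1$ gives $\|R\|_1 \le \|H_p^{-1}\|_1$, and a direct column-sum computation on $H_p^{-1}$, whose diagonal and off-diagonal magnitudes are $\frac{1-\alpha_p}{1-r\alpha_p}$ and $\frac{\alpha_p}{1-r\alpha_p}$, gives $\|H_p^{-1}\|_1 = 1 + \frac{2(r-1)}{r}\beta_p \le 1 + 2\beta_p$, consistent with Lemma~\ref{lem:last_sv_of_Hp}. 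For $\frac pq\sqrt r$, the identity $\sqrt{r-1}\frac pq = 1+\beta_p$ rewrites it as $(1+\beta_p)\sqrt{r/(r-1)}$, which is $\mathcal O(1)$ once $\beta_p = \mathcal O(1)$. The final and most delicate step is $\beta_p = \mathcal O(p-1) = \mathcal O(1/r)$: viewing $\beta_p = f(p)-f(1)$ with $f(p) = p\sqrt{r-1}/\sqrt{r-p^2}$, a mean-value argument using $f'(p) = r\sqrt{r-1}/(r-p^2)^{3/2}$ gives $\beta_p = f'(\xi)(p-1)$ for some $\xi\in(1,p)$.

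The main obstacle is keeping this last estimate uniform in $r$: for general $p$ the quantity $q=\sqrt{r-p^2}$ can be as small as just above $1$, making $f'$ blow up, so $\beta_p$ need not be small. This is exactly where the hypothesis $p-1 = \mathcal O(1/r)$ enters. It forces $p^2 \le 1 + \mathcal O(1/r)$, hence $r-p^2 \ge (r-1)/2$ for $r$ large enough, so that
\[
f'(\xi) \le \frac{r\sqrt{r-1}}{\left((r-1)/2\right)^{3/2}} = \frac{2^{3/2}\,r}{r-1} \le 2^{5/2}
\]
uniformly over $\xi\in(1,p)$. Combining this bounded derivative with $p-1 = \mathcal O(1/r)$ yields $\beta_p = \mathcal O(p-1) = \mathcal O(1/r)$, closing the argument.
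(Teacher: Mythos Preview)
Your proof is correct and follows essentially the same route as the paper: both compute $H_p^{-1}=\frac{1}{1-r\alpha_p}(I-\alpha_pE)$, use column stochasticity of $(H^*)^\top V$ to lower bound the entries of $R$, bound $\|R\|_1$ by $\|H_p^{-1}\|_1$, and then estimate $\beta_p$. The only cosmetic differences are that the paper keeps the slightly sharper $\beta_p=(r-1)\alpha_p/(1-r\alpha_p)$ (using $\sum_{k\ne j}P_{ik}\le r-1$ instead of your cruder $s_i\le r$), and it derives $\beta_p=\mathcal O(p-1)$ by direct algebra on $\sqrt{(r-1)/(r-p^2)}$ rather than your mean-value argument; both choices are asymptotically equivalent and the conclusions are identical.
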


\noindent By Lemma~\ref{lem:definition_of_R},  the entries of each column $r_i$ of $R$ sum up to $1$, and, by Lemma~\ref{lem:lower_bound_detR}, the volume of $R$ is bounded below by $1$ up to a perturbation. Using the the Hadamard theorem $\det(R)^2\le \prod_i\|r_i\|^2$, we can write the conditions on the columns $r_i$ as follows:
\[
e^\top r_i = 1, \quad r_i\ge -\beta_p, \quad \|r_i\|_1\le 1 +2\beta_p  \quad \forall i, \qquad \prod_i\|r_i\|^2\ge   1 -\mac O\left(     \frac{r^2}{\sigma_r(W^\#)}   \frac pq \ve  \right).
\]
In the separable case $p=1$ with no perturbation $\ve = 0$, we would have that 
\[
e^\top r_i = 1, \ \ r_i\ge 0 \implies 1 = e^\top r_i \ge \|r_i\|^2\ge  \prod_i\|r_i\|^2\ge   1, 
\]
meaning that all inequalities are equalities, and, in particular, each $r_i$ must be a binary vector of norm 1, that is, a canonical basis vector $e_j$. Moreover, the condition $\det(R)^2\ge 1$ prevents having two distinct columns $r_i$  equal to the same $e_j$. 

In the presence of a perturbation ($\ve > 0$) and non-separability ($p> 1$), 
the proof is more involved. 
By Lemma~\ref{lem:lower_bound_R_sepcase}, 
the largest positive entry in $r_i$ is at most $\|R\|_1 \le 1+ 2\beta_p$, and at least
\[
  \|r_i\|_{\infty}
    \ge \frac{\|r_i\|^2}{\|r_i\|_1}
    = \frac{\prod_j\|r_j\|^2}{\|r_i\|_1\prod_{j\ne i}\|r_j\|^2}
    \ge \frac{1 -\mac O\left(     \frac{r^2}{\sigma_r(W^\#)}   \frac pq \ve  \right)}{(1+2\beta_p)^{2r-1}}
     \ge 1 -\mac O\left(     \frac{r^2}{\sigma_r(W^\#)}   \frac pq \ve  + r\beta_p\right).
\]
As a consequence, the largest positive entry in $r_i$ is close to $1$ up to a term depending on $\ve$ and $\beta_p = \mac O(p-1)$. 
Since $\|r_i\|_1\le 1 +2\beta_p$, the rest of its entries must be bounded in magnitude by a similar term depending on $\ve$ and $\beta_p$. This let us conclude that $r_i$ is indeed close to some canonical basis vector $e_j$, and again the condition $\det(R)^2\ \wt{\ge} \ 1$  precludes having two distinct columns $r_i$  equal to the same $e_j$.

This reasoning allows us to have a bound on $\|R-\Pi\|_{1,2}$ (see  Appendix~\ref{app:rest_of_proof_th_main2}), which we can plug in 
Corollary~\ref{cor:Error_estimation_given_R_is_permutation} and obtain the target bound on $\|W^\#-W^*\Pi\|_{1,2}$. The full proof with all the details can be found in Appendix~\ref{app:rest_of_proof_th_main2}.

\subsection{Comparison with robust separable NMF algorithms} \label{sec:sepNMFcompa}

Let us compare the bounds of Theorem~\ref{th:main2} in the case of separability, that is, $p=1$, to robust separable NMF algorithms specifically designed for this situation.  
In that case, Theorem~\ref{th:main2} tells us that 
    \[
\ve \leq \mathcal O\left(  \frac{\sigma_r(W^\#)}{r\sqrt r} \right) 
\quad \Rightarrow \quad 
\min_\Pi \|W^\#-W^*\Pi\|_{1,2} \leq 
 \|W^*\| \, 
 \mathcal O\left(     \frac{r\sqrt r}{\sigma_r(W^\#)}   \right) \, \ve. 
    \] 
There are two main classes of robust separable NMF algorithms: (1) greedy algorithms, and (2) convex-optimization based algorithms. 
Most greedy algorithms rely on the full-rank condition on $W^\#$, that is, $\sigma_r(W^\#) > 0$, like we do in this paper, although this is not a necessary condition for recovering the vertices of the convex full of a set of points. Among these algorithms, let us highlight two of them:  
\begin{itemize}
    \item The most famous and widely used one: the successive projection algorithm (SPA)~\cite{Araujo01} which is the workhorse algorithm and satisfies~\cite{spa2014} 
    \[
\ve \leq \mathcal \kappa(W^\#)^2 O\left( \frac{\sigma_r(W^\#)}{\sqrt{r}} \right) 
\quad \Rightarrow \quad 
\min_\Pi \|W^\#-W^*\Pi\|_{1,2} \leq \mathcal O\left(  \kappa(W^\#)^2   \right) \, \ve , 
    \]
    where $\kappa(W^\#) = \frac{\|W^\#\|}{\sigma_r(W^\#)} \geq 1$ is the condition number of $\kappa(W^\#)$.  
    The squared condition number of $W^\#$ can be relatively large, typically larger than $r$, and hence min-vol NMF will be more robust than SPA in these situations. 

   \item The most robust one: precondition SPA~\cite{mizutani2014ellipsoidal, gillis2015semidefinite} for which first robustness bounds were proved in~\cite{gillis2015semidefinite} and later improved in~\cite{mizutani2016robustness}:  
    \[
\ve \leq \mathcal O\left( \sigma_r(W^\#) \right) 
\quad \Rightarrow \quad 
\min_\Pi \|W^\#-W^*\Pi\|_{1,2} \leq \| W^\# \| \, \mathcal O \left( \frac{1}{\sigma_r(W^\#)}  \right) \, \ve .  
    \] 
    Hence preconditioned SPA is expected to be more robust than min-vol NMF, up to the factor $r\sqrt{r}$.  
\end{itemize}  
However, we do not know whether the bounds of Theorem~\ref{th:main2} are tight; this is a question for further research. 
We refer to~\cite{barbarino2025robustness} for a proof of tightness of the bounds above for SPA and preconditioned SPA, and to~\cite[p.~257]{Gil20} for a comparison of bounds of more robust separable NMF algorithms, including algorithms that do not rely on the full-rankness of $W$. %require that the minimum distance of one column of $W$ to the convex hull or cone generated by of the other columns of $W$ is bounded below, which is a necessary condition to be able to recover the columns of $W$. 
Another interesting question for further research is the following: can we adapt min-vol NMF for rank-deficient cases? 
Computing the volume of a polytope which is not a simplex is non-trivial. 
For example, \cite{leplat2019minimum} proposed to use the practical measure $\det(W^\top W+\delta I)$ for some $\delta > 0$, but a proof of recovery and robustness remains elusive.

\section{General robustness under $p$-SSC} \label{sec:generalrobust}

% Let us summarize the main results in Section~\ref{sec:initial_common_steps} that will be useful to prove Theorem~\ref{th:main2}.
% Under Assumption~\ref{ass:perturbed_pSSC},  our data matrix $X\in \f R^{m\times n}$ admits two  decompositions
% \begin{equation}
%     \label{eq:two_decompositions_X}
%       W^*(H^*)^\top + N^* = X = W^\#(H^\#)^\top + N^\#, 
% \end{equation}
% where $W^\#,W^*\in \f R^{m\times r}$ are full rank, $H^*,H^\#\in \f R^{n\times r}$ are row stochastic and $\|N^*\|,\|N^\#\|_{1,2}\le \ve$. 
% Moreover, $H^\#$ is  $p$-SSC for a $p\in [1,\sqrt{r-1})$ (or $p=1$ for $r=2$) and the pair $(W^*,H^*)$ is an optimal solution to the min-vol problem
% \begin{equation}
%     \tag{\ref{eq:apprminvol}}
%    \min_{W \in \mathbb{R}^{m \times r}, H \in \mathbb{R}^{n \times r}} \  \det(W^\top W)  \quad \text{such that} 
%     \quad \|X-WH^\top\|_{1,2}\le \ve, \  He = e,   \text{ and } H\ge 0.  
% \end{equation}

In this section, we prove our second main theorem, Theorem~\ref{th:main}. 
Recall that the two matrices $W^\#$ and $W^*$ are related through the relation $W^\# = W^*R + M$, where the matrix $R$ is introduced in Lemma~\ref{lem:definition_of_R}, 
and $M$ is a perturbation matrix such that $\|M\|_{1,2}\le 4r\ve$. 

The aim of the Theorem~\ref{th:main} is to bound $\min_{\Pi \mac P_r}\|W^\# - W^*\Pi\|_{1,2}$. 
By Corollary~\ref{cor:Error_estimation_given_R_is_permutation}, we have seen that it is enough to estimate how close $R$ is to a permutation matrix $\Pi$ since 
 \begin{align*}   
    \min_\Pi \|W^\#-W^*\Pi\|_{1,2} &\le    \|W^*\|\min_\Pi \|R-\Pi\|_{1,2}    +4r\ve.  
\end{align*}
The focus is thus on the matrix $R$. We already know that $e^\top R = e^\top$,  and, by Lemma~\ref{lem:lower_bound_detR},  a lower bound on its volume is as follows:  
\begin{equation}
    \label{eq:lower_bound_volR}
    \det(R)^2 \ge  1 -\mac O\left(     \frac{r^2}{\sigma_r(W^\#)}   \frac pq \ve  \right).
\end{equation}
Keep in mind that now $p\ge 1$, and the quantity $p-1$ can be of the order of $\sqrt r$, so for example Lemma~\ref{lem:lower_bound_R_sepcase} would only tell us that each element of $R$ is lower bounded by $-\mac O(\sqrt r)$, which is too much since we want $R$ to approach a nonnegative permutation matrix when $\ve \to 0$. 

We thus need a different approach, so we choose to follow and generalize the original proof of Theorem~\ref{thm:idenminvol} in \cite{FMHS15} showing the identifiability of the min-vol solution under the SSC. 

\subsection{Properties of the rows $\wt r_i$ of $R$}

By substituting $W^\# = W^*R + M$ into 
\[ 
       W^*(H^*)^\top + N^* = X = W^\#(H^\#)^\top + N^\#, 
\] 
and multiplying on the left by the pseudoinverse of $W^*$, we get
\[
 R (H^\#)^\top = (H^*)^\top - (W^*)^\dagger(N^\# -  N^* + M(H^\#)^\top)  
 \ge - \|(W^*)^\dagger(N^\# -  N^* + M(H^\#)^\top)  \|_{1,2}\ge -  \gamma_p \ve,
\]
\[
\implies 
(R + \gamma_p\ve ee^\top) (H^\#)^\top \ge 0, \qquad \text{ where } \qquad 
\gamma_p =  \mac O\left( \frac{ r\sqrt r}{\sigma_r(W^\#)}\frac {p^2}{q^2}\right).
\]
Here we used that $H^*\ge 0$, the properties of the $(1,2)$ induced norm in Lemma~\ref{lem:norm_inequalities}, 
$\|(W^*)^\dagger\| = \sigma_r(W^*)^{-1}$, the bound on $\sigma_r(W^*)$ in Lemma~\ref{lem:full_rank_W} 
and the definition of $M$ in Lemma~\ref{lem:definition_of_R}. Denote by $\wt r_i$ the rows of $R$. 
Since $H^\#$ is $p$-SSC, 
by Corollary~\ref{cor:def_pSSC_dual}, 
\[
\cone(R^\top + \gamma_p\ve ee^\top) \cu \cone((H^\#)^\top)^*\cu \mac C_p^*\cu \mac C_1
\implies
\wt r_i \in \mac C_p^* - \gamma_p\ve e, \quad 
e^\top \wt r_i \ge \|\wt r_i\| -2r\gamma_p\ve.
\]
We collect these properties in a first Lemma and prove it in Appendix~\ref{app:proof_upper_bound_detR}. 

\begin{lemma}
\label{lem:upper_bound_detR}
Given the matrix $R$ in Lemma~\ref{lem:definition_of_R}, if $\ve  = \mac O(\frac {\sigma_r(W^\#)}{r}\frac qp)$, then
\[
\|\wt r_i\| \le e^\top \wt r_i + 2r\gamma_p \ve, \qquad 0\le \gamma_p = \mac O\left( \frac{ r\sqrt r}{\sigma_r(W^\#)}\frac {p^2}{q^2}\right), 
\]
where $\wt r_i$ are the rows of $R$. Moreover, for every index $i$, $\wt r_i + \gamma_p\ve e \in  \mac C_p^*$.
\end{lemma}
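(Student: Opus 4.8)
The plan is to establish the two displayed inequalities in Lemma~\ref{lem:upper_bound_detR} by making rigorous the informal computation that precedes the statement. I would start from the identity $W^*(H^*)^\top + N^* = X = W^\#(H^\#)^\top + N^\#$ together with $W^\# = W^*R + M$ from Lemma~\ref{lem:definition_of_R}. Substituting the latter into the former and isolating the $(H^\#)^\top$ term gives
\[
W^* R (H^\#)^\top + M(H^\#)^\top + N^\# = W^*(H^*)^\top + N^*,
\]
so that $W^* R (H^\#)^\top = W^*(H^*)^\top - M(H^\#)^\top - N^\# + N^*$. Multiplying on the left by the pseudoinverse $(W^*)^\dagger$ and using $(W^*)^\dagger W^* = I$ (valid since $W^*$ is full rank by Lemma~\ref{lem:full_rank_W}, which is where the hypothesis $\ve = \mac O(\frac{\sigma_r(W^\#)}{r}\frac qp)$ enters) yields
\[
R(H^\#)^\top = (H^*)^\top - (W^*)^\dagger\bigl(M(H^\#)^\top + N^\# - N^*\bigr).
\]

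The next step is to bound the entrywise negativity of the right-hand side. Since $H^*\ge 0$, the term $(H^*)^\top$ is nonnegative, so $R(H^\#)^\top \ge -\|(W^*)^\dagger(M(H^\#)^\top + N^\# - N^*)\|_{1,2}$ entrywise. I would control this norm via the submultiplicativity estimates of Lemma~\ref{lem:norm_inequalities}: $\|(W^*)^\dagger\| = \sigma_r(W^*)^{-1}$, the bound $\|M\|_{1,2}\le 4r\ve$ from Lemma~\ref{lem:definition_of_R}, $\|(H^\#)^\top\|_1 = 1$ (row stochasticity), and $\|N^\#\|_{1,2},\|N^*\|_{1,2}\le\ve$. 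Plugging in the lower bound on $\sigma_r(W^*)$ from Lemma~\ref{lem:full_rank_W}, namely $\sigma_r(W^*)\gtrsim \frac{\sigma_r(W^\#)}{\sqrt{r(r-1)}}\frac qp$, produces the constant $\gamma_p = \mac O\bigl(\frac{r\sqrt r}{\sigma_r(W^\#)}\frac{p^2}{q^2}\bigr)$ after collecting the factors $r$ (from $\|M\|_{1,2}$) against the reciprocal of $\sigma_r(W^*)$. This establishes $R(H^\#)^\top \ge -\gamma_p\ve\, ee^\top$, equivalently $(R + \gamma_p\ve\, ee^\top)(H^\#)^\top \ge 0$.

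Finally, I would translate this nonnegativity statement into the claimed membership and norm bound using duality. The inequality $(R+\gamma_p\ve\, ee^\top)(H^\#)^\top\ge 0$ says precisely that every row of $R+\gamma_p\ve\, ee^\top$, viewed as a column vector, lies in $\cone^*(H^\#{}^\top)$ by the definition of the dual cone. By Corollary~\ref{cor:def_pSSC_dual}, the $p$-SSC of $H^\#$ gives $\cone^*((H^\#)^\top)\cu \mac C_p^*$, so $\wt r_i + \gamma_p\ve\, e \in \mac C_p^*$ for each $i$, which is the last assertion. Since $\mac C_p^* = \mac S_q + \R_+^r \cu \mac S_1 = \mac C_1$ (using $q\ge 1$ and Lemma~\ref{lem:Cp_Sp_dual_cones_and_inclusions}), the vector $\wt r_i + \gamma_p\ve\, e$ satisfies the $\mac S_1$ defining inequality $e^\top(\wt r_i+\gamma_p\ve\, e)\ge \|\wt r_i + \gamma_p\ve\, e\|$; expanding $e^\top(\gamma_p\ve\, e) = r\gamma_p\ve$ and applying the triangle inequality $\|\wt r_i + \gamma_p\ve\, e\|\ge \|\wt r_i\| - \gamma_p\ve\|e\| = \|\wt r_i\| - \sqrt r\,\gamma_p\ve$ yields $e^\top\wt r_i \ge \|\wt r_i\| - (r+\sqrt r)\gamma_p\ve$, giving the first displayed bound (up to absorbing $r+\sqrt r$ into the $2r$ factor). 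The main obstacle I anticipate is not any single step but the careful bookkeeping of the constants and powers of $r$ when combining the norm inequalities with the lower bound on $\sigma_r(W^*)$, since the factor $\frac{p^2}{q^2}$ in $\gamma_p$ arises from squaring the $\frac pq$ in the $\sigma_r(W^*)$ bound against the extra $\frac pq$ hidden in how $\|M\|_{1,2}$ interacts with the feasibility constraints; I would verify this dimensional accounting explicitly in the appendix.
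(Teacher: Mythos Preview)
Your proposal is correct and follows essentially the same route as the paper's proof: substitute $W^\# = W^*R + M$ into the two decompositions of $X$, apply $(W^*)^\dagger$ (legitimate since Lemma~\ref{lem:full_rank_W} gives full rank under the stated bound on $\ve$), bound the perturbation by $\|(W^*)^\dagger\|$ times a $(1,2)$-norm controlled via $\|H_p^{-1}\|_1$, and then invoke the dual characterization of $p$-SSC together with the defining inequality of $\mac S_1$. Your grouping $M(H^\#)^\top + N^\# - N^*$ is algebraically identical to the paper's $(N^*-N^\#)(I - P(H^\#)^\top)$ once one recalls $M = (N^*-N^\#)VH_p^{-1}$; the only caveat is that to recover the stated order $\gamma_p = \mac O\bigl(\tfrac{r\sqrt r}{\sigma_r(W^\#)}\tfrac{p^2}{q^2}\bigr)$ you must use the sharper estimate $\|M\|_{1,2}\le 2\ve(2\sqrt{r-1}\,p/q - 1)$ from Lemma~\ref{lem:definition_of_R} rather than the cruder $4r\ve$, and note that $\mac C_p^*\subseteq \mac S_1$ (not $\mac C_1$) is the inclusion actually needed.
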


We now use the inequality between the arithmetic mean (AM) and the geometrical mean (GM) 
on a set of scalars $\{z_i\}_{i=1}^r$, that is,   
$\prod_i z_i \leq \Big(\frac{1}{r}\sum_i z_i\Big)^r$, 
for two different sets: 
\begin{align*}
    z_i = e^\top \wt r_i   + 2r\gamma_p \ve & & \to  & &\prod_i \|\wt r_i\|^2  \le \prod_i( e^\top \wt r_i   + 2r\gamma_p \ve)^2  \le\left( \sum_i \frac{e^\top \wt r_i + 2r\gamma_p\ve }r\right)^{2r}, \\
        z_i = \|\wt r_i\| & & \to  & &\prod_i \|\wt r_i\|^2  \le \left( \frac{\sum_i\|\wt r_i\| }{r}  \right)^{2r} \le \left( \sum_i \frac{e^\top \wt r_i + 2r\gamma_p\ve }r\right)^{2r}. 
\end{align*}
By the Hadamard theorem, $\det(R)^2 \le \prod_i \|\wt r_i\|^2$, and together with  \eqref{eq:lower_bound_volR}, this implies that $\det(R)^2$ is lower bounded by $1$ minus a perturbation. 
On the other hand, the sum of $\sum_i e^\top \wt r_i$ is just the sum of all elements in $R$, but since $e^\top r_j = 1$, $\sum_i e^\top \wt r_i = r$. As a consequence, the AMs above are upper bounded by $1$ plus a perturbation. In equation, this means 
\[
1 \, \; \wt{\le} \, \; \det(R)^2 \; \leq \; \text{GM}(z) \; \leq \; \text{AM}(z)  \, \; \wt{\le} \, \; 1. 
\] 
%\ngc{What do you want to say exactly?} 
%\raz{Maybe it's easier to write like this: $1 \wt{\le} \det(R)^2 \leq GM(z_i) \leq AM(z_i)  \wt{\le} 1 $. Then using Lemma~\ref{lem:AM-GM},... }
By Lemma~\ref{lem:AM-GM}, we conclude that the elements $z_i$'s, in both cases, are close to each other, and in particular close to their (arithmetic or geometric) mean. 
This is how we show that both $\|\wt r_i\|$ and $e^\top \wt r_i$ are close to $1$. Moreover, again due to  \eqref{eq:lower_bound_volR}, two distinct $\wt r_i$'s cannot be too close to each other, so we can also lower bound the distance $\|\wt r_i-\wt r_j\|$ as $1$ minus a perturbation. 

These properties of $\wt r_i$ are summarized in the following result and proven in Appendix~\ref{app:proof_of_R_close_to_orthogonal}.

\begin{lemma}
\label{lem:R_is_close_to_orthogonal}
Let $R$ be the matrix in Lemma~\ref{lem:definition_of_R}, and denote $\wt r_i$ the $i$-th row of $R$. 
If $\ve  = \mac O(\frac {\sigma_r(W^\#)}{r^{9/2}}\frac {q^2}{p^2})$, then
\[
\max\{  |  \|\wt r_i\| -1 |,   |e^\top \wt r_i -1 |  \} = \mac O\left( 
    \sqrt{\frac{ r^{7/2}}{\sigma_r(W^\#)}\frac {p^2}{q^2}\ve}
    \right).
\]
Moreover, 
  \[
    \min_{i\ne j} \|\wt r_i-\wt r_j\| \ge  1 - \mac O\lp   \sqrt{\frac{ r^{7/2}}{\sigma_r(W^\#)}\frac {p^2}{q^2}\ve}\rp.
    \]
\end{lemma}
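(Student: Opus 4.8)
The plan is to convert the two facts already in hand---the Hadamard bound $\det(R)^2\le\prod_i\|\wt r_i\|^2$ together with the volume lower bound \eqref{eq:lower_bound_volR}, and the rowwise inequality $\|\wt r_i\|\le e^\top\wt r_i+2r\gamma_p\ve$ of Lemma~\ref{lem:upper_bound_detR}---into a squeeze between arithmetic and geometric means of two carefully chosen families of positive scalars. Write $s_i:=e^\top\wt r_i$ and $\mu:=2r\gamma_p\ve$, and set $z_i:=s_i+\mu$ and $w_i:=\|\wt r_i\|$. Since $e^\top R=e^\top$, the row sums obey $\sum_i s_i=r$, so the arithmetic mean of the $z_i$ is exactly $1+\mu$; and $w_i\le z_i$ by Lemma~\ref{lem:upper_bound_detR} forces the arithmetic mean of the $w_i$ to be at most $1+\mu$ as well. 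On the other hand $\prod_i z_i^2\ge\prod_i w_i^2=\prod_i\|\wt r_i\|^2\ge\det(R)^2\ge 1-\mac O(\frac{r^2}{\sigma_r(W^\#)}\frac pq\ve)$, so the geometric means of both families are at least $1-\mac O(\frac{r}{\sigma_r(W^\#)}\frac pq\ve)$, i.e.\ within a small gap of their arithmetic means.

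First I would apply the quantitative reverse AM--GM inequality (Lemma~\ref{lem:AM-GM}) to each family. Its content is that a small arithmetic--geometric gap $A-G$ forces $\sum_i(z_i-A)^2=\mac O(rA(A-G))$, hence a uniform deviation $\max_i|z_i-A|=\mac O(\sqrt{r(A-G)})$ once $A\approx 1$. For both families the gap is $A-G=\mac O(\mu+\frac1r\frac{r^2}{\sigma_r(W^\#)}\frac pq\ve)$, and the extra factor $r$ from the lemma combines with $\mu=2r\gamma_p\ve=\mac O(\frac{r^{5/2}}{\sigma_r(W^\#)}\frac{p^2}{q^2}\ve)$ to produce precisely $r\mu=\mac O(\frac{r^{7/2}}{\sigma_r(W^\#)}\frac{p^2}{q^2}\ve)$ under the square root, the remaining $\delta_1$-contribution being of strictly lower order in $r$. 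Since the hypothesis $\ve=\mac O(\frac{\sigma_r(W^\#)}{r^{9/2}}\frac{q^2}{p^2})$ makes $\mu=\mac O(r^{-2})$ negligible against this deviation, tracking constants yields $|s_i-1|$ and $|\,\|\wt r_i\|-1\,|$ both $\mac O(\sqrt{\frac{r^{7/2}}{\sigma_r(W^\#)}\frac{p^2}{q^2}\ve})$, which is the first claimed bound.

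For the separation bound I would exploit that near-equality in Hadamard forces near-orthogonality. Let $G=RR^\top$ be the Gram matrix, with $G_{ii}=\|\wt r_i\|^2$ and $\det G=\det(R)^2$, and normalize to the correlation matrix $\wt G=D^{-1/2}GD^{-1/2}$, $D=\mathrm{diag}(G_{ii})$. Then $\det\wt G=\det(R)^2/\prod_i\|\wt r_i\|^2\ge(1-\delta_1)/(1+\mu)^{2r}\ge 1-\mac O(r\mu)$, since both numerator and denominator are now pinned near $1$. Applying Fischer's inequality to the $2\times2$ principal block indexed by $\{i,j\}$ and its complement (the latter a correlation matrix, hence of determinant $\le 1$) gives $\det\wt G\le 1-\wt G_{ij}^2$, whence $\wt G_{ij}^2=\mac O(r\mu)$ and therefore $\langle\wt r_i,\wt r_j\rangle=\sqrt{G_{ii}G_{jj}}\,\wt G_{ij}=\mac O(\sqrt{\frac{r^{7/2}}{\sigma_r(W^\#)}\frac{p^2}{q^2}\ve})$. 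Expanding $\|\wt r_i-\wt r_j\|^2=\|\wt r_i\|^2+\|\wt r_j\|^2-2\langle\wt r_i,\wt r_j\rangle\ge 2-\mac O(\sqrt{\cdots})$ and taking square roots (using $\sqrt{2-\eta}\ge 1-\eta$ for small $\eta$) gives the stated $\|\wt r_i-\wt r_j\|\ge 1-\mac O(\sqrt{\frac{r^{7/2}}{\sigma_r(W^\#)}\frac{p^2}{q^2}\ve})$.

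The main obstacle is the order bookkeeping: one must check that the factor $r$ supplied by the quantitative AM--GM estimate lands on the $\mu=2r\gamma_p\ve$ term rather than on the strictly smaller $\delta_1$ term, so that the dominant contribution under every square root is exactly $r\mu=\mac O(r^{7/2}\sigma_r(W^\#)^{-1}(p/q)^2\ve)$, and that the hypothesis on $\ve$ keeps $r\mu$ and the AM--GM gap below the constant threshold where the reverse AM--GM estimate and the linearizations $(1+\mu)^{2r}\approx 1+2r\mu$ and $\sqrt{2-\eta}\ge 1-\eta$ are valid. Everything else is a routine propagation of these two scalar inequalities.
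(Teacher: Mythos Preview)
Your AM--GM squeeze for the first claim is exactly the paper's approach: both sandwich the geometric and arithmetic means of $\{e^\top\wt r_i+2r\gamma_p\ve\}$ and $\{\|\wt r_i\|\}$ between $\det(R)^{1/r}$ and $1+2r\gamma_p\ve$ and invoke Lemma~\ref{lem:AM-GM}. One correction: your restatement of Lemma~\ref{lem:AM-GM} as ``$\sum_i(z_i-A)^2=\mac O(rA(A-G))$'' is false in general (take $x_1=n$ and $x_i=0$ for $i\ge 2$: then $A=1$, $G=0$, left side $\Theta(n^2)$, right side $\Theta(n)$). What the lemma actually gives is $x_1-x_n\le(\sqrt{x_1}+\sqrt{x_n})\sqrt{r(A-G)}$, which still yields your conclusion $\max_i|z_i-A|=\mac O(\sqrt{r(A-G)})$ once $x_1=\mac O(1)$; the paper obtains this via a two-step bootstrap (first $e^\top\wt r_i\le r$ from $Re\ge 0$ gives $x_1=\mac O(r)$, then one pass of the lemma refines to $x_1=\mac O(1)$), or one can solve the quadratic $u^2-2u\sqrt{r(A-G)}\le A$ in $u=\sqrt{x_1}$ directly. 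Your order accounting ($r\mu$ dominating $\delta_1$) is correct.

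For the separation bound you take a genuinely different route. The paper writes $R^\top=QT$ (QR), notes $\|\wt r_i-\wt r_j\|=\|t_i-t_j\|\ge|t_{jj}|$ for $i<j$, and reuses the same AM--GM squeeze on $\{|t_{ii}|\}$ to force $|t_{jj}|\ge 1-\mac O(\sqrt{\cdots})$. Your Fischer-inequality argument on the correlation matrix is equally valid and in fact delivers the stronger $\|\wt r_i-\wt r_j\|\ge\sqrt 2-\mac O(\sqrt{\cdots})$, since it bounds each off-diagonal $\wt G_{ij}$ directly rather than only a triangular entry. The QR approach has the advantage of recycling identical machinery from the first part; yours trades that economy for a sharper constant and a more transparent statement that near-Hadamard implies near-orthogonal rows.
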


In the next section, we provide the geometric intuition to bound the distance of the $\wt r_i$'s to the unit vectors, $e_j$'s. 
This will allow us to conclude the proof of Theorem~\ref{th:main}.

% The intuition for the next step \ngc{What next step?} is purely geometrical, since the above conditions restrict the vector $r_i$ to belong to some small regions that are neighbourhoods of the canonical basis vectors $e_i$. This \ngc{What is?} will be enough to conclude the proof of the theorem. \ngc{I am not sure I understand what you mean, do you mean: 
% The conditions above are enough to conclude the proof of Theorem~\ref{th:main}, since the above conditions restrict the vectors $r_i$'s to belong to some small regions that are neighbourhoods of the canonical basis vectors $e_j$'s.  
% However, before sketching the end of the proof of Theorem~\ref{th:main} in Section~\ref{sec:sketchmain}, we provide in the next section the geometric intuition behind these conditions. }

\subsection{Geometric intuition to bound the distance of the $\wt r_i$'s from the unit vectors}

Lemma~\ref{lem:upper_bound_detR} and Lemma~\ref{lem:R_is_close_to_orthogonal} give a quite complete description of the properties of the rows $\wt r_i$ of $R$. 
In particular, we proved that  there exists a parameter $\vf_p\ge 0$ such that \[
\max\{  |  \|\wt r_k\| -1 |,   |e^\top \wt r_k -1 |  \}\le \vf_p \sqrt \ve, 
\qquad  \min_{i\ne j} \|\wt r_i-\wt r_j\| \ge  1 - \vf_p\sqrt\ve,\qquad 
 \vf_p= \mac O\left( 
    \sqrt{\frac{ r^{7/2}}{\sigma_r(W^\#)}\frac {p^2}{q^2}}
    \right), 
    \]
and that all $\wt r_i$ approximately belong to the dual space $\mac C_p^*$, that is, 
\[
\wt r_k \in  \mac C_p^*-\gamma_p\ve e, \qquad \gamma_p = \mac O\left( \frac{ r\sqrt r}{\sigma_r(W^\#)}\frac {p^2}{q^2}\right).
\]
Let us fix an index $k$ and let $\beta:= e^\top \wt r_k$, 
the above conditions can be visualized on the space  $$\mac H_\beta := \{z\in \f R^r \ | \ e^\top z = \beta \}.$$ 
For $r=3$, Figure~\ref{fig:p-SSC_identifiability} illustrates these bounds, where
\begin{itemize}
    \item The row $\wt r_k$ is outside the ball $\mac B: = \{z\in \f R^r\ | \ \| z\| <1 - \vf_p\sqrt\ve\}\cap \mac H_\beta$, 
    \item The row $\wt r_k$ is inside the set $\mac P := (\mac C_p^*-\gamma_p\ve e)\cap \mac H_\beta$.
\end{itemize} 
In Section~\ref{sec:geom_int_dual_pSSC}, we showed that on the space $\mac E = \mac H_1 = \{x\ |\ e^\top x = 1 \}$, the set $\mac C_p^*$ is the convex hull of the ball $\mac Q_q:= \mac S_q \cap \mac E$ and the canonical basis vectors $e_i$.  
Here, $\mac C_p^*$ is translated by $\gamma_p\ve e$ and we are looking at the space $\mac H_\beta$ where $\beta$ is close to $1$, obtaining the set $\mac P$. As a consequence $\mac P$ has an analogous description: 
\[ 
\mac C^*_p \cap \mac E =  \conv\{\mac Q_q\cup \{ e_1,\dots, e_r\}\} \implies \mac P= \conv\{\wt{\mac S} \cup \{\wt e_1,\dots,\wt e_r\}\}.
\]
where $\wt {\mac S}$ is the ball
$$\wt {\mac S} := (\mac S_q -\gamma_p\ve e)\cap \mac H_\beta\cu (\mac C_p^*-\gamma_p\ve e)\cap \mac H_\beta, $$
and $\wt e_j$ are the points 
\[
\wt e_j := \frac{\beta}{1-r\gamma_p\ve} (e_j - \gamma_p\ve e)\in \mac H_\beta. 
\] 
This is formally proven in Lemma \ref{lem:Pcharacterization}. 
%\raz{Maybe it should be mentioned that this follows because it's exactly the same geometry, just every point shifted by $-\gamma_p\ve e$ and is formally proven in Lemma \ref{lem:Pcharacterization}.}
\begin{figure}
    \centering
    \includegraphics[width=0.9\linewidth]{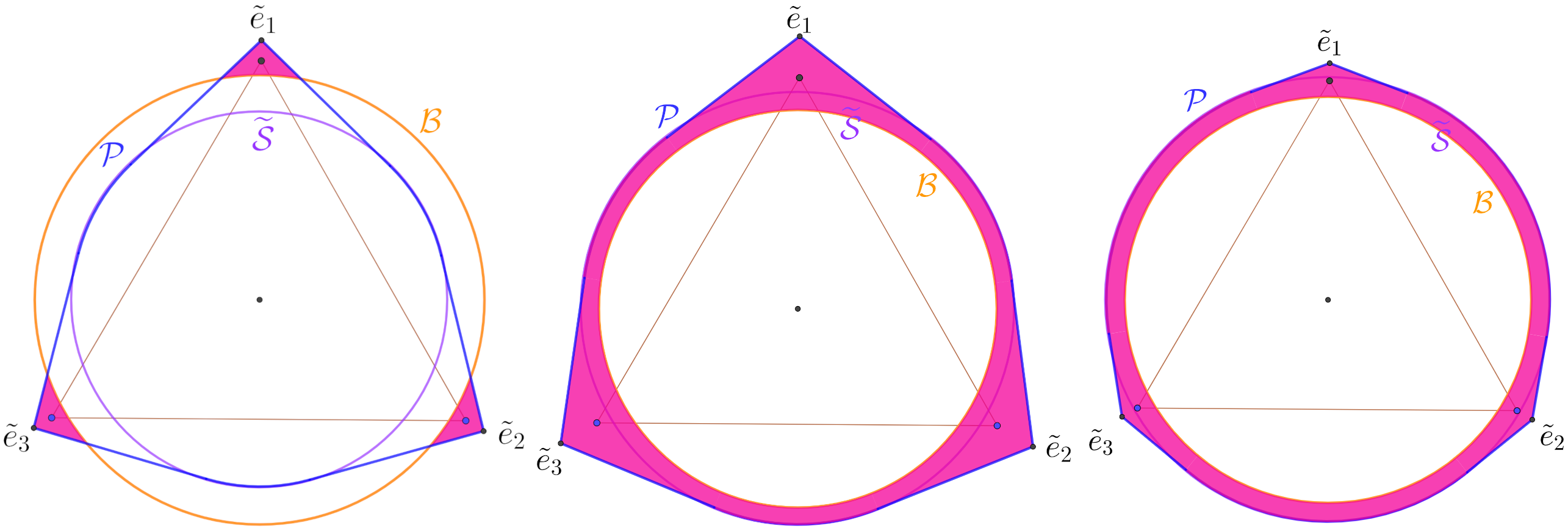}
    \caption{Visualization of the sets $\mac P$, $\mac B$ and $\wt{\mac S}$; the pink set is $\mac P\setminus\mac B$. 
    On the left, the favourable case where the rows of $R$ belong to disjoint regions around the vectors $\tilde e_j$'s which are close to the unit vectors $e_j$'s. 
    On the center, the effect of increasing the level of perturbation $\ve$. On the right, the effect of increasing the value of $p$ to almost $\sqrt{r-1}$. 
    Increasing $\ve$ or $p$ too much makes $R$ potentially far from a permutation matrix, since the rows of $R$ can be anywhere in the pink region.}   
    \label{fig:p-SSC_identifiability}
\end{figure} 
The row $\wt r_k$ thus lies in the set $\mac P\cap \mac B^c =  \mac P\setminus\mac B$. 
When there is no perturbation, that is, $\ve =0$, the set $ \mac P\setminus\mac B$ is exactly equal to $\{e_1,\dots,e_r\}$, meaning that $\wt r_k$ coincides with one of the canonical basis vectors. 
When increasing $\ve$, $\mac P$ gets larger and $\mac B$ gets smaller, thus allowing $\wt r_k$ to distance itself from the canonical basis vectors. 

A similar behaviour occurs when $p\to\sqrt{r-1}$, that is, when we get close to SSC. In fact, in this case, the dual $\mac C_p^*$ gets closer to the ball $\mac S_1$, and the projection $\wt{\mac S}$ will strictly contain $\mac B$ for every level of perturbation $\ve \ge 0$. \\

In cases when $\ve$ is too large and/or  $p\to\sqrt{r-1}$,  
we end up in the situations illustrated by the center and right images on  Figure~\ref{fig:p-SSC_identifiability}. The purple set is $ \mac P\setminus\mac B$, so the rows $\wt r_i$ can in theory be far from all $\wt e_j$. 

What can be proved is that, for small enough $\ve >0$ depending on $p$, we have the containment $\wt{\mac S}\cu \mac B$ that avoids the situation depicted in the center and right images of Figure~\ref{fig:p-SSC_identifiability}. That is,$\mac P \setminus B$ can be decomposed into disjoint regions around the vectors $\wt e_j$, each of the rows of $R$ belongs to one of these regions and no two of them can belong to the same region. 
In higher dimension, that is, for $r> 3$, we also need that all $\wt e_{i,j}:= (\wt e_i + \wt e_j)/2$  belong to $\mac B$. The following result reports the correct upper bound on $\ve$. 

\begin{lemma}
    \label{lem:S_eij_are_in_B2}
Given Assumption~\ref{ass:perturbed_pSSC}, and $\wt{\mac S}$, $\mac B$ and $\wt e_{i,j} = (\wt e_{i}+\wt e_{j})/2$ as defined in this section,   
\[
 \sqrt\ve = \mac O\lp 
  \frac{\min\{q,\sqrt 2\} -1}{\sqrt{\frac{ r^{7/2}}{\sigma_r(W^\#)}\frac {p^2}{q^2}}} 
  \rp 
\implies \conv(\{\wt e_{i,j}\}_{i\ne j}, \mac {\wt S}) \cu \mac B.
\]
% and in particular, if $\beta = e^\top r_k$, then \[
 % (1-\vf_p\sqrt \ve)^2 -  \beta^2/r\ge ( \beta + \gamma_pr\ve)^2 \left( \frac{1}{\min\{q^2,2\}} -   \frac 1r\right).\]
\end{lemma}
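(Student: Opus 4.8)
The plan is to collapse the set containment into two scalar inequalities by exploiting a concentric-ball structure. First I would show that, inside the affine hyperplane $\mac H_\beta$, both $\mac B$ and $\wt{\mac S}$ are Euclidean balls sharing the \emph{same} center $\frac{\beta}{r}e$, the orthogonal projection of the origin onto $\mac H_\beta$. Indeed, on $\mac H_\beta$ the linear form $e^\top z=\beta$ is constant, so $\|z\|$ is minimized at $\frac{\beta}{r}e$ and $\mac B$ is the ball of radius $\rho_{\mac B}=\sqrt{(1-\vf_p\sqrt\ve)^2-\beta^2/r}$ about it. Rewriting $\wt{\mac S}=\{z\in\mac H_\beta\mid \|z+\gamma_p\ve e\|\le(\beta+r\gamma_p\ve)/q\}$ via $\mac S_q\cap\mac H_\beta$ (as in Lemma~\ref{lem:p-hypersphere}), the point of $\mac H_\beta$ minimizing $\|z+\gamma_p\ve e\|$ is again $\frac{\beta}{r}e$, so $\wt{\mac S}$ is the ball of radius $\rho_{\wt{\mac S}}=(\beta+r\gamma_p\ve)\sqrt{1/q^2-1/r}$ about the very same center. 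Hence $\wt{\mac S}\cu\mac B$ reduces to the scalar inequality $\rho_{\wt{\mac S}}\le\rho_{\mac B}$. Since $\mac B$ is convex, the full claim then follows once I also check $\wt e_{i,j}\in\mac B$ for every $i\neq j$: a convex hull of finitely many points and a convex set, all contained in the convex set $\mac B$, stays inside $\mac B$.

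Next I would perform the two elementary norm computations and read off the unperturbed slacks. Squaring, the radius inequality is
\[
(\beta+r\gamma_p\ve)^2\lp\tfrac1{q^2}-\tfrac1r\rp\le(1-\vf_p\sqrt\ve)^2-\tfrac{\beta^2}{r},
\]
which at $\ve=0,\ \beta=1$ degenerates to $1/q^2\le1$, with slack $1-1/q^2>0$ exactly because $q=\sqrt{r-p^2}>1$ under Assumption~\ref{ass:perturbed_pSSC}. For the midpoints, using $\wt e_j=\frac{\beta}{1-r\gamma_p\ve}(e_j-\gamma_p\ve e)$ and $\|(e_i+e_j)/2\|^2=1/2$, a direct expansion yields $\|\wt e_{i,j}-\frac{\beta}{r}e\|^2=\frac12-\frac1r$ at $\ve=0$, to be compared with $\rho_{\mac B}^2=1-\frac1r$; here the slack is the absolute constant $\frac12$, independent of $q$.

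The heart of the argument is then to balance the perturbations against these two slacks. I would introduce a single master parameter $\delta:=\vf_p\sqrt\ve$ and record from Lemma~\ref{lem:R_is_close_to_orthogonal} that $|\beta-1|\le\delta$, together with $r\gamma_p\ve=\mac O(\delta^2)$ — the latter by comparing the orders of $\gamma_p$ in Lemma~\ref{lem:upper_bound_detR} and $\vf_p$ in Lemma~\ref{lem:R_is_close_to_orthogonal}, which give $r\gamma_p\ve/\delta^2=\mac O(1/r)$. Substituting $\beta=1+\mac O(\delta)$ and $r\gamma_p\ve=\mac O(\delta^2)$, the radius inequality reduces to $q^2\ge1+\mac O(\delta)$, hence holds whenever $\delta=\mac O(q-1)$; the midpoint inequality, whose slack $\frac12$ is perturbed only by $\mac O(\delta)$ terms (the $1/r$ contributions cancel, so no hidden $r$-dependence remains), holds whenever $\delta=\mac O(1)$. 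The two regimes cross exactly where $1-1/q^2=\frac12$, i.e. $q=\sqrt2$: for $q<\sqrt2$ the ball condition binds and needs $\delta\lesssim q-1$, while for $q\ge\sqrt2$ the midpoint condition binds and needs only $\delta\lesssim\sqrt2-1$, the ball condition being automatic. Both therefore hold simultaneously precisely when $\delta=\vf_p\sqrt\ve=\mac O\lp\min\{q,\sqrt2\}-1\rp$, which, since $\vf_p=\mac O\lp\sqrt{r^{7/2}p^2/(\sigma_r(W^\#)q^2)}\rp$, is the stated bound on $\sqrt\ve$; the conclusion then follows by convexity of $\mac B$.

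I expect the main obstacle to be not any single computation but the bookkeeping that unifies the two different slacks under one hypothesis on $\ve$: one must verify that the $\min\{q,\sqrt2\}$ crossover genuinely makes a single bound on $\ve$ suffice for both the concentric-ball containment and all $\binom{r}{2}$ midpoint conditions at once, and that every perturbation term — $|\beta-1|$, $r\gamma_p\ve$, and the drift of $c=\beta/(1-r\gamma_p\ve)$ away from $1$ — is uniformly $\mac O(\delta)$ with the $1/r$ pieces cancelling, so that the constant midpoint slack $\frac12$ is not secretly eroded by a factor growing in $r$.
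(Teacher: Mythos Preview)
Your proposal is correct and follows essentially the same approach as the paper's proof: both reduce the containment to the concentric-ball structure of $\mac B$ and $\wt{\mac S}$ about $\beta e/r$ (the paper's Lemma~\ref{lem:S_B_and_eij}), compute the squared distance $\|\wt e_{i,j}-\beta e/r\|^2=(\beta+r\gamma_p\ve)^2(1/2-1/r)$, and unify the two resulting scalar inequalities via $1/\min\{q^2,2\}$ before substituting the worst case $\beta=1+\vf_p\sqrt\ve$ and checking that the leading perturbation $\mac O(\vf_p\sqrt\ve)$ is dominated by $\min\{q^2,2\}-1\asymp\min\{q,\sqrt2\}-1$. Your framing via the master parameter $\delta=\vf_p\sqrt\ve$ and the explicit crossover at $q=\sqrt2$ is exactly the mechanism the paper uses, just stated more heuristically.
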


This is enough to show that for small enough $\ve> 0$, the set  $\mac P\setminus\mac B$ is the disjoint union of small regions around the $\wt e_j$'s, coloured in pink in the left image on Figure~\ref{fig:p-SSC_identifiability}. 

As mentioned above, for too large $\ve$ and/or $p$, 
$\wt{\mac S} \not\cu \mac B$, and $\wt r_k$ can be far from any $e_j$. 
This visualization holds in higher dimensions only for $r-1 > p^2 \ge  r-2$, otherwise we also need that every $\wt e_{i,j}$ is inside $\mac B$.
The upper bound on $\ve$ provided by Lemma~\ref{lem:S_eij_are_in_B2} thus guarantees that  $\mac P\setminus\mac B$  can be written as the disjoint union of small regions $\mac P_i$ around $\wt e_i$. 

Since the row $\wt r_k$ must fall into one of the above mentioned disjoint regions, say $\mac P_j$,  the diameter of $\mac P_j$ is an upper bound over the distance $\|\wt r_k-\wt e_j\|$ and it can be computed with classical Euclidean geometry.  
\begin{lemma}
\label{lem:distance_rk_ei_sscp2}  Given  the above notation, 
\[
   \ve  = \mac O\lp 
\lp \min\{q,\sqrt 2\} -1\rp^2
\frac {\sigma_r(W^\#)}{r^{9/2}}\frac {q^2}{p^2}\rp
\implies 
\min_j\|\wt r_k-\wt e_j\|^2 =  \frac { \ve}{\min\{q^2-1,1\}}
    \mac O\left(  
   \frac{ r^3\sqrt r}{\sigma_r(W^\#)}\frac {p^2}{q^2}
      \right)
  .
    \]
\end{lemma}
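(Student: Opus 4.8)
The plan is to exploit the geometric picture built up in this section. By Lemma~\ref{lem:upper_bound_detR} and Lemma~\ref{lem:R_is_close_to_orthogonal}, the fixed row satisfies $\beta := e^\top \wt r_k = 1 + \mac O(\vf_p\sqrt\ve)$, $\wt r_k + \gamma_p\ve e \in \mac C_p^*$, and $\|\wt r_k\| \ge 1 - \vf_p\sqrt\ve$, so that $\wt r_k$ lies in $\mac P\setminus\mac B$. Under the stated bound on $\ve$ the hypothesis of Lemma~\ref{lem:S_eij_are_in_B2} holds, giving $\conv(\{\wt e_{i,j}\}_{i\ne j},\wt{\mac S}) \cu \mac B$. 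This places the whole central ``core'' of $\mac P$ inside $\mac B$, so that $\mac P\setminus\mac B$ reduces to $r$ pairwise disjoint regions $\mac P_1,\dots,\mac P_r$, one around each protruding vertex $\wt e_j$, as in the left panel of Figure~\ref{fig:p-SSC_identifiability}. Hence $\wt r_k$ belongs to exactly one region $\mac P_j$, and it suffices to bound $\|\wt r_k-\wt e_j\|$ by the largest distance from $\wt e_j$ to a point of $\mac P_j$.

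I would then reduce the estimate to planar Euclidean geometry in the plane through $\wt e_j$, the centre $c:=\frac{\beta}{r}e$ of $\mac B$ in $\mac H_\beta$, and $\wt r_k$. There $\mac B$ is a disc of radius $\rho_B=\sqrt{(1-\vf_p\sqrt\ve)^2-\beta^2/r}$ centred at $c$, while by Lemma~\ref{lem:Pcharacterization} the region $\mac P_j$ is contained in the cone with apex $\wt e_j$, axis towards $c$, and half-angle $\theta$ equal to the opening of the tangent cone drawn from $\wt e_j$ to the ball $\wt{\mac S}$ (concentric with $\mac B$ up to $\mac O(\gamma_p\ve)$). From $\sin\theta=\rho_{\wt{\mac S}}/\|\wt e_j-c\|$, with $\|\wt e_j-c\|^2=1-\frac1r+\mac O(\vf_p\sqrt\ve)$ and $\rho_{\wt{\mac S}}^2=\frac1{q^2}-\frac1r+\mac O(\vf_p\sqrt\ve)$, a direct computation gives
\[
\cos^2\theta=\frac{r(q^2-1)}{q^2(r-1)}+\mac O(\vf_p\sqrt\ve),
\]
which is exactly where the factor $q^2-1$ enters. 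As every point of $\mac P_j$ lies inside this cone and outside $\mac B$, the law of cosines in the triangle $(\wt e_j,c,P)$, with $P$ the first intersection of a cone generator with $\partial\mac B$, bounds its distance $\ell$ to $\wt e_j$ by
\[
\ell=\|\wt e_j-c\|\cos\theta-\sqrt{\rho_B^2-\|\wt e_j-c\|^2\sin^2\theta}.
\]

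Writing $\rho_B^2=\|\wt e_j-c\|^2-\delta$ with $\delta=\mac O(\vf_p\sqrt\ve)$ measuring how far $\wt e_j$ protrudes beyond $\partial\mac B$, a first-order expansion of the square root gives
\[
\ell^2=\mac O\lp\frac{\delta^2}{\|\wt e_j-c\|^2\cos^2\theta}\rp=\mac O\lp\frac{q^2}{q^2-1}\,\vf_p^2\,\ve\rp,
\]
using $\|\wt e_j-c\|^2\cos^2\theta=\frac{q^2-1}{q^2}+\mac O(\vf_p\sqrt\ve)$. Substituting $\vf_p^2=\mac O\lp\frac{r^{7/2}}{\sigma_r(W^\#)}\frac{p^2}{q^2}\rp$ from Lemma~\ref{lem:R_is_close_to_orthogonal} and recalling $r^3\sqrt r=r^{7/2}$ then yields the announced estimate for $\min_j\|\wt r_k-\wt e_j\|^2\le\ell^2$. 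The factor $\min\{q^2-1,1\}$ in the denominator appears because, once $q>\sqrt2$, the widest face bounding $\mac P_j$ is no longer the tangent cone to $\wt{\mac S}$ but the cone spanned by the edges towards the neighbouring vertices (equivalently, the midpoints $\wt e_{i,j}$); this is the same dichotomy already encoded by $\min\{q,\sqrt2\}$ in Lemma~\ref{lem:S_eij_are_in_B2}, and the two regimes must be matched to obtain a uniform bound.

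The hard part is the planar computation of $\ell$ together with the regime-dependent identification of the face bounding each region: for $1<q\le\sqrt2$ the tangent cone to $\wt{\mac S}$ is the widest, whereas for $q>\sqrt2$ the cone determined by the midpoints $\wt e_{i,j}$ dominates, and reconciling the two cases is what produces the uniform factor $\min\{q^2-1,1\}$. A secondary but unavoidable technical nuisance is controlling the lower-order $\mac O(\vf_p\sqrt\ve)$ and $\mac O(\gamma_p\ve)$ drifts of $c$, $\beta$, $\rho_B$, $\theta$ and of the vertices $\wt e_j$, and verifying that the hypothesis $\ve=\mac O\lp(\min\{q,\sqrt2\}-1)^2\frac{\sigma_r(W^\#)}{r^{9/2}}\frac{q^2}{p^2}\rp$ indeed forces $\delta<\|\wt e_j-c\|^2\cos^2\theta$, so that the square root defining $\ell$ stays real and the decomposition of $\mac P\setminus\mac B$ into disjoint regions is legitimate.
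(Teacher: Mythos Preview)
Your approach is correct and is essentially the same as the paper's. The paper packages the cone you describe as $\wt{\mac N_i}=\{v:f_i^\top(v-\wt e_i)\ge\alpha\|v-\wt e_i\|\}$ with $\alpha=(\beta+\gamma_p\ve r)\sqrt{1-1/\min\{q^2,2\}}$ (so both regimes $q\lessgtr\sqrt2$ are already built into $\alpha$), proves $\mac P\setminus\mac B\subseteq\cup_i(\mac N_i\setminus\mac B)$, slices $\mac N_i$ by $\{f_i^\top(v-\wt e_i)=\alpha t\}$, and finds the threshold $t_0=\alpha-\sqrt{\alpha^2-\delta}$ with $\delta=\|f_i\|^2+\beta^2/r-(1-\vf_p\sqrt\ve)^2$; since $\alpha=\|f_i\|\cos\theta=d\cos\theta$, this is exactly your $\ell=d\cos\theta-\sqrt{d^2\cos^2\theta-\delta}$, and both then use $1-\sqrt{1-y}\le y$ to get $\ell\le\delta/\alpha$.
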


% Since $\wt e_j$ are close to the actual canonical basis vectors $e_j$, this proves that $r_k$ is close to a $e_j$. 
The proofs for the last two results are reported 
in Appendix~\ref{app:geometric_intuition_th_pSSC}. 
Now, we are ready to take the last steps in the proof of Theorem~\ref{th:main}.

\subsection{Sketch of the Proof of Theorem~\ref{th:main}} \label{sec:sketchmain}

Lemma~\ref{lem:distance_rk_ei_sscp2} give us a bound on $\|\wt r_k-\wt e_j\|$. Since 
\[
\wt e_k-e_k = 
\lp e^\top \wt r_k  -1\rp e_k + \gamma_p\ve( r e_k -  e), 
\]
the quantity $\|\wt e_k-e_k\|$ will depend mainly on $\beta-1$ and $r\gamma_p\ve$, both asymptotically less than the estimated bound on  $\|\wt r_k-\wt e_j\|$ reported in Lemma~\ref{lem:distance_rk_ei_sscp2}. 
As a consequence, the same bound will also applies to $\|\wt r_k-e_j\|$. 
Finally, the lower bound on $\|\wt r_i-\wt r_j\|$ of Lemma~\ref{lem:R_is_close_to_orthogonal} ensures that no two distinct rows are close to the same $e_j$, and therefore $R$ is close to a permutation matrix $\Pi$. 

Now that we have an estimation on $\|R-\Pi\|_{1,2}$, we can plug it in Corollary~\ref{cor:Error_estimation_given_R_is_permutation} and obtain the bound 
\begin{align*}   
\min_\Pi \|W^\#-W^*\Pi\|_{1,2} \le    \|W^*\| \cdot  \mac O\left(  \sqrt {
     \frac { \ve}{\min\{q^2-1,1\}}
   \frac{ r^{7/2}}{\sigma_r(W^\#)}\frac {p^2}{q^2}
   }
      \right).
\end{align*}
Eventually, we can actually substitute $\|W^*\|$ with $\|W^\#\|$ since $W^\# = W^*R + M$ and $R$ is close to a permutation matrix $\Pi$, so
\[
\|W^*\| \le \|R^{-1}\| \|W^\#-M\|\approx\frac{\|W^\#\|}{\sigma_r(R)}
\approx \frac{\|W^\#\|}{\sigma_r(\Pi)} = \|W^\#\|.
\]
The full proof with all the details can be found in Appendix~\ref{app:proof_of_th_main}.

\section{Conclusion} 

In this paper, we studied the identifiability of the factors $W^\#$ and $H^\#$ in the decomposition $X = W^\# (H^\#)^\top + N$, where 
$W^\#$ is full rank, $H^\#$ is row stochastic and satisfies the $p$-SSC condition (Definition~\ref{def:SSCexpanded}), and $\|N\|_{1,2} \leq \epsilon$. 

We proved that the factors $W^\#$ and $H^\#$ can be recovered from $X$ by solving min-vol NMF~\eqref{eq:apprminvol}. 
We provided two main theorems: one general (Theorem~\ref{th:main}), and one specific  to the near-separable case (Theorem~\ref{th:main2}) which requires a column of $X$ close to each column of $W^\#$ (equivalently, $p$ is close to one). 
This fills an important gap in the literature: although min-vol NMF has been used successfully in many applications, 
a theoretical guarantee in the presence of noise was lacking. Moreover, our results also offer geometric insights on the robustness one can expect. In particular, the noise level allowed depends on how much the data points are well spread; in other terms, the smaller $p$, the more likely min-vol NMF will recover the ground truth factors $W^\#$ and $H^\#$, while for $p \rightarrow \sqrt{r-1}$, which corresponds to the SSC condition, robustness is not possible. 

\paragraph{Further research}  

An interesting question for further work is to follow the line of thought of the papers \cite{najafi2021statistical, saberi2025fundamental}, where authors assume the data follows a statistical model, namely $x_i = W h_i + n_i$ 
where $h_i$ is uniform in the simplex (equivalently, follow the uniform Dirichlet distribution), and $n_i$ is Gaussian. The question is: how many samples do we need to be able to estimate $W$ up to some accuracy with high probability depending on the noise level?  They propose a non-polynomial time algorithm to do this that is close to the sample optimal bound (they derive a lower bound). To adapt this idea to our setting, we would need to assume that the noise is bounded (or at least bounded with high probability) since we require $\|n_i\| \leq \varepsilon$ for all $i$, while we would need to quantify how many samples are needed for the $p$-SSC condition to be satisfied with high probability. 

Another question for further research is to study the tightness of the bounds of Theorems~\ref{th:main} and~\ref{th:main2}: are these bounds tight or can they be improved? Note that, for the most favorable case, that is, the separable case (Theorem~\ref{th:main2} with $p=1$), one cannot do better than $\ve \leq  \sigma_r(W)$, otherwise the noise can make $W$ rank deficient (this is related to Lemma~\ref{lem:full_rank_W}), while the error on $W^\#$ is at least $\min_{\Pi \in \mac P_r} \|W^\#-W^*\Pi\|_{1,2} \geq C_2 \frac{\|W^\#\|}{\sigma_r(W^\#)}$ for some constants $C_2$; see the discussion in~\cite[Chapter~4]{Gil20}.

Last but not least, our identifiability results rely on solving the min-vol NMF optimization problem~\eqref{eq:apprminvol}. 
Finding the minimum-volume simplex containing a given set of data points is NP-hard in general. However, the problem might be easier under the $p$-SSC. In particular, it can be solved in polynomial time for $p=1$, that is, separability~\cite{AGKM11}; and there are polynomial-time algorithms for small dimensions ($r\leq 4$)~\cite{zhou2002algorithms}. 
Hence studying the complexity of min-vol NMF under $p$-SSC would be particularly interesting, possibly leading to the design of polynomial-time algorithms for NMF beyond the separability assumption.

%Our results also provide inside on the limits of using min-vol NMF for identifiability. 

\small 

\bibliographystyle{spmpsci} 
\bibliography{references}

\begin{thebibliography}{10}
\providecommand{\url}[1]{{#1}}
\providecommand{\urlprefix}{URL }
\expandafter\ifx\csname urlstyle\endcsname\relax
  \providecommand{\doi}[1]{DOI~\discretionary{}{}{}#1}\else
  \providecommand{\doi}{DOI~\discretionary{}{}{}\begingroup
  \urlstyle{rm}\Url}\fi

\bibitem{abdolali2021simplex}
Abdolali, M., Gillis, N.: Simplex-structured matrix factorization:
  Sparsity-based identifiability and provably correct algorithms.
\newblock SIAM Journal on Mathematics of Data Science \textbf{3}(2), 593--623
  (2021)

\bibitem{Araujo01}
Ara\'ujo, U., Saldanha, B., Galv\~ao, R., Yoneyama, T., Chame, H., Visani, V.:
  The successive projections algorithm for variable selection in spectroscopic
  multicomponent analysis.
\newblock Chemometrics and Intelligent Laboratory Systems \textbf{57}(2),
  65--73 (2001)

\bibitem{AGHMMSWZ13}
Arora, S., Ge, R., Halpern, Y., Mimno, D., Moitra, A., Sontag, D., Wu, Y., Zhu,
  M.: A practical algorithm for topic modeling with provable guarantees.
\newblock In: International Conference on Machine Learning, vol.~28, pp.
  280--288 (2013)

\bibitem{AGKM11}
Arora, S., Ge, R., Kannan, R., Moitra, A.: Computing a nonnegative matrix
  factorization--provably.
\newblock In: ACM Symposium on Theory of Computing, pp. 145--162 (2012)

\bibitem{bakshilearning}
Bakshi, A., Bhattacharyya, C., Kannan, R., Woodruff, D., Zhou, S.: Learning a
  latent simplex in input sparsity time.
\newblock In: International Conference on Learning Representations (2021)

\bibitem{barbarino2025robustness}
Barbarino, G., Gillis, N.: On the robustness of the successive projection
  algorithm.
\newblock SIAM Journal on Matrix Analysis and Applications \textbf{46}(3),
  2140--2170 (2025)

\bibitem{bhatia1997Matrix}
Bhatia, R.: Matrix Analysis.
\newblock Springer, New York (1997)

\bibitem{bhattacharyya2020near}
Bhattacharyya, C., Kannan, R.: Near-optimal sample complexity bounds for
  learning latent $k$-polytopes and applications to ad-mixtures.
\newblock In: International Conference on Machine Learning, pp. 854--863. PMLR
  (2020)

\bibitem{bioucas2012hyperspectral}
Bioucas-Dias, J.M., Plaza, A., Dobigeon, N., Parente, M., Du, Q., Gader, P.,
  Chanussot, J.: Hyperspectral unmixing overview: Geometrical, statistical, and
  sparse regression-based approaches.
\newblock IEEE Journal of Selected Topics in Applied Earth Observations and
  Remote Sensing \textbf{5}(2), 354--379 (2012)

\bibitem{blei2003latent}
Blei, D.M., Ng, A.Y., Jordan, M.I.: Latent {Dirichlet} allocation.
\newblock Journal of Machine Learning Research \textbf{3}(Jan), 993--1022
  (2003)

\bibitem{boardman1995mapping}
Boardman, J.W., Kruse, F.A., Green, R.O.: Mapping target signatures via partial
  unmixing of {AVIRIS} data.
\newblock In: Proc. Summary JPL Airborne Earth Science Workshop, Pasadena, CA,
  pp. 23--26 (1995)

\bibitem{chan2009convex}
Chan, T.H., Chi, C.Y., Huang, Y.M., Ma, W.K.: A convex analysis-based
  minimum-volume enclosing simplex algorithm for hyperspectral unmixing.
\newblock IEEE Transactions on Signal Processing \textbf{57}(11), 4418--4432
  (2009)

\bibitem{cichocki2009nonnegative}
Cichocki, A., Zdunek, R., Phan, A.H., Amari, S.i.: Nonnegative matrix and
  tensor factorizations: applications to exploratory multi-way data analysis
  and blind source separation.
\newblock John Wiley \& Sons (2009)

\bibitem{craig1994minimum}
Craig, M.D.: Minimum-volume transforms for remotely sensed data.
\newblock IEEE Transactions on Geoscience and Remote Sensing \textbf{32}(3),
  542--552 (1994)

\bibitem{de2006multivariate}
De~Juan, A., Tauler, R.: Multivariate curve resolution ({MCR}) from 2000:
  progress in concepts and applications.
\newblock Critical Reviews in Analytical Chemistry \textbf{36}(3-4), 163--176
  (2006)

\bibitem{ding2008convex}
Ding, C.H., Li, T., Jordan, M.I.: Convex and semi-nonnegative matrix
  factorizations.
\newblock IEEE Transactions on Pattern Analysis and Machine Intelligence
  \textbf{32}(1), 45--55 (2008)

\bibitem{donoho2004does}
Donoho, D., Stodden, V.: When does non-negative matrix factorization give a
  correct decomposition into parts?
\newblock In: Advances in Neural Information Processing Systems (NIPS), pp.
  1141--1148 (2004)

\bibitem{fu2018identifiability}
Fu, X., Huang, K., Sidiropoulos, N.D.: On identifiability of nonnegative matrix
  factorization.
\newblock IEEE Signal Processing Letters \textbf{25}(3), 328--332 (2018)

\bibitem{xiao2019uniq}
Fu, X., Huang, K., Sidiropoulos, N.D., Ma, W.K.: Nonnegative matrix
  factorization for signal and data analytics: Identifiability, algorithms, and
  applications.
\newblock IEEE Signal Processing Magazine \textbf{36}(2), 59--80 (2019)

\bibitem{Fuetal19}
Fu, X., Huang, K., Sidiropoulos, N.D., Shi, Q., Hong, M.: Anchor-free
  correlated topic modeling.
\newblock IEEE Transactions on Pattern Analysis and Machine Intelligence
  \textbf{41}(5), 1056--1071 (2019)

\bibitem{expandedSSC}
Fu, X., Huang, K., Yang, B., Ma, W.K., Sidiropoulos, N.D.: Robust volume
  minimization-based matrix factorization for remote sensing and document
  clustering.
\newblock IEEE Transactions on Signal Processing \textbf{64}(23), 6254--6268
  (2016)

\bibitem{FMHS15}
Fu, X., Ma, W.K., Huang, K., Sidiropoulos, N.D.: Blind separation of
  quasi-stationary sources: Exploiting convex geometry in covariance domain.
\newblock IEEE Transactions on Signal Processing \textbf{63}(9), 2306--2320
  (2015)

\bibitem{full1981extended}
Full, W.E., Ehrlich, R., Klovan, J.: {EXTENDED QMODEL}--objective definition of
  external end members in the analysis of mixtures.
\newblock Journal of the International Association for Mathematical Geology
  \textbf{13}(4), 331--344 (1981)

\bibitem{gillis2012sparse}
Gillis, N.: Sparse and unique nonnegative matrix factorization through data
  preprocessing.
\newblock Journal of Machine Learning Research \textbf{13}(Nov), 3349--3386
  (2012)

\bibitem{Gil20}
Gillis, N.: Nonnegative Matrix Factorization.
\newblock Society for Industrial and Applied Mathematics, Philadelphia, PA
  (2020)

\bibitem{gillis2015exact}
Gillis, N., Kumar, A.: Exact and heuristic algorithms for semi-nonnegative
  matrix factorization.
\newblock SIAM Journal on Matrix Analysis and Applications \textbf{36}(4),
  1404--1424 (2015)

\bibitem{Gillis_2024}
Gillis, N., Luce, R.: Checking the sufficiently scattered condition using a
  global non-convex optimization software.
\newblock IEEE Signal Processing Letters \textbf{31}, 1610–1614 (2024)

\bibitem{spa2014}
Gillis, N., Vavasis, S.A.: Fast and robust recursive algorithmsfor separable
  nonnegative matrix factorization.
\newblock IEEE Transactions on Pattern Analysis and Machine Intelligence
  \textbf{36}(4), 698--714 (2014)

\bibitem{gillis2015semidefinite}
Gillis, N., Vavasis, S.A.: Semidefinite programming based preconditioning for
  more robust near-separable nonnegative matrix factorization.
\newblock SIAM Journal on Optimization \textbf{25}(1), 677--698 (2015)

\bibitem{higham2002NumAlg}
Higham, N.J.: Accuracy and Stability of Numerical Algorithms, second edn.
\newblock Society for Industrial and Applied Mathematics, Philadelphia, PA
  (2002)

\bibitem{hoyer2004non}
Hoyer, P.O.: Non-negative matrix factorization with sparseness constraints.
\newblock Journal of Machine Learning Research \textbf{5}(Nov), 1457--1469
  (2004)

\bibitem{hu2023dico}
Hu, J., Huang, K.: Global identifiability of $\ell_1$-based dictionary learning
  via matrix volume optimization.
\newblock In: Thirty-seventh Conference on Neural Information Processing
  Systems (2023)

\bibitem{huang2013non}
Huang, K., Sidiropoulos, N.D., Swami, A.: Non-negative matrix factorization
  revisited: Uniqueness and algorithm for symmetric decomposition.
\newblock IEEE Transactions on Signal Processing \textbf{62}(1), 211--224
  (2013)

\bibitem{ibrahim2021recovering}
Ibrahim, S., Fu, X.: Recovering joint probability of discrete random variables
  from pairwise marginals.
\newblock IEEE Transactions on Signal Processing \textbf{69}, 4116--4131 (2021)

\bibitem{ibrahim2019crowdsourcing}
Ibrahim, S., Fu, X., Kargas, N., Huang, K.: Crowdsourcing via pairwise
  co-occurrences: Identifiability and algorithms.
\newblock In: Advances in Neural Information Processing Systems, pp. 7845--7855
  (2019)

\bibitem{jang2019minimum}
Jang, B., Hero, A.: Minimum volume topic modeling.
\newblock In: The 22nd International Conference on Artificial Intelligence and
  Statistics, pp. 3013--3021. PMLR (2019)

\bibitem{kim2007sparse}
Kim, H., Park, H.: Sparse non-negative matrix factorizations via alternating
  non-negativity-constrained least squares for microarray data analysis.
\newblock Bioinformatics \textbf{23}(12), 1495--1502 (2007)

\bibitem{kubjas2015fixed}
Kubjas, K., Robeva, E., Sturmfels, B.: Fixed points of the {EM} algorithm and
  nonnegative rank boundaries.
\newblock Annals of Statistics \textbf{43}(1), 422--461 (2015)

\bibitem{lee1999learning}
Lee, D.D., Seung, H.S.: Learning the parts of objects by non-negative matrix
  factorization.
\newblock Nature \textbf{401}, 788--791 (1999)

\bibitem{leplat2019minimum}
Leplat, V., Ang, A.M., Gillis, N.: Minimum-volume rank-deficient nonnegative
  matrix factorizations.
\newblock In: ICASSP 2019-2019 IEEE International Conference on Acoustics,
  Speech and Signal Processing (ICASSP), pp. 3402--3406. IEEE (2019)

\bibitem{leplat2020blind}
Leplat, V., Gillis, N., Ang, A.M.: Blind audio source separation with
  minimum-volume beta-divergence {NMF}.
\newblock IEEE Transactions on Signal Processing \textbf{68}, 3400--3410 (2020)

\bibitem{li2021provably}
Li, X., Liu, T., Han, B., Niu, G., Sugiyama, M.: Provably end-to-end
  label-noise learning without anchor points.
\newblock In: International Conference on Machine Learning, pp. 6403--6413
  (2021)

\bibitem{lin2015identifiability}
Lin, C.H., Ma, W.K., Li, W.C., Chi, C.Y., Ambikapathi, A.: Identifiability of
  the simplex volume minimization criterion for blind hyperspectral unmixing:
  The no-pure-pixel case.
\newblock IEEE Transactions on Geoscience and Remote Sensing \textbf{53}(10),
  5530--5546 (2015)

\bibitem{lin2018maximum}
Lin, C.H., Wu, R., Ma, W.K., Chi, C.Y., Wang, Y.: Maximum volume inscribed
  ellipsoid: A new simplex-structured matrix factorization framework via facet
  enumeration and convex optimization.
\newblock SIAM Journal on Imaging Sciences \textbf{11}(2), 1651--1679 (2018)

\bibitem{markovsky2012low}
Markovsky, I.: Low rank approximation: Algorithms, Implementation,
  Applications.
\newblock Springer Cham (2019)

\bibitem{miao2007endmember}
Miao, L., Qi, H.: Endmember extraction from highly mixed data using minimum
  volume constrained nonnegative matrix factorization.
\newblock IEEE Transactions on Geoscience and Remote Sensing \textbf{45}(3),
  765--777 (2007)

\bibitem{mizutani2014ellipsoidal}
Mizutani, T.: Ellipsoidal rounding for nonnegative matrix factorization under
  noisy separability.
\newblock The Journal of Machine Learning Research \textbf{15}(1), 1011--1039
  (2014)

\bibitem{mizutani2016robustness}
Mizutani, T.: Robustness analysis of preconditioned successive projection
  algorithm for general form of separable {NMF} problem.
\newblock Linear Algebra and its Applications \textbf{497}, 1--22 (2016)

\bibitem{najafi2021statistical}
Najafi, A., Ilchi, S., Saberi, A.H., Motahari, S.A., Khalaj, B.H., Rabiee,
  H.R.: On statistical learning of simplices: Unmixing problem revisited.
\newblock The Annals of Statistics \textbf{49}(3), 1626--1655 (2021)

\bibitem{nascimento2011hyperspectral}
Nascimento, J.M., Bioucas-Dias, J.M.: Hyperspectral unmixing based on mixtures
  of {Dirichlet} components.
\newblock IEEE Transactions on Geoscience and Remote Sensing \textbf{50}(3),
  863--878 (2011)

\bibitem{nguyen2023deep}
Nguyen, T., Ibrahim, S., Fu, X.: Deep clustering with incomplete noisy pairwise
  annotations: A geometric regularization approach.
\newblock In: International Conference on Machine learning (ICML) (2023)

\bibitem{ozerov2009multichannel}
Ozerov, A., F{\'e}votte, C.: Multichannel nonnegative matrix factorization in
  convolutive mixtures for audio source separation.
\newblock IEEE Transactions on Audio, Speech, and Language Processing
  \textbf{18}(3), 550--563 (2009)

\bibitem{perczel1991convex}
Perczel, A., Holl{\'o}si, M., Tusn\'ady, G., Fasman, G.: Convex constraint
  analysis: a natural deconvolution of circular dichroism curves of proteins.
\newblock Protein Engineering, Design and Selection \textbf{4}(6), 669--679
  (1991)

\bibitem{perczel1992analysis}
Perczel, A., Park, K., Fasman, G.: Analysis of the circular dichroism spectrum
  of proteins using the convex constraint algorithm: a practical guide.
\newblock Analytical Biochemistry \textbf{203}(1), 83--93 (1992)

\bibitem{rajko2010comments}
Rajk{\'o}, R.: Comments on “{N}ear-infrared hyperspectral unmixing based on a
  minimum volume criterion for fast and accurate chemometric characterization
  of counterfeit tablets”.
\newblock Analytical chemistry \textbf{82}(20), 8750--8752 (2010)

\bibitem{saberi2025fundamental}
Saberi, S.A.H., Najafi, A., Motahari, A., et~al.: Fundamental limits of
  learning high-dimensional simplices in noisy regimes.
\newblock arXiv preprint arXiv:2506.10101  (2025)

\bibitem{saha2025identifiability}
Saha, S., Barbarino, G., Gillis, N.: Identifiability of nonnegative {Tucker}
  decompositions-- {Part I}: Theory.
\newblock arXiv preprint arXiv:2505.12713  (2025)

\bibitem{sun2023volume}
Sun, Y., Huang, K.: Volume-regularized nonnegative {Tucker} decomposition with
  identifiability guarantees.
\newblock In: International Conference on Acoustics, Speech and Signal
  Processing (ICASSP) (2023)

\bibitem{udell2016generalized}
Udell, M., Horn, C., Zadeh, R., Boyd, S., et~al.: Generalized low rank models.
\newblock Foundations and Trends{\textregistered} in Machine Learning
  \textbf{9}(1), 1--118 (2016)

\bibitem{vidalgeneralized16}
Vidal, R., Ma, Y., Sastry, S.: Generalized Principal Component Analysis.
\newblock Springer New York, NY (2016), book series: Interdisciplinary Applied
  Mathematics, volume 40

\bibitem{wang2021minimum}
Wang, J., Guan, S., Liu, S., Zhang, X.L.: Minimum-volume multichannel
  nonnegative matrix factorization for blind audio source separation.
\newblock IEEE/ACM Transactions on Audio, Speech, and Language Processing
  \textbf{29}, 3089--3103 (2021)

\bibitem{wu2021probabilistic}
Wu, R., Ma, W.K., Li, Y., So, A.M.C., Sidiropoulos, N.D.: Probabilistic simplex
  component analysis.
\newblock IEEE Transactions on Signal Processing \textbf{70}, 582--599 (2021)

\bibitem{zhang2017robust}
Zhang, S., Agathos, A., Li, J.: Robust minimum volume simplex analysis for
  hyperspectral unmixing.
\newblock IEEE Transactions on Geoscience and Remote Sensing \textbf{55}(11),
  6431--6439 (2017)

\bibitem{zhou2002algorithms}
Zhou, Y., Suri, S.: Algorithms for a minimum volume enclosing simplex in three
  dimensions.
\newblock SIAM Journal on Computing \textbf{31}(5), 1339--1357 (2002)

\end{thebibliography}

\normalsize 

\appendix

\section{Proof of Preliminary Results}

\subsection{Properties of $p$-SSC}

\subsubsection{Proof of Lemma~\ref{lem:p-hypersphere}}
\label{app:proof of Lemma on Q_p}
Any $x\in \mac E$ can be written as $e/r + w$ for some vector $w$ such that $e^\top w=0$, so $\|x\|^2 = \|e/r\|^2 + \|w\|^2 = 1/r + \|w\|^2$. This is enough to show that
 \[
\mac S_p\cap \mac E
 = \left\{   x\in \mac E \ \Big| \ x = \frac er + w, \ \frac 1{p^2} - \frac 1r\ge \|w\|^2   \right\} = \mac Q_p, 
\]
and, as a consequence,
\[
\mac C_p \cap \mac E= \mac S_p\cap \f R^r_+ \cap \mac E = \mac Q_p \cap \Delta^r.
\]
If $H \in \R_+^{n \times r}$ is row stochastic, then $\cone(H^\top)$ is the disjoint union of all nonnegative multiples of $\cone(H^\top)\cap \Delta^r = \conv(H^\top)$. Analogously, $\mac C_p$  is the disjoint union of all nonnegative multiples of $\mac C_p\cap \mac E$. 
In particular, $\mac C_p\cu \cone(H^\top)\iff \mac C_p\cap \mac E\cu \conv(H^\top)$, and this proves that $H$ is $p$-SSC if and only if $\mac Q_p\cap \Delta^r\cu \conv(H^\top)$.

In the case $H \in \R_+^{n \times r}$ is not row stochastic, we have the similar result 
\[
\mac Q_p\cap \Delta^r\cu \cone(H^\top) 
\iff \cone(\mac Q_p\cap \Delta^r)\cu \cone(H^\top)
\iff \cone(\mac C_p\cap \Delta^r)\cu \cone(H^\top)
\iff \mac C_p\cu \cone(H^\top).
\]

Notice now that
\[
\partial\mac Q_p = 
\left\{   x\in \mac E \ \Big| \ x = \frac er + w, \ \frac 1{p^2} - \frac 1r =  \|w\|^2   \right\},\qquad 
x\in \Delta^r \implies \left\| x-\frac er   \right\|^2 = \|x\|^2  - \frac 1r \le 1- \frac 1r,
\]
since $x\ge 0, \ e^\top x = 1\implies 0 \le x\le e\implies  \|x\|^2 \le e^\top x  = 1$. As a consequence,  $\Delta^r\cu \mac Q_p$ for $ p= 1$, and the intersection between $\partial\mac Q_p$ and $\Delta^r$ is 
\[
\partial \mac Q_1\cap \Delta^r =\{x\in \Delta^r \ | \ \|x\| = 1 \} =    \{e_1,e_2,\dots,e_r\}.
\]
Notice that $x\in \mac Q_p\implies x\in \mac E\implies e^\top x = 1$ and if $x^+:=\max\{0,x\}$, then $e^\top x^+ \ge 1$, and $e^\top x^+=1$ if and only if $x=x^+\ge 0$.  Call $s_x:=\#\{i:x_i > 0\}$.  
When $p^2 \ge r-1$ we find that for every $x\in \mac Q_p$, 
\begin{equation}
    \label{eq:Qp_inside_Delta}
    \frac 1{r-1}\ge  \frac 1{p^2}\ge \|x\|^2 \ge  \|x^+\|^2 \ge \frac {(e^\top x^+)^2}{s_x} \ge \frac 1{s_x}  \implies s_x \ge r-1.
\end{equation}
The only case in which $x\not > 0$ is for $s_x = r-1$, but in that case all the inequalities in \eqref{eq:Qp_inside_Delta} are actual equalities and in particular $e^\top x^+ = 1$, so $x\ge 0$ anyway. This shows that  $p^2 \ge r-1\implies \mac Q_p\cu \Delta^r$.

If $x\in \mac Q_{\sqrt {r-1}}  \cap \partial\Delta^r$ then $s_x \le r-1$ and thus again $s_x = r-1$, meaning that there is exactly one zero entry in $x$. Again,  all the inequalities in \eqref{eq:Qp_inside_Delta} are equalities, and the QM-AM inequality $ \|x^+\|^2 \ge \frac {(e^\top x^+)^2}{s_x}$ achieves equality only for all nonzero elements of $x$ being equal. Since $e^\top x^+ = 1$, all nonzero elements of $x$ must be equal to $1/(r-1)$. We conclude that 
 \[ \mac Q_{\sqrt{r-1}} \cap \partial\Delta^r = \left\{\frac{e - e_i}{r-1} \ \Big |\ i=1,\dots, r \right\}.\] In the case $p^2=r$, we have
 \[
   \mac Q_{\sqrt r}= \left\{   x\in \mac E \ \Big| \ x = \frac er + w, \ 0\ge \|w\|^2   \right\} =\left\{    \frac er    \right\} 
 \]
and in the last case $p^2 > r$, we get the impossible condition $0>\|w\|^2$, meaning that $\mac Q_p$ is empty.

\subsubsection{Proof of Lemma~\ref{lem:Cp_Sp_dual_cones_and_inclusions}}
\label{app:proof_of_lem_CpSp_dual_cones}

Recall that $\mac S_p = \cone(\mac Q_p)$,  $\mac Q_p \cu \mac S_p$, and, by definition, 
\[
\mac S_p = \left\{x \in \R^r \ \big| \ e^{\top}x \geq p \|x\| \right\}, \qquad 
 \mac Q_p = 
\left\{   w + \frac er \ \Big |  \ \|w\|^2\le \frac{1}{p^2}-\frac 1r , \ e^\top w = 0      \right\}.
\]
Notice that the vector $e$ and its nonnegative multiples are in $\mac S_p^*$ since $e^\top x \ge p\|x\| \ge 0$ for every $x\in \mac S_p$. 
Let now $y\in \mac S_p^*$ be not a multiple of the vector $e$, and take $x = \frac er  - \lambda \lp y  - e^\top y \frac er\rp\in \mac Q_p\cu \mac S_p$ where $\lambda = \sqrt{\frac{1}{p^2}-\frac 1r} / \|y  - e^\top y \frac er \| $. 
By the definition of duality, 
\begin{align*}
    0\le x^\top y & = \frac{e^\top y}{r} - \lambda  \lp y  - e^\top y \frac er\rp^\top y
     = \frac{e^\top y}{r} - \lambda  \left\| y  - e^\top y \frac er\right\|^2
      - \lambda  \lp y  - e^\top y \frac er\rp^\top e^\top y \frac er\\
      & =  \frac{e^\top y}{r} - \sqrt{\frac{1}{p^2}-\frac 1r}  \left\| y  - e^\top y \frac er\right\|
       =  \frac{e^\top y}{r} - \sqrt{\frac{1}{p^2}-\frac 1r}  
       \sqrt{\|y\|^2 - \frac{(e^\top y)^2}{r}}.
\end{align*}
In particular, $e^\top y\ge 0$ and 
\begin{align*}
e^\top y \ge  r\sqrt{\frac{1}{p^2}-\frac 1r}  
       \sqrt{\|y\|^2 - \frac{(e^\top y)^2}{r}}
     &  \implies 
      (e^\top y)^2 \ge  \lp \frac{r}{p^2}- 1 \rp
      \lp r\|y\|^2 - (e^\top y)^2\rp  \\
       &  \implies 
      \frac 1{p^2}(e^\top y)^2 \ge  \lp \frac{r}{p^2}- 1 \rp
       \|y\|^2   \\
       &  \implies 
       (e^\top y)^2 \ge  \lp r - p^2 \rp
       \|y\|^2  \implies 
       e^\top y \ge  q
       \|y\|
\end{align*}
This means that $y\in \mac S_q$, so $\mac S_p^*\cu \mac S_q$. In order to show that $\mac S_q \cu \mac S_p^*$, it is enough to prove that $x^\top y \ge 0$ for every $x\in \mac Q_p$ and $y\in \mac Q_q$. 
Recall that,  by Lemma~\ref{lem:p-hypersphere}, 
\[
 \mac Q_p := \left\{   x\in \mac E \ \Big | \ x = \frac er + w, \ \| w\|^2\le \frac{1}{p^2}-\frac 1r       \right\}.
 \]
As a consequence, for every $x\in \mac Q_p$ and $y\in \mac Q_q$,
\begin{align*}
    x^\top y &= \lp  x-\frac er  \rp^\top \lp y-\frac er\rp + \frac 1r \ge - \left\| x-\frac er\right\| \ \left\| y-\frac er\right\|+ \frac 1r\\
    &\ge  - \sqrt{ \lp\frac 1{p^2} - \frac 1r\rp} \sqrt{ \lp\frac 1{q^2} - \frac 1r\rp}
    + \frac 1r =  - \sqrt{ \lp\frac {q^2}{rp^2} \rp} \sqrt{ \lp\frac {p^2}{rq^2} \rp}
    + \frac 1r = 0.
    \end{align*}
The dual of $\mac C_p$ is thus  $ \mac C_p^* = (\mac S_p \cap \f R^r_+)^*=\mac S_p^* + \f R^r_+ =\mac S_q + \f R^r_+ = \cone(\mac Q_q \cup \{e_1,\dots, e_r\})$. Since the set $\mac Q_q \cup \{e_1,\dots, e_r\}$ is already on $\mac E$, $\mac C_p^* \cap \mac E = \conv(\mac Q_q \cup \{e_1,\dots, e_r\})$.

\subsubsection{Equivalence between different SSC2}

\label{app:proof_equivalence_SSC2}
\begin{lemma}
    Given a matrix $H\in \f R_+^{n\times r}$ with $r\ge 2$ satisfying SSC1, that is, $\mac S_{\sqrt{r-1}}\cu \cone(H^\top)$, the following conditions are equivalent:

         Condition 1. $ \partial \cone (H^\top) \cap \mac S_{\sqrt{r-1}} %= \partial \f R^r_+ \cap \mac C  
 =  \left\{\lambda(e-e_k)  \ | \  \lambda \geq 0  \text{ and } k \in [r]\right\},$
 
 Condition 2.  $\cone^*(H^\top) \cap \partial \mathcal{S}_1 =  \{\lambda e_k  \ | \  \lambda \geq 0  \text{ and } k \in [r]\}$. 
\end{lemma}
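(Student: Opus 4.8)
The plan is to recast both conditions as statements about the boundary rays of the ice-cream cone $\mac S_{\sqrt{r-1}}$ and of its dual $\mac S_1$, and then to build an explicit duality bijection carrying one condition onto the other. Write $\mac F := \cone(H^\top)$, a closed, full-dimensional convex cone with $\mac F \cu \R^r_+$ (since $H \geq 0$) and $\mac S_{\sqrt{r-1}} \cu \mac F$ by SSC1; dualizing the inclusion and using $\mac S_{\sqrt{r-1}}^* = \mac S_1$ (Lemma~\ref{lem:Cp_Sp_dual_cones_and_inclusions} with $p=\sqrt{r-1}$, $q=1$) gives $\mac F^* \cu \mac S_1$.

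First I would normalize both sides. Since $\mac S_{\sqrt{r-1}} \cu \mac F$, the interior of $\mac S_{\sqrt{r-1}}$ is contained in the interior of $\mac F$, so no interior ray of the small cone meets $\partial\mac F$; this yields $\partial\mac F \cap \mac S_{\sqrt{r-1}} = \partial\mac F \cap \partial\mac S_{\sqrt{r-1}}$, while $\mac F^* \cap \partial\mac S_1$ already lies on $\partial\mac S_1$. I would also record that the distinguished rays are present for free: $e-e_k \in \partial\mac S_{\sqrt{r-1}}$ (as $e^\top(e-e_k) = r-1 = \sqrt{r-1}\,\|e-e_k\|$), and having a zero coordinate it lies on $\partial\R^r_+$, so, as $\mac F \cu \R^r_+$, it lies on $\partial\mac F$; symmetrically $e_k \in \mac F^*$ (because $He_k \geq 0$) and $e_k \in \partial\mac S_1$. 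Hence both conditions amount to the nontrivial inclusions, namely that \emph{no other} rays occur.

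The core is the bijection. For nonzero $x \in \partial\mac S_{\sqrt{r-1}}$ set $y_x := \|x\|\,e - \sqrt{r-1}\,x$; a direct computation gives $y_x^\top x = 0$, $\|y_x\| = \|x\|$ and $e^\top y_x = \|y_x\|$, so $y_x \in \partial\mac S_1$, and the map $y \mapsto x_y := \|y\|\,e - y$ inverts it up to positive scaling, producing a ray bijection $\Phi : \partial\mac S_{\sqrt{r-1}} \leftrightarrow \partial\mac S_1$ with $\Phi(e-e_k) = e_k$. The key equivalence I would then establish is that, for nonzero $x \in \partial\mac S_{\sqrt{r-1}}$ (hence $x \in \mac F$ by SSC1),
\[
x \in \partial\mac F \iff y_x \in \mac F^*.
\]
The direction ($\Leftarrow$) is immediate: $y_x \in \mac F^*$ with $y_x^\top x = 0$ exhibits $y_x$ as a supporting functional vanishing at $x \in \mac F$, and full-dimensionality of $\mac F$ forces $x \in \partial\mac F$. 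For ($\Rightarrow$), any nonzero supporting functional $y$ of $\mac F$ at $x$ is also a supporting functional of the smaller cone $\mac S_{\sqrt{r-1}}$ at $x$; since the ice-cream cone has a smooth (strictly convex) boundary away from the apex, this functional is unique up to scaling, hence equals $y_x$, giving $y_x \in \mac F^*$.

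Combining the normalization and the equivalence, $\Phi$ restricts to a ray bijection between $\partial\mac F \cap \partial\mac S_{\sqrt{r-1}}$ and $\mac F^* \cap \partial\mac S_1$ that matches each $e-e_k$ with $e_k$; therefore the first set equals exactly its distinguished rays $\{\lambda(e-e_k)\}$ if and only if the second equals exactly $\{\lambda e_k\}$, which is precisely Condition 1 $\iff$ Condition 2. I expect the main obstacle to be the ($\Rightarrow$) direction of the displayed equivalence: it rests on the uniqueness of the supporting functional of $\mac S_{\sqrt{r-1}}$, i.e. on the strict convexity of the ice-cream cone's boundary, and this is the only step that genuinely exploits the geometry of $\mac S_{\sqrt{r-1}}$ rather than general convex duality, since it is what pins an extra boundary ray of $\mac F$ to the single dual direction $y_x$.
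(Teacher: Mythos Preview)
Your proof is correct. The paper takes a slightly different route: it handles each implication separately, starting from a nonzero $x$ in one of the two sets, invoking the abstract fact that any nonzero boundary point of a closed convex cone admits an orthogonal nonzero element on the boundary of its dual, placing that element in the companion set via the chain of inclusions, applying the hypothesis to identify it as one of the distinguished rays, and then recovering $x$ through the equality case of Cauchy--Schwarz. You instead construct the explicit ray bijection $\Phi:\partial\mathcal S_{\sqrt{r-1}}\leftrightarrow\partial\mathcal S_1$, $x\mapsto \|x\|e-\sqrt{r-1}\,x$, and prove the single equivalence $x\in\partial\mathcal F\iff\Phi(x)\in\mathcal F^*$, whose nontrivial direction relies on the smoothness of the ice-cream cone (uniqueness of the supporting hyperplane at a nonzero boundary point); this packages both implications at once and makes the symmetry transparent. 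The underlying content is the same---your smoothness/uniqueness step is precisely what the paper's Cauchy--Schwarz equality computation encodes---but your formulation is more structural, while the paper's is more hands-on and never names the smoothness property explicitly.
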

\begin{proof}
Given any convex closed cone $\mac F\ne \{0\}$ such that the dual cone $\mac F^*$ is not $\{0\}$, the following properties hold:
\begin{itemize}
    \item $\mac F$ is self-dual, that is, $\mac F^{**} = \mac F$.
    \item For every non-zero $x\in \partial \mac F^*$ there exists a non-zero $y\in \partial \mac F$ such that $x^\top y = 0$.  
\end{itemize}
% Using both properties, we can also say that for every $y\in\partial \mac F^{**}=\partial\mac  F$ there exists a $x\in\partial F^*$ such that $x^\top y = 0$. 

\noindent Notice that the vector $e$ belongs to all the cones  $\mac S_p$ and their dual $\mac S_p^*$  for $0\le p\le \sqrt r$ since
\[
r = e^\top e \ge p\|e\| = p \sqrt r \implies e\in \mac C_p, \qquad 
e^\top x \ge p \|x\| \ge 0 \ \ \ \forall  x\in\mac C_p \implies e\in \mac C_p^*.
\]
Moreover,  $\mac S_{\sqrt{r-1}}\cu \cone(H^\top)\cu \f R^r_+$, so by duality and Lemma~\ref{lem:Cp_Sp_dual_cones_and_inclusions}, $\R^r_+\cu \cone^*(H^\top)\cu\mac S_1$. As a consequence, the cones $\mac S_1$, $\mac S_{\sqrt {r-1}}$, $\cone(H^\top)$ and $\cone^*(H^\top)$ are all convex closed cones not equal to $\{0\}$ whose duals are again not equal to $\{ 0 \}$. 

Suppose now that Condition 1 holds, and consider a nonzero vector $x\in \cone^*(H^\top) \cap \partial \mathcal{S}_1$. Since $\mac S_1 =\mac S_{\sqrt{r-1}}^*$,  there exists a nonzero $y\in \partial \mac S_{\sqrt{r-1}}$ such that $x^\top y = 0$. If $\mac H_x = \{z \ | \ x^\top z\ge 0\}$, then notice that $y \in \partial  \mac S_{\sqrt{r-1}}
\cu \mac S_{\sqrt{r-1}}\cu \cone(H^\top)\cu \mac H_x$  and $y\in\partial\mac H_x$. As a consequence, $y\in\partial\cone(H^\top)$ and $y\in \partial \mac S_{\sqrt{r-1}}$, so by Condition 1, the vector $y$ must be equal to $\lambda (e-e_k)$ for some $\lambda > 0$ and some $k\in [r]$. 

From $x\in\partial \mac S_1$ we find that $e^\top x = \| x\| > 0$, so 
\begin{align*}
    0 = x^\top y \implies 0 = x^\top(e-e_k) = \|x\| - x^\top e_k\implies \|x\| = x^\top e_k \implies x = \mu e_k, 
\end{align*}
for some $\mu > 0$. This is enough to show that $\cone^*(H^\top) \cap \partial \mathcal{S}_1 =  \{\lambda e_k  \ | \  \lambda \geq 0  \text{ and } k \in [r]\}$, that is, that Condition 2 holds. 

Suppose now that Condition 2 holds, and consider a nonzero vector $x\in  \partial \cone (H^\top) \cap \mac S_{\sqrt{r-1}}$. Since $\cone (H^\top) =\cone^{**}(H^\top)$, there exists a nonzero $y\in \partial \cone^*(H^\top)$ such that $x^\top y = 0$. 
Since $\mac S_1 =\mac S_{\sqrt{r-1}}^*$, the dual of the SSC1 is $\cone^*(H^\top)\cu \mac S_1$. 
As a consequence, if $\mac H_x = \{z \ | \ x^\top z\ge 0\}$, then  $y \in \partial \cone^*(H^\top)
\cu 
 \cone^*(H^\top) \cu 
\mac S_{1}\cu \mac H_x$  and $y\in\partial\mac H_x$. This is enough to show that  $y\in\partial\cone^*(H^\top)$ and $y\in \partial \mac S_{1}$, so by Condition 2, the vector $y$ must be equal to $\lambda e_k$ for some $\lambda > 0$ and some $k\in [r]$. 

From SSC1, $x\in \mac S_{\sqrt{r-1}}\cu \cone(H^\top)$ but since  $x\in  \partial \cone (H^\top)$, we also have that $x\in  \partial \mac S_{\sqrt{r-1}}$, that is, we find that $e^\top x =\sqrt{r-1} \| x\| > 0$, so we can write
\begin{align*}
    0 = x^\top y \implies 0 = x^\top e_k = x^\top (e_k-e) + \sqrt{r-1}\|x\| \implies 
    \|x\| \sqrt{r-1}= x^\top(e-e_k)  \le \|x\| \|e-e_k\| = \|x\| \sqrt{r-1}.
\end{align*}
As a consequence, $x$ must be a positive multiple of $e-e_k$, and this is enough to show that $\partial \cone (H^\top) \cap \mac S_{\sqrt{r-1}} %= \partial \f R^r_+ \cap \mac C  
 =  \left\{\lambda(e-e_k)  \ | \  \lambda \geq 0  \text{ and } k \in [r]\right\}$, that is, that Condition 1 holds. 
\end{proof}

\subsubsection{The matrix $H_p$}\label{app:Hp}

 \paragraph{Proof of Lemma~\ref{lem:def_of_Hp}} 
Each column $v_i$ of $H_p$ is by definition in the segment connecting $e/r$ to $e_i$, and we claim that $r\alpha_p$ is the coefficient realizing the correct convex combination of $v_i$, meaning $v_i = r\alpha_p e/r + (1-r\alpha_p)e_i$. In order to prove this, it is sufficient to show that $r\alpha_p\in [0,1]$ and that $v_i\in \partial \mac S_p$, that is, $1 = e^\top v_i = p\|v_i\|$. 
Notice that $q/p$ is decreasing in $p$, and $1/\sqrt{r-1}\le q/p\le \sqrt{r-1}$, so 
\[
0\le r\alpha_p = 1-  \frac 1{\sqrt{r-1}}\frac qp \le 1 - \frac{1}{r-1}\le 1.
\]
Moreover,
\begin{align*}
    \left\| r\alpha_p \frac er + (1-r\alpha_p)e_i\right\|^2 &=
   \left\|\frac er + (1-r\alpha_p)\lp e_i - \frac er  \rp\right\|^2
     =\frac 1r + (1-r\alpha_p)^2 \lp  1 + \frac 1r -\frac 2r  \rp   = 
    \frac 1r + \frac     1{r}  \frac{q^2}{p^2}       
    % = \frac 1r \lp  1 + \frac{q^2}{p^2}  \rp 
    = \frac{1}{p^2}.
\end{align*}
As a consequence, $H_p$ can be written as  $H_p= \alpha_p E + (1- r\alpha_p) I$. Since $v_i\in \mac S_p\cap \Delta^r = \mac Q_p\cap \Delta^r$,  by Lemma~\ref{lem:p-hypersphere} we have that if $H\in \f R^{n\times r}$ is $p$-SSC, then $v_i\in \cone(H^\top)$ and if $H$ is row stochastic then $v_i\in \conv(H^\top)$. This holds for every index $i$, so the same is true for their convex hull $\conv(H_p)$.

\paragraph{Properties and Estimations on $H_p$}

 Here we collect some of the properties of $H_p$ relative to its last singular value and its inverse. Here the norm $\|\cdot\|_1$ is the induced $1$-norm, that is, the maximum $1$-norm among the columns of the matrix, $\|A\|_1 = \max_i \|a_i\|_1$. 

\begin{lemma}
\label{lem:last_sv_of_Hp}    For every $1\le  p \le \sqrt{r-1}$, let $q =\sqrt{ r-p^2}$. The matrix $H_p$ satisfies 
\[ \sigma_r(H_p) =  \frac 1{\sqrt{r-1}}\frac qp \quad \text{ and } \quad 
\|H_p^{-1}\|_1 =       \frac 1r \left[ 2(r-1)^{3/2}\frac pq - (r-2) \right]. \]
Moreover, 
\[
 2r -3 \ge 
      2\sqrt{r-1} \frac pq - 1
      \ge \|H_p^{-1}\|_1 \ge \sqrt{r-1} \frac pq \ge 1.
\]

\end{lemma}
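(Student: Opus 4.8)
The plan is to exploit the rank-one-plus-scalar structure $H_p = (1-r\alpha_p)I + \alpha_p E$ with $E = ee^\top$, which makes both the spectrum and the inverse explicit. First I would diagonalize $H_p$ using the spectral decomposition of $E$: on $\mathrm{span}(e)$ the matrix $E$ acts as $rI$, and on the orthogonal complement $e^\perp$ it vanishes. Hence $H_p$ has eigenvalue $(1-r\alpha_p)+r\alpha_p = 1$ on the single direction $e$, and eigenvalue $c := 1-r\alpha_p$ with multiplicity $r-1$ on $e^\perp$. Substituting $\alpha_p = \tfrac1r(1-\tfrac{1}{\sqrt{r-1}}\tfrac qp)$ gives $c = \tfrac{1}{\sqrt{r-1}}\tfrac qp$. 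Since $H_p$ is symmetric with positive eigenvalues, its singular values coincide with its eigenvalues; to identify the smallest I would record the bounds $\tfrac{1}{\sqrt{r-1}}\le \tfrac qp\le \sqrt{r-1}$ (already established in the proof of Lemma~\ref{lem:def_of_Hp}), which give $c\in[\tfrac{1}{r-1},1]$, in particular $c\le 1$, so $\sigma_r(H_p)=c=\tfrac{1}{\sqrt{r-1}}\tfrac qp$, proving the first claim.

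For the inverse, I would apply the Sherman--Morrison formula to $H_p = cI + \alpha_p ee^\top$. Because $c + r\alpha_p = 1$, the correction term simplifies neatly and yields $H_p^{-1} = \tfrac1c(I - \alpha_p E)$. Each column of $H_p^{-1}$ then has one diagonal entry $\tfrac{1-\alpha_p}{c}$ and $r-1$ off-diagonal entries $-\tfrac{\alpha_p}{c}$. A short sign check---using $c\le 1$ so that $\alpha_p = \tfrac{1-c}{r}\ge 0$, and $\alpha_p$ small so that $1-\alpha_p>0$---shows the column $\ell_1$-norm equals $\tfrac{1-\alpha_p+(r-1)\alpha_p}{c} = \tfrac{1+(r-2)\alpha_p}{c}$. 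Substituting $\alpha_p$ and $\tfrac1c = \sqrt{r-1}\tfrac pq$ and simplifying gives $\|H_p^{-1}\|_1 = \tfrac1r[\,2(r-1)^{3/2}\tfrac pq - (r-2)\,]$, the second claim; since all columns are identical by symmetry, this max-column-norm computation is complete.

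Finally, for the chain of inequalities I would introduce the abbreviation $t := \sqrt{r-1}\tfrac pq$, so that $\|H_p^{-1}\|_1 = \tfrac1r[\,2(r-1)t - (r-2)\,]$, and note that the bounds on $q/p$ translate exactly into $1\le t\le r-1$. Each of the four inequalities then reduces to a one-line affine manipulation in $t$: the outermost $2r-3\ge 2t-1$ is equivalent to $t\le r-1$; the step $2t-1\ge\|H_p^{-1}\|_1$ clears denominators to $t\ge 1$; the step $\|H_p^{-1}\|_1\ge t$ becomes $(r-2)t\ge r-2$, again $t\ge 1$ (with equality when $r=2$); and $t\ge 1$ is the last inequality itself.

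I do not expect any genuine obstacle here, since the structure $cI+\alpha_p E$ trivializes both the eigenvalue and inverse computations. The only place demanding care is the $\ell_1$-norm step, where one must confirm the signs of the entries $\tfrac{1-\alpha_p}{c}$ and $-\tfrac{\alpha_p}{c}$ before summing absolute values, and the reduction of the inequality chain, where it is cleanest to pass to the single variable $t$ and invoke the range $1\le t\le r-1$ inherited from $\tfrac{1}{\sqrt{r-1}}\le\tfrac qp\le\sqrt{r-1}$.
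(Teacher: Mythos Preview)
Your proposal is correct and follows essentially the same approach as the paper: both exploit the structure $H_p=(1-r\alpha_p)I+\alpha_p E$ to read off the eigenvalues $1$ and $1-r\alpha_p$, write down the inverse explicitly (you invoke Sherman--Morrison, the paper verifies by direct multiplication), and then compute the column $\ell_1$-norm from the known sign pattern. Your introduction of $t=\sqrt{r-1}\,p/q$ to reduce the four inequalities to linear statements in $t$ on $[1,r-1]$ is a slightly cleaner packaging of the same algebra the paper carries out directly, but there is no substantive difference.
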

\begin{proof}
 The matrix $H_p$ is an Hermitian matrix whose eigenvalues are $r\alpha_p + (1-r\alpha_p)= 1$ with multiplicity $1$ and $1-r\alpha_p\ge 0$ with multiplicity $r-1$. 
    Since $1\le  p,q \le \sqrt{r-1}$, all eigenvalues are strictly positive,  $H_p$ is positive definite, and its smallest singular value coincides with its smallest eigenvalue $\sigma_r(H_p) = 1-r\alpha_p$.  
     
The inverse of $H_p$ is    \[
    H_p^{-1} 
    = -  \alpha_p  (1- r\alpha_p)^{-1}  E + (1- r\alpha_p)^{-1} I, 
    \]
since 
\[
[\alpha_p E + (1- r\alpha_p) I] [ -  \alpha_p   E +  I] = [-(1- r\alpha_p)\alpha_p+\alpha_p-r\alpha_p^2]E+ (1- r\alpha_p) I =  (1- r\alpha_p) I.
\]
The $1$-norm of the columns of $H_p^{-1}$ are all the same and thus equal to  $\|H_p^{-1}\|_1$, that is, 
\begin{align*}
    \|H_p^{-1}\|_1 & =  (r-1) \alpha_p  (1- r\alpha_p)^{-1}   + (1- r\alpha_p)^{-1} (1-\alpha_p)
    = \frac{1+ (r-2)\alpha_p  }{1- r\alpha_p}
    =\frac{r-2}{r} \frac{\frac {2r-2}{r-2}-1+ r\alpha_p  }{1- r\alpha_p}\\
    &=\frac{r-2}{r}\lp  2\frac {r-1}{r-2}\frac1{\frac 1{\sqrt{r-1}}\frac qp}  - 1\rp
    =   \frac 1r \left[ 2(r-1)^{3/2}\frac pq - (r-2) \right],
\end{align*}
where 
\begin{align*}
   \frac 1r \left[ 2(r-1)^{3/2}\frac pq - (r-2) \right] &=
   \sqrt{r-1}\frac pq \frac 1r \left[ 2(r-1) - \frac {r-2}{\sqrt{ r-1}}\frac qp \right]
   \\& \ge \sqrt{r-1}\frac pq \frac 1r \left[ 2(r-1) - (r-2) \right] = \sqrt{r-1}\frac pq \ge 1, 
\end{align*}
and 
\begin{align*}
   \frac 1r \left[ 2(r-1)^{3/2}\frac pq - (r-2) \right] &=
   \frac {r-1}r \left[ 2\sqrt{r-1}\frac pq - 1 + \frac{1}{r-1} \right]
   =
    2\sqrt{r-1}\frac pq - 1   - 
   2\frac {\sqrt{r-1}}r \frac pq   + \frac{2}{r(r-1)}
   \\& \le 2\sqrt{r-1}\frac pq - 1
   \le 2r-3.
\end{align*}
\end{proof}

\subsection{Common Steps for the Main Results}

\subsubsection{Notation and Prerequisites}

Here is a review of the norms and notation we use:
\begin{itemize}

    \item $\|A\|$, $\|v\|$ is the classical Euclidean norm on matrices (also called spectral norm) and on vectors. 
    
    \item  $\|A\|_1$ is the induced $1$-norm, that coincides with the maximum $1$-norm of the columns of $A$ 
    \[
    \|A\|_1 = \max_i \|a_i\|_1 = \max_i\sum_j |a_{j,i}|. 
    \]
    
    \item $\|A\|_{1,2}$ is the induced $(1,2)$-norm, that coincides with the maximum Euclidean norm of the columns of $A$, 
    \[
    \|A\|_{1,2} = \max_i \|a_i\|. 
    \]
    
    \item $\|A\|_F$ is the Frobenius norm, that is, $\|A\|_F^2 = \text{trace}(A^\top A) = \sum_{i,j} |a_{j,i}|^2.$

    \item When we say ``if $\ve = \mac O(f({\bf x}))$ ...'' it means ``there exists an absolute constant $C>0$ such that for every $\ve \le Cf({\bf x})$\dots'' 
    
    \item Likewise, when we say ``then $g({\bf x}) = \mac O(f({\bf x}))$'', it means ``then there exists an absolute constant $C>0$ such that for every value of the variables $\bf x$ in their respective domains, $g({\bf x})\le C f({\bf x})$ holds''.    
    
\end{itemize}

The following are known results on the relations between the different norms and the singular values. 
\begin{lemma}
    \label{lem:norm_inequalities} Given $A,B,C$ matrices with opportune dimensions, then
    \[
    \|ABC\|_{1,2} \le \|A\|\|B\|_{1,2}\|C\|_1, 
    \]
    and, if $A\in \f R^{m\times n}$,
    then     
    \[ \|A\| \le \|A\|_F\le \sqrt {\min\{m,n\}} \|A\|, \qquad 
    \|A\|_{1,2}\le \|A\|\le \sqrt n \|A\|_{1,2}.
    \]
\end{lemma}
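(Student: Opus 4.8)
The final statement to prove is Lemma~\ref{lem:norm_inequalities}, which collects standard inequalities relating the spectral, Frobenius, induced $1$-norm, and induced $(1,2)$-norm.

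My plan is to establish the multiplicative inequality $\|ABC\|_{1,2} \le \|A\|\,\|B\|_{1,2}\,\|C\|_1$ first, since the remaining bounds are textbook facts. The key observation is that $\|\cdot\|_{1,2}$ measures the largest Euclidean column norm, so I work column by column. Writing $(ABC)_j = AB(Cc_j')$ where I abbreviate the $j$th column of $C$ appropriately, the idea is: the $j$th column of $BC$ is a linear combination $\sum_k c_{k,j} b_k$ of the columns $b_k$ of $B$, so its Euclidean norm is at most $\big(\sum_k |c_{k,j}|\big) \max_k \|b_k\| \le \|C\|_1 \|B\|_{1,2}$ by the triangle inequality and the definitions of the two norms. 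Then left-multiplying by $A$ can only inflate each column's Euclidean norm by the factor $\|A\| = \sigma_{\max}(A)$, since $\|Av\| \le \|A\|\,\|v\|$ for every vector $v$. Taking the maximum over columns $j$ gives the claim.

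Next I would dispatch the four scalar norm comparisons. For $\|A\| \le \|A\|_F$: the spectral norm is the largest singular value $\sigma_1$, while $\|A\|_F^2 = \sum_i \sigma_i^2 \ge \sigma_1^2$, so taking square roots gives the bound; the upper bound $\|A\|_F \le \sqrt{\min\{m,n\}}\,\|A\|$ follows because $A$ has at most $\min\{m,n\}$ nonzero singular values, each at most $\sigma_1$, so $\|A\|_F^2 = \sum_i \sigma_i^2 \le \min\{m,n\}\,\sigma_1^2$. For the $(1,2)$ comparisons: $\|A\|_{1,2} \le \|A\|$ holds because the $j$th column equals $Ae_j$, hence $\|a_j\| = \|Ae_j\| \le \|A\|\,\|e_j\| = \|A\|$, and taking the maximum over $j$ preserves the bound; the reverse $\|A\| \le \sqrt{n}\,\|A\|_{1,2}$ follows by chaining $\|A\| \le \|A\|_F = \sqrt{\sum_j \|a_j\|^2} \le \sqrt{n}\,\max_j \|a_j\| = \sqrt{n}\,\|A\|_{1,2}$.

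I do not anticipate a genuine obstacle here, since every ingredient is elementary; the only points requiring mild care are keeping the dimensions in the multiplicative inequality consistent (so that the products $AB$, $BC$, $ABC$ are well-defined and the norm of the appropriate factor is used on the appropriate side) and correctly identifying $\min\{m,n\}$ as the number of potentially nonzero singular values in the Frobenius bound. The cleanest route is to prove the multiplicative inequality directly from the column-wise characterization of $\|\cdot\|_{1,2}$ rather than trying to interpolate between induced operator norms, which avoids any subtlety about whether $\|\cdot\|_{1,2}$ is submultiplicative as an operator norm in the usual sense.
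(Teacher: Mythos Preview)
Your proposal is correct. The paper defers the multiplicative inequality and the Frobenius comparisons to a reference and only spells out the $(1,2)$--spectral comparisons; your argument for $\|A\|_{1,2}\le\|A\|$ matches theirs exactly, and for $\|A\|\le\sqrt{n}\,\|A\|_{1,2}$ you route through the Frobenius norm whereas they write $\|Av\|\le\sum_i|v_i|\|a_i\|\le\|A\|_{1,2}\|v\|_1\le\sqrt{n}\,\|A\|_{1,2}\|v\|$ directly --- both are one-line arguments using the same underlying $\ell_1$--$\ell_2$ inequality, so there is no substantive difference.
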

\begin{proof}
  In  \cite[Section 6.2]{higham2002NumAlg} one can find all the above inequalities except for the equivalence constants between $\|A\|$ and $\|A\|_{1,2}$. 
  Notice that
  \[
  \|A\|_{1,2} = \max_i \|a_i\| = \max_i \|Ae_i\|\le \|A\|,
  \]
  \[
  \|A\| = \max_{\|v\|= 1} \|Av\| \le  \max_{\|v\|= 1} \sum_i \|a_i\||v_i|\le \|A\|_{1,2}
  \max_{\|v\|= 1} \|v\|_1 = \sqrt n \|A\|_{1,2}.
  \]
\end{proof}

\begin{lemma}
\cite[Section III.6]{bhatia1997Matrix} \label{lem:lower_bound_last_sv}   Given $A\in  \R^{n\times r}$, $B\in  \R^{m\times r}$ matrices with $r\le n$ and $r\le m$, then
    \[
    \sigma_r(A) \|B\| \ge \sigma_r(AB^\top) \ge \sigma_r(A)\sigma_r(B).
    \]
\end{lemma}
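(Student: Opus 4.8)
The plan is to reduce both inequalities to a single Loewner-order sandwich for the Gram matrix $MM^\top$, where $M := AB^\top$, and then invoke monotonicity of eigenvalues. Writing $MM^\top = A\,(B^\top B)\,A^\top$, I would first record the elementary link between singular values and eigenvalues of Gram matrices. Since $A\in\R^{n\times r}$ with $r\le n$, the matrix $AA^\top\in\R^{n\times n}$ has rank at most $r$: its $r$ possibly-nonzero eigenvalues are $\sigma_1(A)^2\ge\dots\ge\sigma_r(A)^2$, while the remaining $n-r$ eigenvalues vanish. Hence the $r$-th largest eigenvalue satisfies $\lambda_r(AA^\top)=\sigma_r(A)^2$. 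The same reasoning applied to $M$, which has rank at most $r$ because it factors through $\R^r$, gives $\lambda_r(MM^\top)=\sigma_r(AB^\top)^2$. Here $\lambda_r(\cdot)$ denotes the $r$-th largest eigenvalue.

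Next I would control the inner factor. The matrix $B^\top B\in\R^{r\times r}$ is positive semidefinite with eigenvalues $\sigma_i(B)^2$, so as $r\times r$ Hermitian matrices
\[
\sigma_r(B)^2\, I_r \;\preceq\; B^\top B \;\preceq\; \|B\|^2\, I_r .
\]
Conjugating by $A$ preserves the Loewner order, since for any $x$ the quadratic form $x^\top A\,(B^\top B)\,A^\top x$ is squeezed between $\sigma_r(B)^2$ and $\|B\|^2$ times $\|A^\top x\|^2$. This yields
\[
\sigma_r(B)^2\, AA^\top \;\preceq\; MM^\top \;\preceq\; \|B\|^2\, AA^\top .
\]

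Finally I would apply Weyl's monotonicity theorem, namely that $X\preceq Y$ implies $\lambda_k(X)\le\lambda_k(Y)$ for every index $k$; this is immediate from the Courant--Fischer min-max characterization, as $x^\top X x\le x^\top Y x$ pointwise forces the corresponding $\max$-$\min$ values to be ordered. Taking $k=r$ in the displayed sandwich and substituting the two identities $\lambda_r(AA^\top)=\sigma_r(A)^2$ and $\lambda_r(MM^\top)=\sigma_r(AB^\top)^2$ turns it into
\[
\sigma_r(A)^2\,\sigma_r(B)^2 \;\le\; \sigma_r(AB^\top)^2 \;\le\; \|B\|^2\,\sigma_r(A)^2 ,
\]
and taking square roots gives the claim. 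The only step demanding care is the alignment of eigenvalue indices: both $AA^\top$ and $MM^\top$ carry a block of forced zero eigenvalues, so one must check that the $r$-th largest eigenvalue genuinely lands on $\sigma_r(\cdot)^2$ rather than on one of those spurious zeros. This is exactly where the hypotheses $r\le n$, $r\le m$, and the fact that $A$ and $B$ each have $r$ columns are used. Beyond this bookkeeping, the argument is entirely mechanical.
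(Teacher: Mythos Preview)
Your argument is correct. The Loewner sandwich $\sigma_r(B)^2\,AA^\top \preceq A(B^\top B)A^\top \preceq \|B\|^2\,AA^\top$ together with Weyl's monotonicity does exactly the job, and your remark about the $r$-th eigenvalue landing on $\sigma_r(\cdot)^2$ rather than a spurious zero is the right sanity check (and holds because $r\le \min(n,m)$ and both $AA^\top$ and $MM^\top$ have rank at most $r$).

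There is nothing to compare against: the paper does not prove this lemma but simply cites \cite[Section~III.6]{bhatia1997Matrix}. Your self-contained Gram-matrix argument is a clean elementary substitute for that reference; in Bhatia's treatment the inequalities fall out of the more general multiplicative Horn-type bounds $\sigma_{i+j-1}(AB^\top)\le\sigma_i(A)\sigma_j(B)$ and their counterparts, whereas you get only the specific endpoints needed here with less machinery.
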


Another result we need is an estimation on how much an additive perturbation  $M$ can alter the volume of a matrix $A$. 

\begin{lemma}
  \label{lem:perturbation_on_volume}  Given $A,M\in \R^{m\times r}$, 
     \begin{align*}
        \det((A+M)^\top (A+M) ) &\le \det(A^\top A) + \prod_i^r  (\sigma_i(A)+\|M\|)^2 -  \prod_i^r  \sigma_i(A)^2\\&\le \det(A^\top A) + (\|A\| +  \|M\|)^{2r} -  \|A\|^{2r}. 
    \end{align*}
\end{lemma}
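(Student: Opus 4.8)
The plan is to reduce everything to the singular values of $A$ and $A+M$ and then apply a standard perturbation bound. Write $\sigma_i = \sigma_i(A)$ for $i=1,\dots,r$, so that $\det(A^\top A) = \prod_{i=1}^r \sigma_i^2$ and likewise $\det((A+M)^\top(A+M)) = \prod_{i=1}^r \sigma_i(A+M)^2$. Everything then becomes an inequality between products of nonnegative reals.

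For the first inequality, I would invoke Weyl's inequality for singular values, which gives $\sigma_i(A+M) \le \sigma_i(A) + \sigma_1(M) = \sigma_i + \|M\|$ for every $i$. Since all quantities involved are nonnegative, taking products yields
\[
\det((A+M)^\top(A+M)) = \prod_{i=1}^r \sigma_i(A+M)^2 \le \prod_{i=1}^r (\sigma_i + \|M\|)^2.
\]
Using $\det(A^\top A) = \prod_i \sigma_i^2$, the right-hand side equals $\det(A^\top A) + \prod_i(\sigma_i+\|M\|)^2 - \prod_i \sigma_i^2$, which is exactly the claimed first bound; the apparently redundant additive form simply isolates the volume increase $\prod_i(\sigma_i+\|M\|)^2 - \prod_i \sigma_i^2$ caused by the perturbation.

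For the second inequality, set $a = \|A\| = \sigma_1 = \max_i \sigma_i$ and $m = \|M\| \ge 0$, and consider the auxiliary function $f(t_1,\dots,t_r) = \prod_{i=1}^r (t_i+m)^2 - \prod_{i=1}^r t_i^2$ on the box $[0,a]^r$. A direct computation gives
\[
\frac{\partial f}{\partial t_k} = 2(t_k+m)\prod_{i\ne k}(t_i+m)^2 - 2 t_k \prod_{i\ne k} t_i^2,
\]
which is nonnegative because $t_k + m \ge t_k \ge 0$ and $\prod_{i\ne k}(t_i+m)^2 \ge \prod_{i\ne k} t_i^2 \ge 0$. Hence $f$ is nondecreasing in each coordinate, so over the box its maximum is attained at $t_i = a$ for all $i$. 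Since the actual singular values satisfy $(\sigma_1,\dots,\sigma_r) \in [0,a]^r$, evaluating $f$ there and comparing with the maximizer gives $\prod_i(\sigma_i+m)^2 - \prod_i \sigma_i^2 \le (a+m)^{2r} - a^{2r}$, which combined with the first inequality is precisely the second claimed bound.

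There is no genuine obstacle here; the only points requiring care are recognizing that the additive form of the first bound is just Weyl's product inequality after substituting $\det(A^\top A) = \prod_i \sigma_i^2$, and justifying the coordinatewise monotonicity of $f$ cleanly so that the box maximum sits at $t_i = \|A\|$. One could alternatively derive the second inequality from a telescoping identity for $\prod_i(\sigma_i+m) - \prod_i \sigma_i$, but the monotonicity argument avoids bookkeeping and applies directly to the squared products.
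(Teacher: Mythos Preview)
Your proof is correct and follows essentially the same route as the paper: the first inequality is Weyl's bound $\sigma_i(A+M)\le\sigma_i(A)+\|M\|$ followed by taking products, and the second is the coordinatewise monotonicity of $f(t_1,\dots,t_r)=\prod_i(t_i+m)^2-\prod_i t_i^2$ established by the same partial-derivative computation, after which one replaces each $\sigma_i$ by $\|A\|$. The paper's presentation is terser but the argument is identical.
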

\begin{proof}
We have 
    \begin{align*}
          \det((A+M)^\top (A+M) ) &= \prod_i^r  \sigma_i(A+M)^2\le  \prod_i^r  (\sigma_i(A)+\|M\|)^2\\
          & = \prod_i^r  \sigma_i(A)^2  + \prod_i^r  (\sigma_i(A)+\|M\|)^2 -  \prod_i^r  \sigma_i(A)^2 . 
    \end{align*}
    Here $\prod_i^r  (\sigma_i(A)+\|M\|)^2 -  \prod_i^r  \sigma_i(A)^2$ is increasing in each of the $\sigma_j(A)$ since its derivative is 
    \[
    2\frac{\prod_i^r  (\sigma_i(A)+\|M\|)^2}{\sigma_j(A)+\|M\|} -  2\frac{\prod_i^r  \sigma_i(A)^2}{\sigma_j(A)} \ge 0, 
    \]
    so we can majorize each $\sigma_j(A)$ with $\|A\|$ to complete the proof. 
\end{proof}

When the Arithmetic Mean (AM) and the Geometric Mean (GM) of some nonnegative elements $x_i$ are equal to each other, then all the elements coincide. If the difference between AM and GM is small, then it is also reasonable to expect that the elements are close to each other, as shown by the following result.

\begin{lemma}
\label{lem:AM-GM}    Given $x_1\ge x_2\ge \dots \ge x_n\ge 0$, let $A$ and $G$ be their arithmetic and geometric means respectively, that is, 
    \[
    A = \frac 1n \sum_ix_i,\qquad G = \sqrt[n]{\prod_i x_i}.
    \]
    The following relations hold:
    \begin{align*}
          (\sqrt {x_1} - \sqrt{x_n})^2 &\le n (A-G),\\
          x_1-x_n& \le  (\sqrt {x_1} + \sqrt{x_n})  \sqrt {n(A-G)}.
    \end{align*}
\end{lemma}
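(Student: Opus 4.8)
The plan is to reduce everything to the first inequality, since the second one follows from it immediately. Factoring the difference of squares as $x_1 - x_n = (\sqrt{x_1}-\sqrt{x_n})(\sqrt{x_1}+\sqrt{x_n})$ and taking square roots in the first bound (legitimate because $x_1 \ge x_n$ forces both sides to be nonnegative) gives $\sqrt{x_1}-\sqrt{x_n} \le \sqrt{n(A-G)}$; multiplying this by $\sqrt{x_1}+\sqrt{x_n}$ reproduces the second estimate verbatim. So the entire content lies in proving $(\sqrt{x_1}-\sqrt{x_n})^2 \le n(A-G)$.

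For that, I would substitute $y_i := \sqrt{x_i}$, so that $y_1 \ge \dots \ge y_n \ge 0$, and observe that the geometric mean rewrites as $G = \big(\prod_i x_i\big)^{1/n} = \big(\prod_i y_i\big)^{2/n}$. The target then becomes $\sum_i y_i^2 - n\big(\prod_i y_i\big)^{2/n} \ge (y_1-y_n)^2$. Expanding $(y_1-y_n)^2$ and cancelling the $y_1^2+y_n^2$ contributions, this is equivalent to
\[
\sum_{i=2}^{n-1} y_i^2 + 2 y_1 y_n \; \ge \; n \left(\prod_{i=1}^n y_i\right)^{2/n}.
\]

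The key step—and the only idea in the proof—is to read the left-hand side as $n$ times the arithmetic mean of exactly $n$ nonnegative quantities on which AM-GM applies cleanly. I would split the term $2y_1 y_n$ into two copies of $y_1 y_n$ and apply AM-GM to the $n$ quantities $y_2^2, y_3^2, \dots, y_{n-1}^2, y_1 y_n, y_1 y_n$. Their product telescopes: $\prod_{i=2}^{n-1} y_i^2 \cdot (y_1 y_n)^2 = \big(\prod_{i=1}^n y_i\big)^2$, whose $n$-th root is precisely $\big(\prod_i y_i\big)^{2/n}$, so AM-GM yields exactly the displayed inequality. Translating back via $y_i = \sqrt{x_i}$ recovers the first claim, and one checks equality already holds at $n=2$, where there are no middle terms and the two copies of $y_1 y_2$ make AM-GM exact.

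I do not expect a genuine analytic obstacle here; the delicate part is purely combinatorial bookkeeping, namely choosing the right grouping so that the product of the $n$ selected terms collapses to $\big(\prod_i y_i\big)^2$ and confirming the count is exactly $n$. Once that grouping is spotted, the rest is a one-line invocation of AM-GM together with the elementary nonnegativity needed to take square roots.
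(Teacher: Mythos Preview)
Your argument is correct. The reduction of the second inequality to the first is immediate, and your proof of the first inequality via the grouping $y_2^2,\dots,y_{n-1}^2,\,y_1y_n,\,y_1y_n$ is clean: there are exactly $n$ nonnegative terms, their product is $\big(\prod_i y_i\big)^2$, and AM--GM gives precisely $\sum_{i=2}^{n-1}y_i^2+2y_1y_n\ge n\big(\prod_i y_i\big)^{2/n}$, which is the rearranged target. The ordering hypothesis is never used in the first inequality (only nonnegativity), so you in fact prove the slightly stronger statement for any pair of indices, not just the extremes.

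The paper takes a different route: it fixes $x_1$ and $x_n$ and minimizes $A-G$ over the remaining variables by setting $\partial(A-G)/\partial x_i=\frac{1}{n}(1-G/x_i)=0$, which forces $x_2=\dots=x_{n-1}=G=\sqrt{x_1x_n}$ and makes $n(A-G)=(\sqrt{x_1}-\sqrt{x_n})^2$ exactly at the minimum. Your approach is more elementary---it is a single invocation of AM--GM with no calculus and no need to argue that the critical point is a global minimum---and it makes the equality case ($x_2=\dots=x_{n-1}=\sqrt{x_1x_n}$) transparent as the AM--GM equality condition. The paper's argument, on the other hand, makes clearer \emph{why} the bound is sharp: it literally exhibits the extremal configuration by optimization.
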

\begin{proof}
    Fix $x_1$ and $x_n$. Let us try to minimize $A-G$. The derivative with respect to $x_i$ is
\begin{align*}
    \frac{\partial }{\partial {x_i}} (A-G)  &= 
    \frac 1n - \frac Gn\frac{x_i^{1/n - 1}}{\sqrt[n]{x_i}} = \frac 1n(1 - G/x_i)
\end{align*}
so $A-G$ has a minimum for $G=x_2 = \dots = x_{n-1}$. In particular, 
\[
G^n = x_1x_n G^{n-2}, \ A = \frac{x_1+x_n + (n-2)G}{n} \implies 
G = \sqrt{x_1x_n}, \   A = \frac 1n(\sqrt {x_1} - \sqrt{x_n})^2 + G. 
\] 
\end{proof}

A last essential result for the estimation is to bound $|(1\pm x)^n -1|$ when $x$ is very small. 

\begin{lemma}
    \label{lem:bound_perturbation_power} Given $0\le x\le \frac 1{nc}\le  1$ for some positive integer $n$ and some positive $c$, then
    \[
    (1+x)^n - 1 \le ncx(e^{1/c}-1), \qquad 1-(1-x)^n  \ge ncx(1- e^{-1/c}).\]
    If $0\le x\le \frac 1{nc}\le  \frac{nc-1}{c}$ then 
    \[
    \qquad 1-(1-x)^{1/n}  \le \frac {xc}{nc- 1}.
    \]
    If instead $0\le x \le 1$ then
    \[
      1 - (1-x)^n \le nx.
    \]
\end{lemma}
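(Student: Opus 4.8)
The plan is to treat the four estimates separately, since each is elementary; the common engine is the comparison $1+u\le e^u$ (equivalently $1-x\le e^{-x}$ for $x<1$), together with the fact that a convex (resp. concave) function vanishing at the origin lies below (resp. above) its secant line on a bounded interval, plus Bernoulli for the last bound.

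For the first estimate I would start from $(1+x)^n\le e^{nx}$, so that $(1+x)^n-1\le e^{nx}-1$. The map $t\mapsto e^t-1$ is convex and vanishes at $t=0$, hence on $[0,1/c]$ it lies below the chord joining $(0,0)$ and $(1/c,\,e^{1/c}-1)$, namely $t\mapsto c(e^{1/c}-1)\,t$. Substituting $t=nx\le 1/c$, which holds because $x\le 1/(nc)$, yields the claim. The second estimate is the mirror image: from $1-x\le e^{-x}$ I get $(1-x)^n\le e^{-nx}$, so $1-(1-x)^n\ge 1-e^{-nx}$; now $t\mapsto 1-e^{-t}$ is concave and vanishes at $0$, hence lies above its chord on $[0,1/c]$, and evaluating at $t=nx$ gives $1-e^{-nx}\ge ncx(1-e^{-1/c})$.

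The third inequality is the only delicate one. Writing $a=c/(nc-1)$, which is positive exactly because the hypothesis $1/(nc)\le (nc-1)/c$ forces $nc-1>0$, the claim is equivalent to $(1-x)^{1/n}\ge 1-ax$. Both sides are nonnegative on the admissible range, since $ax\le a/(nc)=1/(n(nc-1))\le 1$ by the same hypothesis, so I may raise to the $n$th power and reduce to $(1-ax)^n\le 1-x$. Bounding $(1-ax)^n\le e^{-nax}$, it remains to prove $e^{-nax}\le 1-x$, i.e. $nax\ge -\ln(1-x)$. Since $x\mapsto -\ln(1-x)/x$ is increasing (its power series $1+x/2+x^2/3+\cdots$ has nonnegative coefficients), it suffices to verify this at the right endpoint $x=1/(nc)$, where it collapses to $\ln(1+1/s)\le 1/s$ with $s=nc-1>0$, a special case of $\ln(1+u)\le u$. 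I expect this endpoint reduction to be the main obstacle: the secant/monotonicity step is what lets one avoid an intractable direct expansion of $(1-ax)^n$, and it is precisely here that the exact form of the hypothesis on $1/(nc)$ (guaranteeing both $nc-1>0$ and $ax\le1$) is used.

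Finally, the fourth inequality is just Bernoulli, $(1-x)^n\ge 1-nx$ for $x\in[0,1]$, which I would prove by a one-line induction: multiplying $(1-x)^n\ge 1-nx$ by the nonnegative factor $1-x$ gives $(1-x)^{n+1}\ge 1-(n+1)x+nx^2\ge 1-(n+1)x$, and rearranging yields $1-(1-x)^n\le nx$.
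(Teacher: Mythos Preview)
Your proof is correct. The approaches differ in flavor: the paper argues via the telescoping identity $(1\pm x)^n - 1 = \pm x\sum_{k=0}^{n-1}(1\pm x)^k$ and then bounds each summand using $(1\pm 1/y)^y \lessgtr e^{\pm 1}$, while you go through the global bound $(1\pm x)^n \le e^{\pm nx}$ and then control $e^{t}-1$ and $1-e^{-t}$ by their secant lines via convexity/concavity. Both routes are equally elementary and give the same constants. For the third estimate the contrast is sharper: the paper proves the polynomial inequality $(1+ny)(1-y)^n\le 1$ directly (by checking the derivative) and then specializes $y=cx/(nc-1)$, whereas you pass through $e^{-nax}$ and reduce to the endpoint by the monotonicity of $-\ln(1-x)/x$; your argument is a bit longer but has the virtue of reusing the same exponential comparison throughout. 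For the fourth inequality the paper again uses the telescoping sum (each term $(1-x)^k\le 1$), which is marginally shorter than your Bernoulli induction but equivalent.
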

\begin{proof} Let us prove the four inequalities above. For the first inequality, we have 
    \begin{align*}
          (1+x)^n - 1 =  x\sum_{k=0}^{n-1} (1+x)^k\le    x\sum_{k=0}^{n-1} \left(1+\frac 1{nc}\right)^k = x\frac{ \left(1+\frac 1{nc}\right)^n-1}{\frac 1{nc}} \le ncx(e^{1/c}-1),
    \end{align*}
  where we used $(1+1/y)^y\le e$ for every $y> 0$. 
  The second inequality is analogous since 
   \begin{align*}
          1 -(1-x)^n  =  x\sum_{k=0}^{n-1} (1-x)^k\ge    x\sum_{k=0}^{n-1} \left(1-\frac 1{nc}\right)^k = x\frac{ 1- \left(1-\frac 1{nc}\right)^n}{\frac 1{nc}} \le ncx(1-e^{-1/c}),
    \end{align*}
     where we used $(1-1/y)^y\le e^{-1}$ for every $y> 0$. 

Notice now that for every $0\le y\le 1$, the function $(1+ny)(1-y)^n$ admits a global maximum for $y=0$ since
\[
\frac{\partial}{\partial y}(1+ny)(1-y)^n = n(1-y)^{n-1}\left( 1-y - (1-ny) \right) = 0 \iff y = 0,1
\]
unless $n=1$. In any case, $(1+ny)(1-y)^n\le 1$. The third inequality is satisfied if and only if $nc \ge 1 + 1/n$ and $0\le ncx\le 1$, so we can take $y = cx/(nc-1)\le 1$ and find
\[
\lp 
1- \frac{cx}{nc-1} 
\rp^n \le \frac{1}{1+n\frac{cx}{nc-1}} 
= 1 - \frac{nc}{nc-1+ncx}x\le 1 - x. 
\]  
       The fourth inequality holds due to
    \begin{align*}
          1- (1-x)^n  =  x\sum_{k=0}^{n-1} (1-x)^k\le    nx.
    \end{align*}
\end{proof}

\subsubsection{Proof of Lemma~\ref{lem:lower_bound_detR}}
\label{app:proof_of_lower_bound_detR}

    Recall from Lemma~\ref{lem:definition_of_R} that 
      \[
W^\# = W^*R + (N^*-N^\#)P = W^*R + M.
\]
  Since $W^\#$ is feasible for the min-vol NMF problem \eqref{eq:apprminvol}, its volume is larger than the volume of $W^*$, so 
\begin{align*}
 \det((W^*)^\top W^*)\le \det((W^\#)^\top W^\#) = \det((W^*R + M)^\top (W^*R+M)).
\end{align*}
Notice that, by Lemma~\ref{lem:norm_inequalities} and Lemma~\ref{lem:definition_of_R}, 
\[
\|M\| \le \sqrt r \|M \|_{1,2} 
\le 4   \sqrt{r(r-1)} \frac pq \ve
 = \mac O\left(\frac{\sigma_r(W^\#)}{r}\right).
\]
Using Lemma~\ref{lem:perturbation_on_volume}, we get
\begin{align*}
   \det((W^\#)^\top W^\#)&\le \det((W^*R)^\top W^*R)+ \prod_i [\sigma_i(W^*R) + \|M  \|]^2  -\prod_i  \sigma_i(W^*R)^2\\
    &=  \det(R^\top R)   \det((W^*)^\top W^*)  +\prod_i [\sigma_i(W^\# - M) + \|M  \| ]^2
    -\prod_i\sigma_i(W^\# - M)^2\\
     &\le \det(R^\top R) \det((W^\#)^\top W^\#) +\prod_i [\sigma_i(W^\#) + 2\|M\| ]^2
    -\prod_i[\sigma_i(W^\#) - \|M\|]^2 , 
    \end{align*} 
where the last inequality holds since $\|M\| = \mac O(\sigma_r(W^\#))$.
One can then obtain a lower bound to $\det(R)^2$ since $W^\#$ is full rank, so $\det((W^\#)^\top W^\#) \ne 0$ and
\begin{align*}
 \det(R^\top R) &\ge 1 -\frac{ 
 \prod_i [\sigma_i(W^\#) + 2\|M\| ]^2
    -\prod_i[\sigma_i(W^\#) - \|M\|]^2
 }{\det((W^\#)^\top W^\#)}\\
   &\ge 1 - \left[ \prod_i \left[1 + \frac{2\|M\|}{\sigma_i(W^\#)} \right]^2
    -\prod_i  \left[1- \frac{\|M\|}{\sigma_i(W^\#)}\right]^2 \right]
    \\
   &\ge 1 - \left[  \left[1 + \frac{2\|M\|}{\sigma_r(W^\#)} \right]^{2r}
    -  \left[1- \frac{\|M\|}{\sigma_r(W^\#)}\right]^{2r} \right].
\end{align*}
Keeping in mind that $\|M\|  = \mac O\left(\frac{\sigma_r(W^\#)}{r}\right)$, and using Lemma~\ref{lem:bound_perturbation_power}, we find 
\begin{align*}
  \det(R)^2 &\ge 1 - \left[  \left[1 + \frac{2\|M\|}{\sigma_r(W^\#)} \right]^{2r}
    -  \left[1- \frac{\|M\|}{\sigma_r(W^\#)}\right]^{2r} \right]\\
    &= 1 - \left[  \left[1 + \mac O\left(     \frac{r\|M\|}{\sigma_r(W^\#)}     \right)\right]
   -  \left[1-\mac O\left(     \frac{r\|M\|}{\sigma_r(W^\#)}     \right)\right] \right]\\
   &=  1 - \mac O\left(     \frac{r\|M\|}{\sigma_r(W^\#)}     \right)
  =  1 -\mac O\left(     \frac{r^2}{\sigma_r(W^\#)}   \frac pq \ve  \right).
  \end{align*}

\subsubsection{Proof of Lemma~\ref{lem:full_rank_W}}
\label{app:full_rank_W}

From Lemma~\ref{lem:definition_of_R}, 
\[
W^\# H_p^\top = W^\#(H^\#)^\top V = W^*(H^*)^\top V+ (N^*-N^\#)V.
\]
Notice that $H^\top V\in \R^{r\times r}$ is also column stochastic, so using Lemma~\ref{lem:norm_inequalities}, Lemma~\ref{lem:lower_bound_last_sv},  Lemma~\ref{lem:last_sv_of_Hp} and the perturbation theorem of singular values,
\begin{align*}
    \sigma_r(W^*) &= \frac{\sigma_r(W^*)\|(H^*)^\top V\| }{\|(H^*)^\top V\|} \ge \frac{\sigma_r(W^*(H^*)^\top V) }{\|(H^*)^\top V\|_F} = 
    \frac{\sigma_r(W^\# H_p-  (N^*\text{$-$}N^\#)V ) }{\sqrt{\sum_{i,j} ((H^*)^\top V)_{i,j}^2 }}\ge 
    \frac{\sigma_r(W^\# H_p) -  \| (N^*\text{$-$}N^\#)V \| }{\sqrt{\sum_{i,j} ((H^*)^\top V)_{i,j} }}\\
    &\ge 
    \frac{\sigma_r(W^\#) \sigma_r(H_p) -  \sqrt r\| (N^*\text{$-$}N^\#)V \|_{1,2} }{\sqrt r}
     \ge 
    \frac{\sigma_r(W^\#) \frac 1{\sqrt{r-1}}\frac qp -  
    \sqrt r\| N^*\text{$-$}N^\#\|_{1,2}\|V\|_1 }{\sqrt r}
    \\
     &\ge 
      \frac{\sigma_r(W^\#)   }{\sqrt {r(r-1)}}
\frac qp -
    \frac{  2\ve \sqrt r }{\sqrt r}
 = 
 \frac{\sigma_r(W^\#)   }{\sqrt {r(r-1)}}
\frac qp
-  2\ve.
\end{align*}
This is enough to conclude that $W^*$ is full rank for $ \frac{\sigma_r(W^\#)   }{\sqrt {r(r-1)}}
\frac qp
\ge 2\ve$.

\section{Proof of Theorem~\ref{th:main2}}
\label{app:proof_of_theo_sep}

\subsection{Proof of Lemma~\ref{lem:lower_bound_R_sepcase}}
\label{app:proof_to_lower_bound_R_sepcase}

    Recall from Lemma~\ref{lem:def_of_Hp} and Lemma~\ref{lem:last_sv_of_Hp} that
\[
H_p=  \alpha_p E + (1- r\alpha_p) I, \qquad 
H_p^{-1}=(1- r\alpha_p)^{-1} (-\alpha_p E +  I),\qquad
1-r\alpha_p = \frac 1{\sqrt{r-1}}\frac qp.
\]
Since $1-\alpha_p> 0$ and $\alpha_p\ge 0$, each column of $H_p^{-1}$ has exactly one positive entry. 
From Lemma~\ref{lem:definition_of_R} recall that $R = (H^*)^\top V H_p^{-1}$, where  $(H^*)^\top V$ is nonnegative and column stochastic. In particular, $0\le (H^*)^\top V\le E$ and \[
R=(H^*)^\top VH_p^{-1} =(1- r\alpha_p)^{-1} (H^*)^\top V(-\alpha_p E +  I).
\]
As a consequence, each entry $r_{i,j}$ of $R$ is bounded from below by
\begin{align*}
    r_{i,j} &= \sum_k (1- r\alpha_p)^{-1}((H^*)^\top V)_{i,k}  (-\alpha_p E +  I)_{k,j}
\\&= (1- r\alpha_p)^{-1}\left[ 
((H^*)^\top V)_{i,j}  (1-\alpha_p) - \alpha_p 
\sum_{k\ne j} ((H^*)^\top V)_{i,k} 
\right]
\ge  -(r-1)\alpha_p (1- r\alpha_p)^{-1} := -\beta_p.
\end{align*}
%Therefore, $R+\beta_p E\ge 0$. 
Notice that in the separable case, that is, for $p=1$, $\alpha_p = \beta_p = 0$. In particular $\beta_p \approx p-1$ for $p\to 1$. Here we show that for $p = 1 + \mac O(1/r)$ then $\beta_p = \mac O(p-1)=  \mac O(1/r)$. In fact, from Lemma~\ref{lem:last_sv_of_Hp},
\begin{align*}
    \beta_p & = \frac{r-1}{r}\frac{r\alpha_p}{1-r\alpha_p}
    = \frac{r-1}{r}\left[ 
    \frac{1}{1-r\alpha_p}
    -1
            \right]
             = \frac{r-1}{r}\left[ 
  p \sqrt{\frac {r-1}{r-p^2} }
    -1 
            \right] , 
  %           = \mac O(p-1) + \frac{r-1}{r}\left[ 
  % \sqrt{\frac {r-1}{r-p^2} }
  %   -1
  %           \right]
\end{align*}
but 
\begin{align*}
    \sqrt{\frac {r-1}{r-p^2} }
 & = \sqrt{\frac {r-1}{r-1 - \mac O(p-1)} }
 = \sqrt{\frac {1}{1 - \mac O(\frac{p-1}{r-1})} }
  = \sqrt{1 +  \mac O\left(\frac{p-1}{r-1}\right)} 
  = 1 +  \mac O\left(\frac{p-1}{r-1}\right) = \mac O(1),
\end{align*}
so
\begin{align*}
    \beta_p & = \frac{r-1}{r}\left[ 
  p \sqrt{\frac {r-1}{r-p^2} }
    -1
            \right]
            = \mac O(p-1) + \frac{r-1}{r}\left[ 
  \sqrt{\frac {r-1}{r-p^2} }
    -1
            \right]
            = \mac O(p-1) + \mac O\left(\frac{p-1}{r}\right)
              = \mac O(p-1).
\end{align*}
In particular, this also shows that
\begin{equation*}
    \frac pq\sqrt r =     \sqrt{\frac {p}{r-p^2} r} = \mac O   \left( \sqrt{\frac {r}{r-1} }\right) = \mac O(1).\label{eq:pq_asymptotic_sepcase}
\end{equation*}
Eventually, since $\|(H^*)^\top V\|_1 = 1$, 
\begin{align*}
   \|R\|_1 = \| (H^*)^\top VH_p^{-1}\|_1\le \|H_p^{-1}\|_1 =
    \frac{1-\alpha_p + (r-1)\alpha_p}{1- r\alpha_p} = 1 + 2\beta_p.
\end{align*}

\subsection{Last steps of the proof} \label{app:rest_of_proof_th_main2}

    Recall from Lemma~\ref{lem:definition_of_R} that $e^\top R = e^\top$, so the entries of each column $ r_i$ of $R$ sum up to $1$.  
Let us now fix a column $r_i$ of $R$ and suppose that $r_{k,i}$ is its largest positive entry. We want to show that $r_{k,i}$ is close to $1$.  According to Lemma~\ref{lem:lower_bound_R_sepcase},  $r_{k,i}\le \|r_i\|_1\le \|R\|_1\le 1+2\beta_p$ and $\beta_p = \mac O(1/r)$, so we have an easy upper bound. 

For the lower bound, we need first to prove that $r_{k,i} = \|r_i\|_\infty$.
Call  $r_{\ell,i}$ the minimum entry of $r_i$ and notice that if $r_{\ell,i}\ge 0$ then $0\le r_{\ell,i}\le r_{k,i} = \|r_i\|_\infty$. We can thus suppose  $0>r_{\ell,i}\ge -\beta_p\ge -1/(r-2)$ and find that $$(r-1)|r_{\ell,i}| \le 1 + |r_{\ell,i}| = \sum_{j\ne \ell} r_{j,i}\le (r-1)r_{k,i}\implies 
 |r_{\ell,i}| \le r_{k,i}\implies r_{k,i} = \|r_i\|_\infty.$$
Notice now that, by Lemma~\ref{lem:lower_bound_R_sepcase}, 
 \[
 \ve = \mac O\left(
    \frac{\sigma_r(W^\#)   }{r\sqrt r}
\right)
 = \mac O\left(
    \frac{\sigma_r(W^\#)   }{r^2}\frac qp
\right)
\frac pq\sqrt r
 = \mac O\left(
    \frac{\sigma_r(W^\#)   }{r^2}\frac qp
\right).
 \]
We can thus apply Lemma~\ref{lem:lower_bound_detR}, and find that the matrix $R$ satisfies
   \[
    \det(R)^2 \ge   1 -\mac O\left(     \frac{r^2}{\sigma_r(W^\#)}   \frac pq \ve  \right),
   \]
so we can use the Hadamard theorem and the above estimate $\|r_i\|_1\le 1+2\beta_p$ to compute the lower bound
\begin{align*}
    r_{k,i} = \|r_i\|_{\infty}
    \ge \frac{\|r_i\|^2}{\|r_i\|_1}
    = \frac{\prod_j\|r_j\|^2}{\|r_i\|_1\prod_{j\ne i}\|r_j\|^2}
    \ge \frac{\det(R)^2}{\|r_i\|_1\prod_{j\ne i}\|r_j\|_1^2}
    \ge \frac{1 -\mac O\left(     \frac{r^2}{\sigma_r(W^\#)}   \frac pq \ve  \right)}{(1+2\beta_p)^{2r-1}}.
\end{align*}
 % Notice that in the separable case, that is, for $p=1$, $\alpha_p = \beta_p = 0$
 % %and $R\ge 0$
 % . In particular, this happens for $r=2$, since $r-1 \ge p^2\ge  1$. In this case, $ r_{k,i}     \ge 1 -\mac O\left(     \frac{r^2}{\sigma_r(W^\#)}  \ve  \right)$. 
Since $\beta_p = \mac O(1/r)$, we can apply Lemma~\ref{lem:bound_perturbation_power} and find that
 \begin{align*}
    r_{k,i} 
    \ge \frac{1 -\mac O\left(     \frac{r^2}{\sigma_r(W^\#)}   \frac pq \ve  \right)}{(1+2\beta_p)^{2r-1}}
    \ge \frac{1 -\mac O\left(     \frac{r^2}{\sigma_r(W^\#)}   \frac pq \ve  \right)}{1+\mac O(r\beta_p)}
     \ge 1 -\mac O\left(     \frac{r^2}{\sigma_r(W^\#)}   \frac pq \ve  + r\beta_p\right).
\end{align*}
Using both the upper and the lower bound on $r_{k,i}$, we find that
\[
|1-r_{k,i}|  = \mac O\left(     \frac{r^2}{\sigma_r(W^\#)}   \frac pq \ve  + r\beta_p\right). 
\]
This is enough to show that $r_i$ is close to the canonical basis vector $e_k$.
\begin{align}\label{eq:error_r_i_sepcase}
 \nonumber   \|e_k-r_i\| &\le \|e_k-r_i\|_1 = |1-r_{k,i}| + \sum_{j\ne i} |r_{j,i}| = 
    |1-r_{k,i}| + \|r_i\|_1 - r_{k,i} \\&\le
    |1-r_{k,i}| +1+2\beta_p - r_{k,i}
    \le
    2\beta_p  + 2|1-r_{k,i}| = \mac O\left(     \frac{r^2}{\sigma_r(W^\#)}   \frac pq \ve  + r\beta_p\right). 
\end{align}
% For the rest of the proof, keep in mind that 
% \[
%  \mac O\left(     \frac{r^2}{\sigma_r(W^\#)}   \frac pq \ve  + r\beta_p\right) = \mac O(1), \qquad 
%   \mac O\left(     \frac{r^2}{\sigma_r(W^\#)}   \frac pq \ve  \right) = 
% \]
To conclude, we need to show that two different columns of $R$ are not close to the same $e_k$. 
Let $R=QT$ be a $QR$ decomposition of $R$ with $Q$ orthogonal and $T$ upper triangular. If $t_i$ are the columns of $T$, then $r_i = Qt_i$ and thus $|t_{i,i}|\le \|t_i\| = \|r_i\|\le 1 + 2\beta_p$. Notice moreover that $\det(R)^2 = \det(T)^2=\prod_i|t_{i,i}|^2$. 
If now $j> i$ and $\beta_p=\mac O(1/r)$, then again by Lemma~\ref{lem:bound_perturbation_power}, 
\begin{align*}
    \|r_i-r_j\|^2  = \|t_i - t_j\|^2 \ge |t_{j,j}|^2 = \frac{\det(R)^2}{\prod_{k\ne j}|t_{k,k}|^2} \ge \frac{ 1 -\mac O\left(     \frac{r^2}{\sigma_r(W^\#)}   \frac pq \ve  \right)}{(1+2\beta_p)^{2r-2}} =  1 -\mac O\left(     \frac{r^2}{\sigma_r(W^\#)}   \frac pq \ve  + r\beta_p\right).
\end{align*}
If we suppose that both $r_i$ and $r_j$ are close to the same $e_k$ in the sense of \eqref{eq:error_r_i_sepcase}, then 
\begin{align*}
    \|r_i-r_j\|^2  \le  (\|r_i-e_k\|  + \|e_k-r_j\|)^2  = \mac O\left(     \frac{r^2}{\sigma_r(W^\#)}   \frac pq \ve  + r\beta_p\right) <  1 -\mac O\left(     \frac{r^2}{\sigma_r(W^\#)}   \frac pq \ve  + r\beta_p\right), 
    \end{align*}
 a contradiction. As a consequence, each $r_j$ is close to a different $e_k$ and we can conclude that  
\begin{equation}
    \label{eq:R_close_to_permutation_sepcase}
\min_\Pi \|R-\Pi\|_{1,2} \le \mac O\left(     \frac{r^2}{\sigma_r(W^\#)}   \frac pq \ve  + r\beta_p\right).
\end{equation}
 Due to Corollary~\ref{cor:Error_estimation_given_R_is_permutation},  we get
\begin{align*}  \min_\Pi \|W^\#-W^*\Pi\|_{1,2} &\le    \|W^*\|\min_\Pi \|R-\Pi\|_{1,2}    +4r\ve \\
    &\le    \|W^*\| \cdot \mac O\left(     \frac{r^2}{\sigma_r(W^\#)}   \frac pq \ve  + r\beta_p\right)  +4r\ve.   
\end{align*}
If $\Pi$ is the permutation matrix satisfying \eqref{eq:R_close_to_permutation_sepcase}, then  Lemma~\ref{lem:norm_inequalities}, Lemma~\ref{lem:lower_bound_R_sepcase} and Lemma~\ref{lem:definition_of_R} imply that
\begin{align*}
  \|W^\#\|&\le \|W^*\|\|R\| + \|M\| 
  \le \sqrt r\|W^*\|\|R\|_{1,2} + \sqrt r\|M\|_{1,2}  \\
  &\le \sqrt r\|W^*\|(1 + \|R-\Pi\|_{1,2}) + 2r\sqrt r\ve 
  = \mac O(\sqrt r)\|W^*\| + \mac O(\sigma_r(W^\#))\\
  \implies \frac{\|W^*\|}{\sigma_r(W^\#)} &\ge 
  \frac 1{\mac O(\sqrt r)}
  \left(\frac{\|W^\#\|}{\sigma_r(W^\#)} - \mac O(1)\right)
  = \Omega\left( \frac1{\sqrt r} \right)\\
   \implies 4r\ve &\le 4r\sqrt r \ve \mac O\left(  \frac{\|W^*\|}{\sigma_r(W^\#)} \right) = 
   \|W^*\|\cdot \mac O\left(     \frac{r\sqrt r}{\sigma_r(W^\#)}  \ve \right) , 
\end{align*}
and, as a consequence, from \eqref{eq:pq_asymptotic_sepcase} and $\beta_p=\mac O(p-1)$ we conclude that
\begin{align*}   \min_\Pi \| W^\#-W^* \Pi\|_{1,2}&\le    \|W^*\| \cdot \mac O\left(     \frac{r^2}{\sigma_r(W^\#)}   \frac pq \ve  + r\beta_p\right)  +4r\ve\\
& = 
\|W^*\| \cdot \mac O\left(     \frac{r\sqrt r}{\sigma_r(W^\#)}  \ve  + r(p-1)\right).
\end{align*}

\section{Proof of Theorem~\ref{th:main}}
\label{app:proof_of_theo_pSSC}

\subsection{ $R$ is close to an orthogonal matrix}
 
 \subsubsection{Proof of Lemma~\ref{lem:upper_bound_detR}}
\label{app:proof_upper_bound_detR}

   From Lemma~\ref{lem:definition_of_R} and Assumption~\ref{ass:perturbed_pSSC}, $W^\# = WR + (N-N^\#)P$ and $W^\#(H^\#)^\top + N^\# = WH^\top + N$ with $R = H^\top VH_p^{-1}$ and $P = VH_p^{-1}$. As a consequence,   
\[(WR + (N-N^\#)P)(H^\#)^\top + N^\# = WH^\top + N \implies R(H^\#)^\top = H^\top + W^\dagger (N-N^\#)(I -P(H^\#)^\top).
\]
Since each element of a matrix is bounded in absolute value by the norm $\|\cdot\|_{1,2}$ of the matrix, and since from  Lemma~\ref{lem:last_sv_of_Hp}   $\|P\|_1\le \|H_p^{-1}\|_1\le 2\sqrt{r-1}\frac pq-1$, we can compute the lower bound
\begin{align*}
    R(H^\#)^\top & \ge -  \| W^\dagger (N-N^\#)(I -P(H^\#)^\top)\|_{1,2} 
\ge -  \| W^\dagger\| \|(N-N^\#)\|_{1,2} \|(I -P(H^\#)^\top)\|_{1}
\\& 
\ge  
-  \frac{ 2(1 +\|H_p^{-1}\|_1 )}{\sigma_r(W)}\ve 
\ge  
-  \frac{ 4\sqrt{r-1} }{\sigma_r(W)}\frac pq\ve. 
\end{align*}
Substituting $\ve  = \mac O(\frac {\sigma_r(W^\#)}{r}\frac qp)$ into Lemma~\ref{lem:full_rank_W}, 
\[
\sigma_r(W) \ge \frac {\sigma_r(W^\#)}{\sqrt {r(r-1)}}\frac qp - 2\ve =\Omega\left(
\frac {\sigma_r(W^\#)}{\sqrt {r(r-1)}}\frac qp
\right)
\implies R(H^\#)^\top \ge -\gamma_p\ve =  - \mac O\left( \frac{ r\sqrt r}{\sigma_r(W^\#)}\frac {p^2}{q^2}\ve\right)
\]
and thus 
%If we call $\gamma_p =\frac{ \sqrt r(r-1) }{\sigma_r(W^\#)}\frac {p^2}{q^2}$ then we have 
$(R + \gamma_p\ve ee^\top)(H^\#)^\top\ge 0$ where $\gamma_p\ge 0$. 
Since $H^\#$ is $p$-SSC, 
\[
\cone(R^\top + \gamma_p\ve ee^\top) \cu \cone((H^\#)^\top)^*\cu \mac C_p^*\cu \mac C^*.
\]
In particular, denoting $\wt r_i$ the $i$th row of $R$,   
\[
e^\top \wt r_i + r\gamma_p \ve
=e^\top (\wt r_i +\gamma_p \ve e)\ge \|\wt r_i +\gamma_p \ve e\| \ge \|\wt r_i\| -\sqrt r\gamma_p \ve .
\]
\subsubsection{Proof of Lemma~\ref{lem:R_is_close_to_orthogonal}}
\label{app:proof_of_R_close_to_orthogonal}
 
Given $R^\top = QT$ a QR factorization, with $t_i$ being the columns of $T$, we have $\|\wt r_i\| = \|Qt_i\| = \|t_i\| \ge | t_{i,i}|$, and $|\det(R)| = |\det(T)| = \prod_i |t_{ii}|$. 
Notice that, due to Lemma~\ref{lem:upper_bound_detR}, we can use AM-GM on three different sets of elements $z_i$:
\begin{align*}
    z_i = e^\top \wt r_i   + 2r\gamma_p \ve & & \to  & &\prod_i \|\wt r_i\|^2  \le \prod_i( e^\top \wt r_i   + 2r\gamma_p \ve)^2  \le\left( \sum_i \frac{e^\top \wt r_i + 2r\gamma_p}r\right)^{2r}, \\
        z_i = \|\wt r_i\| & & \to  & &\prod_i \|\wt r_i\|^2  \le \left( \frac{\sum_i\|\wt r_i\| }{r}  \right)^{2r} \le \left( \sum_i \frac{e^\top \wt r_i + 2r\gamma_p}r\right)^{2r}, 
        \\
        z_i = |t_{ii}| & & \to  & &\prod_i t_{i,i}^2\le  \left( \frac{\sum_i|t_{i,i}| }{r}  \right)^{2r} \le  \left( \frac{\sum_i\|\wt r_i\| }{r}  \right)^{2r}\le \left( \sum_i \frac{e^\top \wt r_i + 2r\gamma_p}r\right)^{2r}, 
\end{align*}
and in all three cases the quantities are lower bounded by $|\det(R)|^2 = |\det(T)|^2$ due to the Hadamard theorem. If $A(z_i)$ is the AM and $G(z_i)$ is GM of the respective elements, then 
\[
|\det(R)|^2  \le G(z_i)^{2r} \le A(z_i)^{2r} \le\left( \sum_i \frac{e^\top \wt r_i + 2r\gamma_p}r\right)^{2r}  =\left( 1  + 2r\gamma_p \ve\right)^{2r}.
\]
Due to 
% Lemma~\ref{lem:upper_bound_detR} and 
Lemma~\ref{lem:lower_bound_detR}, {  for $\ve  = \mac O(\frac {\sigma_r(W^\#)}{r^2}\frac qp)$} and 
since $\gamma_p = \mac O\left( \frac{ r\sqrt r}{\sigma_r(W^\#)}\frac {p^2}{q^2}\right)$, we have
\[
 1 - \mac O\left(\frac {r^2}{\sigma_r(W^\#)}\frac pq\ve\right)
   \le G(z_i)^{2r} \le A(z_i)^{2r}   \le \left( 1  + \mac O\left( \frac{ r^2\sqrt r}{\sigma_r(W^\#)}\frac {p^2}{q^2}\ve\right) \right)^{2r}
\]
and,  for $\ve  = \mac O(\frac {\sigma_r(W^\#)}{r^3}\frac qp)$, we can use Lemma~\ref{lem:bound_perturbation_power} and take the $2r$-th root of all terms to find
\[
 1 - \mac O\left(\frac {r}{\sigma_r(W^\#)}\frac pq\ve\right)
   \le G(z_i) \le A(z_i)   \le  1  + \mac O\left( \frac{ r^2\sqrt r}{\sigma_r(W^\#)}\frac {p^2}{q^2}\ve\right) 
\]
and thus conclude that
\[
 A(z_i) - G(z_i) = \mac O\left( \frac{ r^2\sqrt r}{\sigma_r(W^\#)}\frac {p^2}{q^2}\ve\right). 
\]
Using Lemma~\ref{lem:AM-GM}, 
\begin{align}
\nonumber
    \max_i|z_i-1| &= \max\{ |z_1-1|, |1-z_r| \}\le  \max\{ z_1-A(z_i) + |A(z_i)-1|, G(z_i)-z_r + |1-G(z_i)| \}\\ \nonumber
    &\le z_1-z_r + \max\{ |A(z_i)-1|,  |1-G(z_i)| \} 
    \\ \nonumber &\le (\sqrt{z_1}+\sqrt{z_r})\sqrt{r(A(z_i)-G(z_i))} 
    + \max\{ |A(z_i)-1|,  |1-G(z_i)| \}
    \\
    &\label{eq:robust_AMGM-application}
    \le \mac O\left( 
    \sqrt{\frac{ x_1r^3\sqrt r}{\sigma_r(W^\#)}\frac {p^2}{q^2}\ve}
    \right)+ 
    \mac O\left( \frac{ r^2\sqrt r}{\sigma_r(W^\#)}\frac {p^2}{q^2}\ve\right)
    \end{align}
Recall that from Lemma~\ref{lem:definition_of_R},  $e^\top Re = r$ and $Re = H^\top VH_p^{-1}e = H^\top V e\ge 0$, so $e^\top \wt r_i = (Re)_i\le r$.  Now from Lemma~\ref{lem:upper_bound_detR}    with $\ve  = \mac O(\frac {\sigma_r(W^\#)}{r^2}\frac qp)$, we find that for all $i$, 
\[
|t_{i,i}| \le 
\|\wt r_i\| \le e^\top \wt r_i + 2r\gamma_p\ve  \le  r + 2r\gamma_p\ve
 = \mac O(r).
 \]
Hence, the same bound holds for all $z_i$, that is, $|z_i| \leq O(r)$. Using     $\ve  = \mac O(\frac {\sigma_r(W^\#)}{r}\frac qp)$, the relation  \eqref{eq:robust_AMGM-application}  can thus be estimated as
\[
\max_i|z_i-1| 
    =  \mac O\left( 
    \sqrt{\frac{ x_1r^3\sqrt r}{\sigma_r(W^\#)}\frac {p^2}{q^2}\ve}
    \right)+ 
    \mac O\left( \frac{ r^2\sqrt r}{\sigma_r(W^\#)}\frac {p^2}{q^2}\ve\right)
    =  \mac O\left( 
    \sqrt{\frac{ r^4\sqrt{r}}{\sigma_r(W^\#)}\frac {p^2}{q^2}\ve}
    \right). 
\] 
If now we suppose    $\ve  = \mac O(\frac {\sigma_r(W^\#)}{r^4\sqrt{r}}\frac {q^2}{p^2})$, then $z_i = \mac O(1)$ and the relation  \eqref{eq:robust_AMGM-application}  reads as 
\[
\max_i|z_i-1| 
    =  \mac O\left( 
    \sqrt{\frac{ x_1r^3\sqrt r}{\sigma_r(W^\#)}\frac {p^2}{q^2}\ve}
    \right)+ 
    \mac O\left( \frac{ r^2\sqrt r}{\sigma_r(W^\#)}\frac {p^2}{q^2}\ve\right)
    =  \mac O\left( 
    \sqrt{\frac{ r^3\sqrt r}{\sigma_r(W^\#)}\frac {p^2}{q^2}\ve}
    \right).
\]
This gives us that $\max_i\Big|\|\wt r_i\| - 1\Big| \leq \mac O\left( 
    \sqrt{\frac{ r^\frac{7}{2}}{\sigma_r(W^\#)}\frac {p^2}{q^2}\ve}\right)$. Moreover,
    \begin{align*}
        \max_i |e^\top \wt r_i - 1| \leq \max_i |(e^\top \wt r_i + 2r\gamma_p \varepsilon) - 1| + |2r\gamma_p \varepsilon| &\leq \mac O\left( 
    \sqrt{\frac{ r^\frac{7}{2}}{\sigma_r(W^\#)}\frac {p^2}{q^2}\ve}\right) + \mac O\left( 
    \frac{ r^\frac{5}{2}}{\sigma_r(W^\#)}\frac {p^2}{q^2} \varepsilon\right) \\ 
    &= \mac O\left( 
    \sqrt{\frac{ r^\frac{7}{2}}{\sigma_r(W^\#)}\frac {p^2}{q^2}\ve}\right).
    \end{align*}
Eventually, letting the $t_i$'s be the columns of the upper triangular matrix $T$, then $Qt_i = \wt r_i$ and in particular for any $i< j$,
    \begin{align*}
        \|\wt r_i-\wt r_j\| &= \|t_i-t_j\| \ge |t_{j,j}|  \ge 1 - \mac O\left( 
    \sqrt{\frac{ r^3\sqrt r}{\sigma_r(W^\#)}\frac {p^2}{q^2}\ve}
    \right).
    \end{align*}

This result can be used to show that  $R^\top$ is close to an orthogonal matrix, since $R = QT$ and $T$ is almost diagonal with diagonal elements close in magnitude to $1$, but that is not necessary to complete the proof of Theorem~\ref{th:main}. 

\subsection{Geometric Intuition}
\label{app:geometric_intuition_th_pSSC}

\subsubsection{Bound on $\ve$ for the disjointness}

We start by introducing some notation. The main objective is to focus on the affine subspace of vectors that have the same entrywise sum of a fixed row $\wt r_k$.
\begin{notation}
    \label{not:geometric_notations}
    $ $
    
\begin{itemize}
    \item  By Lemma~\ref{lem:R_is_close_to_orthogonal}, if $\ve  = \mac O(\frac {\sigma_r(W^\#)}{r^{9/2}}\frac {q^2}{p^2})$ then there exists a parameter $\vf_p\ge 0$ such that
\[
\max\{  |  \|\wt r_k\| -1 |,   |e^\top \wt r_k -1 |  \}\le \vf_p \sqrt \ve, 
\qquad  \min_{i\ne j} \|\wt r_i-\wt r_j\| \ge  1 - \vf_p\sqrt\ve,\qquad 
 \vf_p= \mac O\left( 
    \sqrt{\frac{ r^{7/2}}{\sigma_r(W^\#)}\frac {p^2}{q^2}}
    \right).\]
    \item Let $\mac B_s: = \{z\in \f R^r\ | \ \| z\| <s\}$ be the open ball centered in $0$ with radius $s\ge  0$.
    \item Let $\mac H_\beta := \{z\in \f R^r \ | \ e^\top z = \beta \}$ be the affine subspace of vectors with entry-wise sum equal to $\beta$.
    \item By Lemma~\ref{lem:upper_bound_detR}, we know that for $\ve  = \mac O(\frac {\sigma_r(W^\#)}{r}\frac qp)$, $\wt r_k \in  \mac C_p^*-\gamma_p\ve e$. Keeping in consideration the previous notation, if we fix an index $k$ and define 
\[\mac P := \mac H_{e^\top \wt r_k} \cap (\mac C_p^*-\gamma_p\ve e), \qquad \mac B:= \mac H_{e^\top \wt r_k}\cap \mac B_{1-\vf_p\sqrt \ve},   \]
then necessarily $\wt r_k\in \mac P\setminus\mac B$. 
\item Recall from Lemma~\ref{lem:Cp_Sp_dual_cones_and_inclusions} that
\[
\mac S_p = \left\{x \in \R^r \ \big| \ e^{\top}x \geq p \|x\| \right\}, \qquad \mac S_q = \mac S_p^* \cu \mac C_p^*,
\]
and define
\[
\wt {\mac S} := \lp \mac S_q -\gamma_p\ve e\rp \cap \mac H_{e^\top \wt r_k} \; \cu \; \mac P.
\]
\item For a fixed index $k$ and any index $j$, define
\[ 
\wt e_j  := \lp e^\top \wt r_k + \gamma_p\ve r \rp e_j - \gamma_p\ve e \; \in \;   \mac H_{e^\top \wt r_k} \cap (\mac C_p^*-\gamma_p\ve e) \; = \; \mac P. 
\]
Moreover, let $\wt e_{i,j}:=\frac{\wt e_i+\wt e_j}{2}$ be the middle points. 
\end{itemize} 
\end{notation}
Figure~\ref{fig:p-SSC_identifiability} illustrates this notation.  
\begin{lemma}\label{lem:Pcharacterization}
  Under Notation~\ref{not:geometric_notations}, we have
   \begin{align*}
   \mac  P  
= \conv\left( \wt{\mac S}  \cup \{\wt e_1,\dots ,\wt e_r\} \right).
\end{align*}
\end{lemma}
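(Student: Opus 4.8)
The plan is to reduce the statement to the elementary slicing fact that a convex cone generated by points lying in the hyperplane $\mac E=\mac H_1$, when intersected with a parallel hyperplane $\mac H_t$, equals a scaled copy of the convex hull of its generators. First I would recall from Lemma~\ref{lem:Cp_Sp_dual_cones_and_inclusions} that $\mac C_p^*=\cone(\mac Q_q\cup\{e_1,\dots,e_r\})$ and $\mac S_q=\cone(\mac Q_q)$, and observe that every generator appearing here lies in $\mac E$, since $\mac Q_q\cu\mac E$ by Lemma~\ref{lem:p-hypersphere} and $e^\top e_i=1$ for each $i$.

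Next I would establish the slicing identity: if $\mac F=\cone(S)$ with $S\cu\mac E$, then $\mac F\cap\mac H_t=t\,\conv(S)$ for every $t\ge 0$. This is immediate by writing any $x\in\mac F$ as $x=\sum_i\lambda_i s_i$ with $\lambda_i\ge 0$ and $s_i\in S$; since $e^\top s_i=1$ we get $e^\top x=\sum_i\lambda_i$, so the constraint $e^\top x=t$ is equivalent to $\sum_i\lambda_i=t$, which is precisely the condition that $x\in t\,\conv(S)$.

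Then I would handle the translation by $-\gamma_p\ve e$. Writing $\beta:=e^\top\wt r_k$ and $t:=\beta+r\gamma_p\ve$, the substitution $x=y+\gamma_p\ve e$ shows that $y\in(\mac C_p^*-\gamma_p\ve e)\cap\mac H_\beta$ if and only if $x\in\mac C_p^*$ and $e^\top x=\beta+r\gamma_p\ve=t$, where I used $e^\top e=r$. Hence $\mac P=(\mac C_p^*\cap\mac H_t)-\gamma_p\ve e$. Since $\beta$ is close to $1$ by Lemma~\ref{lem:R_is_close_to_orthogonal} and $\gamma_p,\ve\ge 0$, we have $t\ge 0$, so the slicing identity applies and gives
\[
\mac P=\Big(t\,\conv(\mac Q_q\cup\{e_1,\dots,e_r\})\Big)-\gamma_p\ve e=\conv\Big((t\mac Q_q-\gamma_p\ve e)\cup\{te_1-\gamma_p\ve e,\dots,te_r-\gamma_p\ve e\}\Big),
\]
using that scaling and translation commute with the convex-hull operation.

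Finally I would identify the two translated pieces with the sets in the statement. The points $te_j-\gamma_p\ve e$ are exactly $\wt e_j=(e^\top\wt r_k+r\gamma_p\ve)e_j-\gamma_p\ve e$ from Notation~\ref{not:geometric_notations}. For the remaining piece, applying the same slicing identity to $\mac S_q=\cone(\mac Q_q)$ yields $\mac S_q\cap\mac H_t=t\mac Q_q$, so that $t\mac Q_q-\gamma_p\ve e=(\mac S_q\cap\mac H_t)-\gamma_p\ve e=(\mac S_q-\gamma_p\ve e)\cap\mac H_\beta=\wt{\mac S}$. Combining these identifications gives $\mac P=\conv(\wt{\mac S}\cup\{\wt e_1,\dots,\wt e_r\})$, as claimed. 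The only real obstacle is bookkeeping: one must check $t\ge 0$ for the slicing to be valid and verify that intersection with a hyperplane and translation interchange as used, both of which are routine.
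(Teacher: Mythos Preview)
Your proof is correct and rests on the same key ingredient as the paper's, namely the representation $\mac C_p^*=\cone(\mac Q_q\cup\{e_1,\dots,e_r\})$ from Lemma~\ref{lem:Cp_Sp_dual_cones_and_inclusions}, followed by intersecting with the affine hyperplane $\mac H_\beta$ after undoing the translation by $\gamma_p\ve e$. The difference is organizational: the paper argues by direct element-chasing, writing $v\in\mac P$ as $-\gamma_p\ve e+w+\sum_i\lambda_i e_i$ with $w\in\mac S_q$ and then rebalancing the coefficients (with a separate case $e^\top w=0$), whereas you abstract a general slicing identity $\cone(S)\cap\mac H_t=t\,\conv(S)$ for $S\cu\mac E$ and apply it once to $\mac C_p^*$ and once to $\mac S_q$. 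Your packaging is a bit cleaner and avoids the case split, at the cost of stating the auxiliary slicing fact; both routes rely on $t=\beta+r\gamma_p\ve>0$, which the paper also checks.
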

\begin{proof}
     Let $\beta= e^\top \wt r_k$, by Lemma~\ref{lem:Cp_Sp_dual_cones_and_inclusions}, 
      \begin{align*}
   \mac  P = \mac H_\beta\cap ( \mac C_p^* - \gamma_p\ve e) = 
    \mac H_\beta\cap ( \cone(\mac S_q \cup \{e_1,\dots,e_r\} ) - \gamma_p\ve e).
\end{align*}
Any $v\in \mac P$ can be written as
\[
v = -\gamma_p\ve e + w + \sum_i \lambda_i e_i, \qquad \lambda_i\ge 0, \quad w\in \mac S_q, \quad 
\beta = e^\top v =  -\gamma_p\ve r + e^\top w + \sum_i \lambda_i. 
\]
Notice that $w\in \mac S_q\implies e^\top w\ge q\|w\|\ge 0$, and using  Notation~\ref{not:geometric_notations}, we have $\beta = e^\top \wt r_k \ge 1/2> 0$.  The vector $v$ can thus be rewritten as
\[
v = \frac{e^\top w}{\beta + \gamma_pr \ve}\lp  
 \frac{\beta + \gamma_pr \ve}{e^\top w} w - \gamma_p\ve e 
\rp
+ \sum_i \frac{\lambda_i}{\beta + \gamma_pr \ve} \wt e_i
\in 
\conv\left( \wt{\mac S}  \cup \{\wt e_1,\dots ,\wt e_r\} \right), 
\]
where if $e^\top w=  0$ then $w = 0$ and $v =  -\gamma_p\ve e  + \sum_i \lambda_i e_i =  \sum_i \frac{\lambda_i}{\beta + \gamma_pr \ve} \wt e_i
\in 
\conv\left( \{\wt e_1,\dots ,\wt e_r\} \right)$. This proves that $\mac P \cu \conv\left( \wt{\mac S}  \cup \{\wt e_1,\dots ,\wt e_r\} \right)$. 

To prove the opposite containment, let $v\in \conv\left( \wt{\mac S}  \cup \{\wt e_1,\dots ,\wt e_r\} \right)$ that can be written as 
\[
v = \mu \wt  w + \sum_i \lambda_i \wt e_i, \qquad \mu, \lambda_i\ge 0, \quad 
1 = \mu+ \sum_i \lambda_i, 
\quad \wt w\in \wt {\mac S}. 
\]
Since $e^\top \wt e_i = e^\top \wt w = \beta$, $e^\top v = \beta$ and $v\in \mac H_\beta$. 
From the definitions in Notation~\ref{not:geometric_notations}, we get
\[
v = \mu(w-\gamma_p\ve e)+ \sum_i \lambda_i ((\beta + \gamma_p\ve r)e_i-\gamma_p\ve e) = \lp\mu w + \sum_i\lambda_i(\beta + \gamma_p\ve r)e_i\rp - \gamma_p\ve e, 
\quad  w\in  {\mac S_q}, 
\]
so $v \in  \mac H_\beta\cap ( \cone(\mac S_q \cup \{e_1,\dots,e_r\} ) - \gamma_p\ve e) = \mac P$ and the reserve containment is proved. 

\end{proof}

Let us now show that both $\mac B$ and $\wt{\mac S}$ are spheres in the space $\mac H_{\beta}$ with the same center $\beta e/r$ where $\beta = e^\top \wt r_k$. 

\begin{lemma}
\label{lem:S_B_and_eij}    
Under Notation~\ref{not:geometric_notations}, 
if $\beta = e^\top \wt r_k$, then 
    \begin{align*}
        \mac{\wt S} 
         &=  \left\{\beta e/r + w  \in \R^r \ \big| \ e^{\top}w = 0, \ \ 
     \| w\|^2 \le  ( \beta + \gamma_pr\ve)^2 \left( \frac{1}{q^2} -   \frac 1r\right)
       \right\},\\
         \mac B 
      &=  \left\{\beta e/r + w  \in \R^r \ \big| \ e^{\top}w = 0,\ \ 
     \|w\|^2\le (1-\vf_p\sqrt \ve)^2 -  \beta^2/r  \right\},\\
      \|\wt e_{i,j}-  \beta e/r\|^2
     &=
     \frac{r-2}{2r}(\beta + r\gamma_p\ve)^2,\quad \forall i\ne j.
    \end{align*}
    In particular,  $\wt e_{i,j}\in \wt {\mac S} \iff 2\ge q^2 \ge 1$. 
\end{lemma}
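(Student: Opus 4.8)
The plan is to reduce all three computations to the orthogonal decomposition of any $z\in\mac H_\beta$ as $z = \beta e/r + w$ with $e^\top w = 0$, for which $\|z\|^2 = \beta^2/r + \|w\|^2$; each of $\wt{\mac S}$ and $\mac B$ then becomes a Euclidean ball in the hyperplane $\{w : e^\top w = 0\}$ centered at $\beta e/r$, and the membership question reduces to a single radius comparison.

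First I would characterize $\wt{\mac S}$. By definition a point of $\wt{\mac S}$ is $x - \gamma_p\ve e$ with $x\in\mac S_q$, and the constraint $e^\top(x-\gamma_p\ve e)=\beta$ forces $e^\top x = \beta + \gamma_p r\ve =: s$. Writing $x = se/r + u$ with $e^\top u = 0$, the cone inequality $e^\top x\ge q\|x\|$ reads $s\ge q\sqrt{s^2/r+\|u\|^2}$, which (using $s>0$, since $\beta = e^\top\wt r_k\ge 1/2$ as in the proof of Lemma~\ref{lem:Pcharacterization}) squares to $\|u\|^2\le s^2\lp 1/q^2 - 1/r\rp$. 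The decisive cancellation is that the translation by $-\gamma_p\ve e$ moves the center from $se/r$ to $(s/r-\gamma_p\ve)e = \beta e/r$, because $s/r-\gamma_p\ve = \beta/r$; this yields exactly the claimed sphere with center $\beta e/r$ and squared radius $(\beta+\gamma_p r\ve)^2\lp 1/q^2 - 1/r\rp$.

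The description of $\mac B$ follows directly from the same decomposition: for $z = \beta e/r + w$ with $e^\top w = 0$, the condition $\|z\|\le 1-\vf_p\sqrt\ve$ is $\beta^2/r + \|w\|^2\le (1-\vf_p\sqrt\ve)^2$, i.e. $\|w\|^2\le (1-\vf_p\sqrt\ve)^2 - \beta^2/r$. For the midpoints I would substitute $\wt e_j = s e_j - \gamma_p\ve e$ to obtain $\wt e_{i,j} = \tfrac s2(e_i+e_j)-\gamma_p\ve e$, so that $\wt e_{i,j}-\beta e/r = s\lp\frac{e_i+e_j}{2}-\frac er\rp$, which is orthogonal to $e$. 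A coordinate count gives $\left\|\frac{e_i+e_j}{2}-\frac er\right\|^2 = 2\lp\tfrac12-\tfrac1r\rp^2 + (r-2)\tfrac1{r^2} = \tfrac12 - \tfrac1r = \frac{r-2}{2r}$, hence $\|\wt e_{i,j}-\beta e/r\|^2 = \frac{r-2}{2r}(\beta+r\gamma_p\ve)^2$.

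Finally, since $\wt e_{i,j}-\beta e/r$ is orthogonal to $e$, membership $\wt e_{i,j}\in\wt{\mac S}$ is exactly the radius inequality $\frac{r-2}{2r}s^2\le s^2\lp 1/q^2 - 1/r\rp$; dividing by $s^2>0$ and using $\frac{r-2}{2r}+\frac1r = \frac12$ collapses it to $1/2\le 1/q^2$, i.e. $q^2\le 2$, and combined with the standing bound $q^2 = r-p^2\ge 1$ coming from $p\le\sqrt{r-1}$ this is $1\le q^2\le 2$. There is no genuine obstacle here: the only points requiring care are the sign condition $s>0$ needed to square the cone inequality, and the exact cancellation $s/r-\gamma_p\ve=\beta/r$ that aligns the centers of $\wt{\mac S}$ and $\mac B$ at the common point $\beta e/r$, without which the last step would not reduce to a one-line comparison.
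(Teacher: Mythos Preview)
Your proof is correct and follows essentially the same orthogonal-decomposition approach as the paper: writing points of $\mac H_\beta$ as $\beta e/r + w$ with $e^\top w=0$ and reducing each set to a radius condition on $w$. Your computation of $\|\wt e_{i,j}-\beta e/r\|^2$ is in fact cleaner than the paper's, since you factor out $s=\beta+\gamma_p r\ve$ via $\wt e_{i,j}-\beta e/r = s\lp\tfrac{e_i+e_j}{2}-\tfrac er\rp$ and compute $\|\tfrac{e_i+e_j}{2}-\tfrac er\|^2$ coordinatewise, whereas the paper expands the full squared norm term by term before recombining.
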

\begin{proof}
   Let us rewrite $\mac {\wt S}$ and $\mac B$ as spheres inside $\mac H_\beta$ both with center $\beta e/r$. 
\begin{align*}
    \mac{\wt S} &= [\mac S_q  - \gamma_p\ve e ] \cap \mac H_\beta\\
    &=  \left\{x - \gamma_p\ve e  \in \R^r \ \big| \ e^{\top}x \geq q \|x\|,\ \ \beta = e^\top (x - \gamma_p\ve e ) \right\}\\
     &=  \left\{v  \in \R^r \ \big| \ e^{\top}(v+\gamma_p\ve e) = \beta + \gamma_pr\ve \geq q \|v+\gamma_p\ve e\| \right\}\\
     &=  \left\{\beta e/r + w  \in \R^r \ \big| \ e^{\top}(\beta e/r + w+\gamma_p\ve e) = \beta + \gamma_pr\ve \geq q \|\beta e/r + w+\gamma_p\ve e\| \right\}\\
     &=  \left\{\beta e/r + w  \in \R^r \ \big| \ e^{\top}w = 0, \ \  \beta + \gamma_pr\ve \geq q \| w+(\gamma_p\ve + \beta /r)e\| \right\}\\
       &=  \left\{\beta e/r + w  \in \R^r \ \big| \ e^{\top}w = 0, \ \ 
      ( \beta + \gamma_pr\ve)^2 \geq q^2 (\| w\|^2 +\|(\gamma_p\ve + \beta /r)e\|^2) \right\}\\
         &=  \left\{\beta e/r + w  \in \R^r \ \big| \ e^{\top}w = 0, \ \ 
     \| w\|^2 \le  ( \beta + \gamma_pr\ve)^2 \left( \frac{1}{q^2} -   \frac 1r\right)
       \right\}, 
\end{align*}
and 
\begin{align*}
    \mac B &=  \mac H_\beta \cap \mac B_{1-\vf_p\sqrt \ve}
    =  \left\{x  \in \R^r \ \big| \ e^{\top}x = \beta,\ \ \|x\|< 1-\vf_p\sqrt \ve \right\}\\
     &=  \left\{\beta e/r + w  \in \R^r \ \big| \ e^{\top}(\beta e/r + w) = \beta,\ \ \|\beta e/r + w\|< 1-\vf_p\sqrt \ve \right\}\\
      &=  \left\{\beta e/r + w  \in \R^r \ \big| \ e^{\top}w = 0,\ \ \|\beta e/r\|^2 + \|w\|^2< (1-\vf_p\sqrt \ve)^2 \right\}\\
      &=  \left\{\beta e/r + w  \in \R^r \ \big| \ e^{\top}w = 0,\ \ 
     \|w\|^2< (1-\vf_p\sqrt \ve)^2 -  \beta^2/r  \right\}.
\end{align*}
Moreover, if $\wt e_{i,j} = (\wt e_i + \wt e_j)/2$ with $i\ne j$, then 
\begin{align*}
    \|\wt e_{i,j}-  \beta e/r\|^2 &= \| (\beta + \gamma_p\ve r) (e_i+e_j)/2 - \gamma_p\ve e  -  \beta e/r\| ^2\\
    &= (\beta + \gamma_p\ve r)^2/2 + r(\gamma_p\ve   +  \beta /r)^2 - 2(\beta + \gamma_p\ve r)(\gamma_p\ve   +  \beta /r)\\
    &=\beta^2/2 + \beta^2/r - 2 \beta^2/r + \ve\left(  \beta\gamma_p r + 2\beta\gamma_p - 4\gamma_p\beta   \right)+ \ve^2 \left(  \gamma_p^2r^2/2 + r\gamma_p^2 -2\gamma_p^2r  \right)
   \\
     &=\beta^2\left(\frac 12 - \frac 1r\right) + \ve \beta\gamma_p \left(   r - 2  \right)+ \ve^2  \gamma_p^2 r \left(  \frac r2 - 1  \right) = 
     \frac{r-2}{2r}(\beta + r\gamma_p\ve)^2.
     \end{align*}
\end{proof}

First, let us prove that in order to ensure that  $\wt{\mac S}$ and $\wt e_{i,j}$ are all contained in $\mac B$, we need that the perturbation $\ve$ must depend on $q-1$. In fact, when $q\to 1$, 
that is, $p^2\to r-1$, we have already seen that only a very small $\ve$ allows for  $\mac P\setminus\mac B$ to be disjoint.

\begin{lemma}
    \label{lem:S_eij_are_in_B}

Under Notation~\ref{not:geometric_notations},     \[
 \sqrt\ve = \mac O\lp 
  \frac{\min\{q,\sqrt 2\} -1}{\sqrt{\frac{ r^{7/2}}{\sigma_r(W^\#)}\frac {p^2}{q^2}}} 
  \rp 
\implies \conv(\{\wt e_{i,j}\}_{i\ne j}, \mac {\wt S}) \cu \mac B, 
\]
and, in particular, if $\beta = e^\top \wt r_k$, then
\[
 (1-\vf_p\sqrt \ve)^2 -  \beta^2/r\ge ( \beta + \gamma_pr\ve)^2 \left( \frac{1}{\min\{q^2,2\}} -   \frac 1r\right). 
\]
\end{lemma}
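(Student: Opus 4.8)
The plan is to use the description from Lemma~\ref{lem:S_B_and_eij}: inside the affine subspace $\mac H_\beta$ (with $\beta = e^\top \wt r_k$), both $\wt{\mac S}$ and $\mac B$ are balls with the \emph{same} center $\beta e/r$, and each midpoint $\wt e_{i,j}$ lies at a fixed distance from that center. Since $\mac B$ is convex, $\conv(\{\wt e_{i,j}\}_{i\ne j},\wt{\mac S})\cu \mac B$ holds if and only if every generator already lies in $\mac B$, namely $\wt{\mac S}\cu\mac B$ and $\wt e_{i,j}\in\mac B$ for all $i\ne j$. Comparing the squared radii from Lemma~\ref{lem:S_B_and_eij}, these two requirements are respectively $(\beta+\gamma_p r\ve)^2(\tfrac1{q^2}-\tfrac1r)\le r_{\mac B}^2$ and $(\beta+\gamma_p r\ve)^2(\tfrac12-\tfrac1r)\le r_{\mac B}^2$, with $r_{\mac B}^2=(1-\vf_p\sqrt\ve)^2-\beta^2/r$. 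Because $\tfrac1{\min\{q^2,2\}}=\max\{\tfrac1{q^2},\tfrac12\}$, both are implied by the single stated inequality, so it suffices to prove that one; this is precisely the ``in particular'' claim.

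To prove it, I would set $s:=\vf_p\sqrt\ve$ and $A:=\beta+\gamma_p r\ve$, and reduce to a scalar comparison. Lemma~\ref{lem:R_is_close_to_orthogonal} gives $|\beta-1|\le s$, while the orders of $\gamma_p$ (Lemma~\ref{lem:upper_bound_detR}) and $\vf_p$ satisfy $\gamma_p r=\mac O(\vf_p^2/r)$, so $\gamma_p r\ve=\mac O(s^2/r)$ and the offset $A-\beta$ is negligible next to $s$. Bringing the two $\tfrac1r$ terms together, their net contribution on the larger side is $+\gamma_p\ve(A+\beta)=\mac O(s^2/r^2)\ge 0$, which works in our favor; the inequality therefore reduces to $A^2/\min\{q^2,2\}\le(1-s)^2$, i.e. to the single-variable bound
\[
A\le \sqrt{\min\{q^2,2\}}\,(1-s)=(1+\delta)(1-s),\qquad \delta:=\min\{q,\sqrt 2\}-1\ge 0.
\]

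Finally I would close using the hypothesis. Writing $\vf_p=\mac O(\sqrt{r^{7/2}\sigma_r(W^\#)^{-1}p^2q^{-2}})$, the assumed bound on $\sqrt\ve$ gives $s=\vf_p\sqrt\ve\le C\delta$ for an absolute constant $C$ that I am free to take as small as needed. Since $A\le 1+s+\mac O(s^2/r)$ and $(1+\delta)(1-s)=1+\delta-s-\delta s$, the required $A\le(1+\delta)(1-s)$ becomes $2s+\delta s+\mac O(s^2)\le\delta$; substituting $s\le C\delta$ and using $0\le\delta\le\sqrt 2-1<1$ makes the left-hand side at most $\delta(2C+C\delta+\mac O(C^2\delta))$, which is $\le\delta$ once $C$ is small enough.

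The only delicate point, and the main obstacle, is the bookkeeping in the middle step: one must confirm that the offset $\gamma_p r\ve$, the cross term $\gamma_p\ve(A+\beta)$, and the quadratic corrections are all $\mac O(s^2)$ or smaller, so that none of them can erode the margin $1-\tfrac1{\min\{q^2,2\}}\ge\delta$ produced by $q>1$. Everything then rests on the single scalar inequality $s\le C\delta$; the strict inequality $q>1$ is exactly what keeps $\delta>0$ and hence permits a positive admissible $\ve$, whereas $q\to 1$ forces $\delta\to 0$ and with it $\ve\to 0$, matching the qualitative behavior discussed around Figure~\ref{fig:p-SSC_identifiability}.
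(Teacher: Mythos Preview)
Your proposal is correct and follows essentially the same route as the paper: both invoke Lemma~\ref{lem:S_B_and_eij} to reduce the containment to the single radius comparison $(1-\vf_p\sqrt\ve)^2-\beta^2/r\ge(\beta+\gamma_p r\ve)^2\big(\tfrac{1}{\min\{q^2,2\}}-\tfrac1r\big)$, plug in the worst case $\beta=1+\vf_p\sqrt\ve$, and then collapse everything to a scalar bound of the form $\mac O(\vf_p)\sqrt\ve\le\min\{q,\sqrt2\}-1$. Your slight streamlining---observing that the net $1/r$ contribution $\gamma_p\ve(A+\beta)\ge0$ can simply be dropped before taking square roots---is a cosmetic difference, not a different argument.
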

\begin{proof}
    By Lemma~\ref{lem:S_B_and_eij}, $\conv(\{\wt e_{i,j}\}_{i\ne j}, \mac {\wt S}) \cu \mac B$ if and only if  
     \[
    ( \beta + \gamma_pr\ve)^2 
    \max \left\{ \left( \frac{1}{q^2} -   \frac 1r\right),
    \left( \frac{1}{2} -   \frac 1r\right)\right\}
    +
    \beta^2/r
    \le (1-\vf_p\sqrt \ve)^2  .
    \]
   As the left hand side is increasing in $\beta$, and the relation must hold for any $\beta$ such that $|1-\beta| = |1-e^\top \wt r_k| \le \vf_p\sqrt \ve$, we can substitute $\beta = 1 + \vf_p\sqrt \ve$ to obtain  
\begin{align*}
          ( 1 + \vf_p\sqrt \ve + \gamma_pr\ve)^2 
    \max \left\{ \left( \frac{1}{q^2} -   \frac 1r\right),
    \left( \frac{1}{2} -   \frac 1r\right)\right\}
    +
    \frac{    (1 + \vf_p\sqrt \ve)^2}r
    &\le (1-\vf_p\sqrt \ve)^2,
    \\
%     \frac{( 1 + \vf_p\sqrt \ve + \gamma_pr\ve)^2}{\min\{q^2,2\}}
% -\gamma_p\ve (2 + 2\vf_p\sqrt \ve + \gamma_pr\ve)
%     &\le (1-\vf_p\sqrt \ve)^2 
%      \\
     \frac{( 1 + \vf_p\sqrt \ve)^2}{\min\{q^2,2\}} + 
    {(2 + 2\vf_p\sqrt \ve + \gamma_pr\ve)\gamma_pr\ve}
    \left( \frac{1}{\min\{q^2,2\}} -   \frac 1r\right)
    &\le (1-\vf_p\sqrt \ve)^2.
      \end{align*}
From the bound on $\ve$ in Notation~\ref{not:geometric_notations}, we get $\vf_p\sqrt\ve = \mac O(1/\sqrt r)$,  $\gamma_pr\ve = \mac O(1/r^2)$ and $\vf_p^2 = \mac O(\gamma_p r^2)$, so $\vf_p^2 \ve = \mac O(\vf_p\sqrt\ve /\sqrt r)$ and 
we can isolate all the contributions of the order $\ve$ or larger as
\begin{align*}
    \frac{ 1 + 2\vf_p\sqrt \ve}{\min\{q^2,2\}}  + \frac{\vf_p}{\sqrt r} \mac O(\sqrt \ve)
       \le 1-2\vf_p\sqrt \ve
     &  \iff
        \mac O(\vf_p)\sqrt\ve\le \min\{q^2,2\} -1.
       \end{align*}
Since $\min\{q,\sqrt 2\} -1 = \mac O(\min\{q^2,2\} -1)$, 
\[
  \sqrt\ve = \mac O\lp 
  \frac{\min\{q,\sqrt 2\} -1}{\sqrt{\frac{ r^{7/2}}{\sigma_r(W^\#)}\frac {p^2}{q^2}}} 
  \rp \implies 
   \mac O(\vf_p)\sqrt\ve\le \min\{q^2,2\} -1.
\]
\end{proof}

\subsubsection{Decomposition of $\mac P\setminus\mac B$}

\begin{figure}
    \centering
     \includegraphics[width=\linewidth]{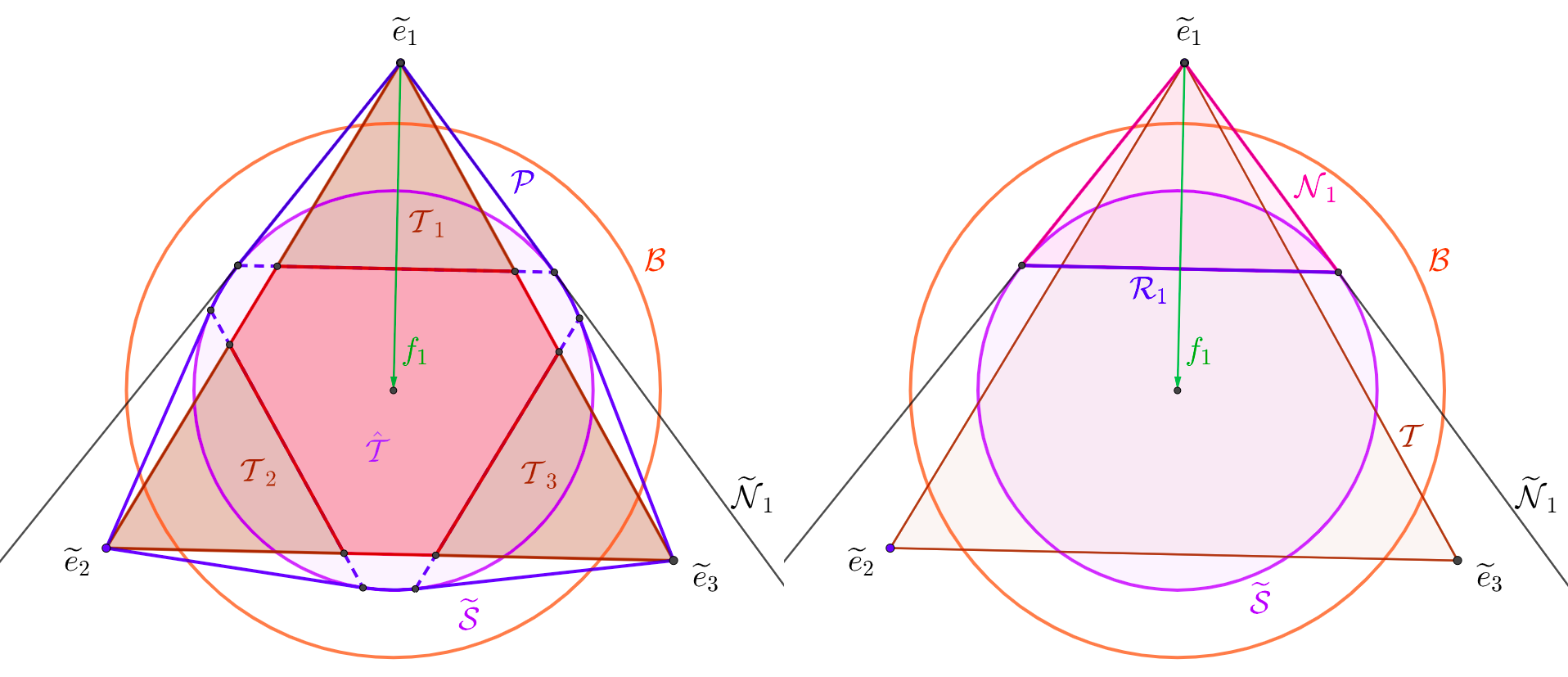}
    \caption{Visualization of the spaces introduced in Notation~\ref{not:geometric_notations2}: $\mac T$, $\wt  {\mac N_i}$, ${\mac N_i}$, ${\mac R_i}$, ${\mac T_i}$, $\hat {\mac T}$.}
    \label{fig:geom2}
\end{figure}

\begin{notation}
    \label{not:geometric_notations2} 
Consider the Notation~\ref{not:geometric_notations}, and let $f_i = \beta e/r-\wt e_i $ where $\beta= e^\top \wt r_k$. If
\[
\alpha: = (\beta + \gamma_p\ve r) \sqrt{1-\frac 1{\min\{q^2,2\}}},
\quad 
 \sqrt\ve = \mac O\lp 
 \lp \min\{q,\sqrt 2\} -1\rp
 \sqrt{\frac{\sigma_r(W^\#)}{ r^{7/2}}\frac {q^2}{p^2}}
  \rp , 
\]
then we define the following spaces, illustrated on Figure~\ref{fig:geom2} for $r=3$.     
\begin{itemize}
    \item  Call $\mac T:= \conv(\{\wt e_{i}\}_{i})$ the convex polyhedra with vertices $\wt e_i$.  Notice that $\mac P = \conv(\mac T,\mac {\wt S})$.
    
     \item The cone $\wt  {\mac N_i}$ is the ice-cream cone with vertex in $\wt e_i$ and central axis in $f_i$.
    % \item  \raz{Maybe justify the definition of the cone or references. Reader might not be familiar?}
It is defined so that it is the smallest such cone containing $\mac P$:
\[
    \wt  {\mac N_i} := \left\{  v\in\mac H_\beta \ | \    f_i^\top (v-\wt e_i) \ge \alpha\|v-\wt e_i\|   \right\}. 
    \]
    
    \item The truncated cone $\mac N_i$ is the truncation of $ \wt  {\mac N_i} $ either at the tangency point with $\mac {\wt S}$ or in the $\wt e_{i,j}$, depending on if $q^2$ is larger than $2$ or not: 
\[
    {\mac N_i} := \left\{  v\in\mac H_\beta \ | \    \alpha^2\ge f_i^\top (v-\wt e_i) \ge \alpha\|v-\wt e_i\|   \right\}. 
    \]
    
    \item The space $\mac R_i$ is the part of $  {\mac N_i} $ at the maximum distance from $\wt e_i$: 
\[
          {\mac R_i} := \left\{  v\in\mac H_\beta \ | \    \alpha^2 =  f_i^\top (v-\wt e_i) \ge \alpha\|v-\wt e_i\|   \right\}. 
          \]
          
    \item The space $\mac T_i$ is the part of $  {\mac T} $ close to $\wt e_i$ delimitated by $\mac R_i$: 
\[
          {\mac T_i} := \left\{  v\in\mac T \ | \    \alpha^2\ge f_i^\top (v-\wt e_i)  \right\}. 
          \]
          
\item The space $\hat {\mac T}$ contains the points of $\mac T$ not belonging to any ${\mac T}_i $: 
\[
\hat {\mac T} := \mac T\setminus \left( \cup_i \mac T_i\right). 
\]
\end{itemize} 
\end{notation}

Let us now prove the following properties of the spaces introduced above.

\begin{lemma}
    \label{lem:cones_inclusions} Using Notation~\ref{not:geometric_notations2}, for every $i\ne j$, we have 
    \[
    \|f_i\|^2 =   \frac {r-1}r (\beta + \gamma_p\ve r)^2, \qquad 
    f_i^\top f_j =   -\frac 1r(\beta + \gamma_p\ve r)^2, 
    \]
    and
    \begin{enumerate}
        \item $\mac P\cu  \wt  {\mac N_i}$,
        \item  $\mac R_i\cu \mac B\cap \mac N_i$,
        \item ${\mac T_i} = \mac T\cap \mac N_i$, 
        \item $\hat{\mac T} =  \left\{ v\in\mac T\ |\ v=\sum_i\lambda_i \wt e_i, \ \sum_i\lambda_i = 1, \ 0\le \min_i\lambda_i \le \max_i\lambda_i < \frac{1}{\min\{ q^2,2\}} \right\}  \cu\mac B$.
    \end{enumerate}
\end{lemma}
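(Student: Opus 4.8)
The plan is to reduce every set in Notation~\ref{not:geometric_notations2} to the single positive scalar $c:=\beta+\gamma_p\ve r$, where $\beta=e^\top\wt r_k$, since the vertices and their common center then become clean multiples of the $e_i$'s. From $\wt e_i=c\,e_i-\gamma_p\ve e$ I get $f_i=\beta e/r-\wt e_i=c(e/r-e_i)$, and the two opening identities follow by direct computation: $\|f_i\|^2=c^2\|e/r-e_i\|^2=\tfrac{r-1}{r}c^2$ and, for $i\ne j$, $f_i^\top f_j=c^2(e/r-e_i)^\top(e/r-e_j)=-\tfrac1r c^2$. I would also record at the outset the identity $\beta e/r=\wt e_i+f_i$, i.e. the common center of $\mac B$ and $\wt{\mac S}$ lies on the axis of the cone $\wt{\mac N_i}$ at distance $\|f_i\|$ from its apex $\wt e_i$; this single observation drives items (1)--(3).

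For item (1), $\wt{\mac N_i}$ is a convex ice-cream cone, so by Lemma~\ref{lem:Pcharacterization} it is enough to place each generator of $\mac P=\conv(\wt{\mac S}\cup\{\wt e_1,\dots,\wt e_r\})$ inside it. For a vertex $\wt e_j$, $j\ne i$, I compute $f_i^\top(\wt e_j-\wt e_i)=c^2$ and $\|\wt e_j-\wt e_i\|=c\sqrt2$, so the defining inequality $f_i^\top(\wt e_j-\wt e_i)\ge\alpha\|\wt e_j-\wt e_i\|$ becomes $c\ge\alpha\sqrt2$, i.e. $1\ge2\lp1-\tfrac1{\min\{q^2,2\}}\rp$, which always holds because $\min\{q^2,2\}\le2$. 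For the ball $\wt{\mac S}$, centered on the axis at $\beta e/r$, I use the tangency criterion that a ball of radius $\rho$ centered on the axis at distance $\|f_i\|$ from the apex lies in the cone iff $\rho\le\sqrt{\|f_i\|^2-\alpha^2}$; since $\|f_i\|^2-\alpha^2=c^2\lp\tfrac1{\min\{q^2,2\}}-\tfrac1r\rp$ and, by Lemma~\ref{lem:S_B_and_eij}, $\wt{\mac S}$ has radius $c\sqrt{1/q^2-1/r}$, this reduces to $\min\{q^2,2\}\le q^2$, again always true. Item (3) is then immediate: $\mac T\cu\mac P\cu\wt{\mac N_i}$ forces every $v\in\mac T$ to satisfy the conic inequality $f_i^\top(v-\wt e_i)\ge\alpha\|v-\wt e_i\|$, so cutting $\mac T$ with $\mac N_i$ only adds the cap constraint $\alpha^2\ge f_i^\top(v-\wt e_i)$, which is exactly the definition of $\mac T_i$.

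Item (2) splits into $\mac R_i\cu\mac N_i$, trivial from the definitions, and $\mac R_i\cu\mac B$. For the latter I decompose $v-\wt e_i=s\,f_i/\|f_i\|+u$ with $u\perp f_i$; the conditions $f_i^\top(v-\wt e_i)=\alpha^2$ and $f_i^\top(v-\wt e_i)\ge\alpha\|v-\wt e_i\|$ give $s=\alpha^2/\|f_i\|$ and $\|u\|^2\le\alpha^2-s^2$, whence $\|v-\beta e/r\|^2=(s-\|f_i\|)^2+\|u\|^2$ attains its maximum on the rim at $\|f_i\|^2-\alpha^2=c^2\lp\tfrac1{\min\{q^2,2\}}-\tfrac1r\rp$. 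By Lemma~\ref{lem:S_eij_are_in_B} this is at most $(1-\vf_p\sqrt\ve)^2-\beta^2/r$, which by Lemma~\ref{lem:S_B_and_eij} is precisely the squared radius of $\mac B$, so $\mac R_i\cu\mac B$.

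Item (4) I would handle in barycentric coordinates. The $\wt e_j$ are affinely independent in $\mac H_\beta$ (a nonzero scaling plus translation of $e_1,\dots,e_r$), so $v\in\mac T$ has a unique representation $v=\sum_j\lambda_j\wt e_j$ with $\lambda_j\ge0$ and $\sum_j\lambda_j=1$. Using $f_i^\top(\wt e_j-\wt e_i)=c^2$ for $j\ne i$ gives $f_i^\top(v-\wt e_i)=c^2(1-\lambda_i)$, so $v\notin\mac T_i$ iff $\lambda_i<1/\min\{q^2,2\}$, which is exactly the stated description of $\hat{\mac T}$. For the inclusion $\hat{\mac T}\cu\mac B$, the identities of the first paragraph give $\|v-\beta e/r\|^2=\big\|\sum_i\lambda_i f_i\big\|^2=c^2\lp\sum_i\lambda_i^2-\tfrac1r\rp$, and the elementary bound $\sum_i\lambda_i^2\le(\max_i\lambda_i)\sum_i\lambda_i=\max_i\lambda_i<1/\min\{q^2,2\}$ together with Lemma~\ref{lem:S_eij_are_in_B} places $v$ strictly inside $\mac B$. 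The main obstacle is item (1): making one ice-cream cone contain both the far vertices $\wt e_j$ and the whole ball $\wt{\mac S}$ simultaneously, which is precisely what pins down the half-angle through $\alpha=c\sqrt{1-1/\min\{q^2,2\}}$ and forces the $\min\{q^2,2\}$ truncation; once the cone is fixed, items (2)--(4) are bookkeeping on top of the already-proved Lemma~\ref{lem:S_eij_are_in_B}.
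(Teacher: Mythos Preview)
Your proof is correct and follows the same overall structure as the paper's: reduce everything to inner products of the $f_i=c(e/r-e_i)$ with $c=\beta+\gamma_p\ve r$, treat the vertices and the ball separately in item~(1), and use barycentric coordinates for item~(4). Two of your shortcuts are slightly cleaner than the paper's versions. For $\wt{\mac S}\cu\wt{\mac N_i}$ you invoke the tangency criterion $\rho\le\sqrt{\|f_i\|^2-\alpha^2}$ for a ball centered on the axis of an ice-cream cone, whereas the paper expands $[f_i^\top(v-\wt e_i)]^2-\alpha^2\|v-\wt e_i\|^2$ directly; both yield the same inequality $\min\{q^2,2\}\le q^2$. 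For $\hat{\mac T}\cu\mac B$ you use the one-line bound $\sum_i\lambda_i^2\le\max_i\lambda_i$ (valid since $\lambda_i\ge0$, $\sum_i\lambda_i=1$), which immediately gives $\|v-\beta e/r\|^2=c^2(\sum_i\lambda_i^2-1/r)<c^2(\bar\lambda-1/r)$; the paper instead argues that the convex function $\|v-\beta e/r\|^2$ attains its maximum on the closure of $\hat{\mac T}$ at a vertex of the form $\bar\lambda\wt e_i+(1-\bar\lambda)\wt e_j$ and evaluates there. Your bound is coarser (it ignores the $-(2\bar\lambda-1)(1-\bar\lambda)$ term the paper keeps) but already sufficient thanks to Lemma~\ref{lem:S_eij_are_in_B}.
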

\begin{proof}
First of all, let us analyze the vector $f_i$. We have $f_i = \beta e/r-\wt e_i = (\beta + \gamma_p\ve r)( e/r-e_i)$. As a consequence,
\[\|f_i\|^2 =    \| (\beta + \gamma_p\ve r)(e_i-  e/r) \| ^2 = \frac {r-1}r (\beta + \gamma_p\ve r)^2\]
and, for any $j\ne i$, 
\[
f_i^\top f_j =   (\beta + \gamma_p\ve r)^2(e_i-  e/r)^\top (e_j-  e/r) = -\frac 1r(\beta + \gamma_p\ve r)^2.
\]

    \textit{1.}   Let us prove that both $\mac T$ and $\mac {\wt S}$ are contained in $ \wt  {\mac N_i}$. From the definition, $\wt e_i\in  \wt  {\mac N_i}$ follows easily.  For $j\ne i$,
    \[
    f_i^\top (\wt e_j- \wt e_i) = f_i^\top (f_i-f_j) = \frac {r-1}r (\beta + \gamma_p\ve r)^2 + \frac 1r(\beta + \gamma_p\ve r)^2 = (\beta + \gamma_p\ve r)^2,
    \]
    \[
    \alpha^2 \|\wt e_j -\wt e_i\|^2 =\alpha^2 \|f_i -f_j\|^2 =2\alpha^2
    \lp
     \frac {r-1}r (\beta + \gamma_p\ve r)^2 +  \frac {1}r (\beta + \gamma_p\ve r)^2
    \rp
    = 2\alpha^2(\beta + \gamma_p\ve r)^2, 
    \]
    but since $2\alpha^2 \le (\beta + \gamma_p\ve r)^2$, we find that $\wt e_j\in  \wt  {\mac N_i}$, so $\mac T\cu  \wt  {\mac N_i}$. Recall from Lemma~\ref{lem:S_B_and_eij} that $v\in \mac {\wt S}$ can also be expressed as $v = \beta e/r +w$ where $e^\top w = 0$ and $ \|w\|^2\le ( \beta + \gamma_pr\ve)^2 \left( \frac{1}{q^2} -   \frac 1r\right)$.  As a consequence, $v\in \wt{\mac N_i}$ as
    \begin{align*}
        [f_i^\top( \wt e_i - w-\beta e/r )]^2 &= [f_i^\top (f_i+w)]^2 = (\|f_i\|^2 + f_i^\top w)^2\\
        &= \|f_i\|^2 (\|f_i\|^2 +2f_i^\top w_i) + (\|w\|^2 + f_i^\top w)^2 - \|w\|^2 (\|w\|^2 + 2f_i^\top w_i)\\
         &=(\|w\|^2 + f_i^\top w)^2 +  
         (\|f_i\|^2 - \|w\|^2)(\|f_i\|^2 + \|w\|^2+ 2f_i^\top w_i)\\
         &\ge 
         (\|f_i\|^2 - \|w\|^2)(\|f_i\|^2 + \|w\|^2+ 2f_i^\top w_i)\\
           &\ge 
         \left[ \frac {r-1}r (\beta + \gamma_p\ve r)^2 - ( \beta + \gamma_pr\ve)^2 \left( \frac{1}{q^2} -   \frac 1r\right)\right ]
         \| f_i+w\|^2 \\
         &= 
        ( \beta + \gamma_pr\ve)^2    \left( 1- \frac{1}{q^2}\right)
        \|v-\wt e_i\|^2 \\&
        \ge 
        ( \beta + \gamma_pr\ve)^2    \left( 1- \frac{1}{\min\{q^2,2\}}\right)
        \|v-\wt e_i\|^2
        = \alpha^2\|v-\wt e_i\|^2.
    \end{align*}
    We can conclude that $\wt{\mac S}\cu \wt{\mac N_i}$. 

    \textit{2.} The relation  $\mac R_i\cu \mac N_i$  is immediate from their definition. Taken now any $v\in \mac R_i$, we have
    \begin{align*}
     \|v-\beta e/r\|^2 &= \|v-\wt e_i - f_i\|^2 
     = \|v-\wt e_i\|^2 + \|f_i\|^2 - 2f_i^\top (v-\wt e_i) 
     \le  \alpha^2 + \|f_i\|^2 - 2\alpha^2 \\
     &=  \frac {r-1}r (\beta + \gamma_p\ve r)^2  -  (\beta + \gamma_p\ve r)^2 \left( 1-\frac 1{\min\{q^2,2\}}\right)\\
     &=  (\beta + \gamma_p\ve r)^2 \left( \frac 1{\min\{q^2,2\} }- \frac {1}r\right)\le  (1-\vf_p\sqrt \ve)^2 -  \beta^2/r,
 \end{align*}
where the last relation holds due to Lemma~\ref{lem:S_eij_are_in_B}.
Since $\mac R_i \cu \mac H_\beta$ then $e^\top (v-\beta e/r) = 0$, thus proving  $\mac R_i\cu \mac B$ using Lemma~\ref{lem:S_B_and_eij}.

 \textit{3.} By definition, $\mac T\cap \mac N_i\cu \mac T_i$, and by 1.  \[
 v\in \mac T_i \implies v\in \mac T\cu  \wt  {\mac N_i}, \, \alpha^2\ge f_i^\top (v-\wt e_i) 
 \implies 
 f_i^\top (v-\wt e_i) \ge \alpha\|v-\wt e_i\| , \, \alpha^2\ge f_i^\top (v-\wt e_i) \implies 
  v\in \mac N_i, 
 \]
 thus proving that $\mac T_i\cu \mac T\cap \mac N_i$.

 \textit{4.} If now $v\in \hat {\mac T}$, then $ v=\sum_i\lambda_i \wt e_i$ and $v\not \in \mac T_i$ for every $i$, where $\lambda_i \ge 0$ and $\sum_i\lambda_i =1$. In particular, $ v=\sum_i\lambda_i \wt e_i \implies \beta e/r-v = \sum_i\lambda_i f_i $  and since the quantity $f_i^\top f_j$ is the same for any $j\ne i$, 
 \begin{align*}
     v\not \in \mac T_i \iff \alpha^2&< f_i^\top (v-\wt e_i) = f_i^\top(f_i +(v-\beta e/r))
     = \|f_i\|^2 -  f_i^\top \sum_j\lambda_j f_j\\
     &= (1-\lambda_i)\|f_i\|^2 - (1-\lambda_i) f_i^\top f_j  = (1-\lambda_i) f_i^\top(f_i - f_j) 
     =  (1-\lambda_i) (\beta + \gamma_p\ve r)^2,\\
        v\not \in \mac T_i \iff \lambda_i &< 1 - \frac{\alpha^2}{(\beta + \gamma_p\ve r)^2 } = 
   \frac{1}{\min\{q^2,2\}}.
 \end{align*}
This proves that 
\[
\hat{\mac T} =  \left\{ v\in\mac T\ \Big|\ v=\sum_i\lambda_i \wt e_i, \ \sum_i\lambda_i = 1, \ 0\le \min_i\lambda_i \le \max_i\lambda_i < \frac{1}{\min\{ q^2,2\}} \right\}.
\]
Call $\ol \lambda:= \frac{1}{\min\{q^2,2\}}$. Since $\ol \lambda\ge 1/2$,  the closure of $\hat{\mac T}$ is a polyhedral convex set with vertices being $ v=\sum_i\lambda_i \wt e_i $ with one $\lambda_i$ equal to $ \ol \lambda$, one equal to $ 1- \ol\lambda$ and all the rest equal to zero. As a consequence, the convex function $\|x- \beta e/r\|^2$ has maximum on $\hat{\mac T}$ in exactly one of its vertices. Using that $1\ge \ol \lambda\ge 1/2$ and Lemma~\ref{lem:S_B_and_eij}, we conclude that
 \begin{align*}
     v\in \hat{\mac T}\implies \|v-\beta e/r\|^2 &\le \left\| \ol\lambda\wt e_i  + (1-\ol\lambda)\wt e_j  -\beta e/r\right\|^2 = 
     \left\| \ol\lambda f_i   + (1-\ol\lambda)f_j \right\|^2\\& = 
     \ol\lambda^2\|f_i\|^2 + (1-\ol\lambda)^2 \|f_i\|^2 + 
    2(1-\ol\lambda)\ol\lambda f_i^\top f_j
    \\& = 
     (\ol\lambda^2+ (1-\ol\lambda)^2) (\|f_i\|^2- f_i^\top f_j)+ 
     f_i^\top f_j 
    \\& = 
   \left(   \ol  \lambda +
   2\ol\lambda^2 -3\ol\lambda + 1 - \frac 1r\right) (\beta + \gamma_p\ve r)^2 
   \\& = 
   \left(   \ol  \lambda - (2\ol \lambda -1)(1-\ol \lambda)  - \frac 1r\right) (\beta + \gamma_p\ve r)^2 
   \\& \le 
   \left(   \ol  \lambda  - \frac 1r\right) (\beta + \gamma_p\ve r)^2 \le  (1-\vf_p\sqrt \ve)^2 -  \beta^2/r.
 \end{align*}
 This is enough to show that  $ \hat {\mac T}\cu \mac B$.
    
\end{proof}

We can finally prove that we can decompose $\mac P\setminus\mac B$ as the union of sets $\mac P_i$, each one contained in $ \mac N_i\setminus\mac B$. 

% \raz{Need to check if $\mac N_i\setminus\mac B$ are disjoint. If not, then needs to be rephrased. Also in Section 4.2 then needs to be rephrased if it's not the case.}
% Notice that $ \frac {r-1}r\ge \frac 1{q^2}-\frac 1r$, meaning that $\wt e_i\not\in \wt{\mac S}$ except for the case $q=1$, where $\wt e_i\in \partial \wt{\mac S}$. 

\begin{theorem}
\label{th:decomposition_of_P/B}    Under Notation~\ref{not:geometric_notations} and Notation~\ref{not:geometric_notations2},    
\[
%  \sqrt\ve = \mac O\lp 
%   \frac{\min\{q,\sqrt 2\} -1}{\sqrt{\frac{ r^{7/2}}{\sigma_r(W^\#)}\frac {p^2}{q^2}}} 
%   \rp 
% \implies
\mac P\setminus\mac B  
\cu \cup_i (\mac N_i \setminus \mac B).
\]

\end{theorem}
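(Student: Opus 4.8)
The plan is to prove the slightly stronger containment $\mac P \cu \mac B \cup \cup_i \mac N_i$; intersecting both sides with the complement of $\mac B$ then yields the statement, since $(\cup_i \mac N_i)\setminus \mac B = \cup_i(\mac N_i\setminus\mac B)$, and any $v\in\mac P\setminus\mac B$ landing in some $\mac N_i$ automatically lands in $\mac N_i\setminus\mac B$.

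First I would fix $v\in\mac P$ and use $\mac P=\conv(\mac T,\wt{\mac S})$ from Lemma~\ref{lem:Pcharacterization} to write $v=(1-\mu)t+\mu w$ with $t\in\mac T$, $w\in\wt{\mac S}$ and $\mu\in[0,1]$. The decomposition $\mac T=\hat{\mac T}\cup\cup_i\mac T_i$ from Notation~\ref{not:geometric_notations2} then splits the argument. If $t\in\hat{\mac T}$, then $t\in\mac B$ by Lemma~\ref{lem:cones_inclusions}(4) and $w\in\wt{\mac S}\cu\mac B$ by Lemma~\ref{lem:S_eij_are_in_B}, so convexity of $\mac B$ gives $v\in\mac B$ directly.

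The substantive case is $t\in\mac T_j$ for some $j$, where $t\in\mac N_j$ by Lemma~\ref{lem:cones_inclusions}(3). Here I would track the affine scalar function $g(s):=f_j^\top\lp(1-s)t+sw-\wt e_j\rp$ along the segment $[t,w]$, observing that the whole segment lies in $\mac P\cu\wt{\mac N_j}$ by Lemma~\ref{lem:cones_inclusions}(1). Since $t\in\mac N_j$ we have $g(0)\le\alpha^2$. If moreover $g(1)\le\alpha^2$, then $g\le\alpha^2$ on all of $[0,1]$, so $v$ satisfies both defining inequalities of $\mac N_j$ and hence $v\in\mac N_j$. Otherwise $g(1)>\alpha^2$, and $g$ being affine crosses the level $\alpha^2$ at a unique $s^*\in[0,1)$; I would then check that the crossing point $p^*=(1-s^*)t+s^* w$ lies in $\mac R_j$ (its defining equality $f_j^\top(p^*-\wt e_j)=\alpha^2$ holds by construction, and $\|p^*-\wt e_j\|\le\alpha$ follows from $p^*\in\wt{\mac N_j}$), hence $p^*\in\mac B$ by Lemma~\ref{lem:cones_inclusions}(2). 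For $\mu\le s^*$ one gets $v\in\mac N_j$ as in the previous bullet, while for $\mu>s^*$ a one-line computation rewrites $v$ as a convex combination of $p^*\in\mac B$ and $w\in\mac B$, so $v\in\mac B$.

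The main obstacle is exactly this transition region of $\mac P$ lying between the truncation facet $\mac R_j$ of the cone and the inner ball $\wt{\mac S}$: there $v$ is neither visibly inside the truncated cone $\mac N_j$ nor obviously inside $\mac B$. The crossing-point device resolves it by converting any point that has passed the truncation plane toward $\wt{\mac S}$ into a convex combination of a point of $\mac R_j$ and the point $w$, both already known to sit in $\mac B$; the two enabling facts are $\mac R_j\cu\mac B$ and $\wt{\mac S}\cu\mac B$, which is precisely what Lemmas~\ref{lem:cones_inclusions} and~\ref{lem:S_eij_are_in_B} supply. Everything else reduces to the monotonicity of an affine function and convexity of $\mac B$, so the only computation I expect to carry out carefully is the verification that $p^*\in\mac R_j$.
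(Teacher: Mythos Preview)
Your proposal is correct and follows essentially the same route as the paper's proof: both reduce to the containment $\mac P\cu\mac B\cup\cup_i\mac N_i$, write $v\in\mac P$ as a convex combination of a point in $\mac T$ and one in $\wt{\mac S}$, dispose of the $\hat{\mac T}$ case via Lemma~\ref{lem:cones_inclusions}(4) and Lemma~\ref{lem:S_eij_are_in_B}, and in the $\mac T_j$ case locate the crossing point on $\mac R_j\cu\mac B$ to split the segment into a portion inside $\mac N_j$ and a portion inside $\mac B$. The only cosmetic difference is that the paper argues by contradiction whereas you give a direct case analysis, but the ingredients and logic are identical.
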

\begin{proof}
    
We want to show that  $\mac P \cu \mac B  \cup  (\cup_i  \mac N_i)$, so that $\mac P\setminus\mac B  
\cu \cup_i (\mac N_i \setminus \mac B)$. Take a vector $v\in \mac P\setminus(\mac B  \cup  (\cup_i  \mac N_i) )$. Since $\mac P = \conv(\wt{\mac S} \cup \mac T)$ and both $\wt{\mac S}, \mac T$ are convex, there must exist $s\in \wt{\mac S}$ and $t\in\mac T$ such that $v = \lambda t + (1-\lambda) s$ with $0\le \lambda\le 1$. 
Since $s\in \wt{\mac S}\cu\mac B$ due to Lemma~\ref{lem:S_eij_are_in_B}, then $t\not \in \mac B$ because otherwise $v\in \mac B$. 
In particular, $t\not\in \hat{\mac T}$ because $\hat{\mac T} \cu \mac B$ due to 4. in Lemma~\ref{lem:cones_inclusions}, so $t\in \mac T\setminus\hat{\mac T} = \cup_j\mac T_j$ and there must exist an index $i$ such that $t\in \mac T_i\cu \mac N_i$.   
Since $\mac T_i\cu \mac N_i$  due to 3. in Lemma~\ref{lem:cones_inclusions}, the vector $s$ cannot belong to $\mac N_i$, otherwise $v\in \mac N_i$.  
Again, due to 1. of the same Lemma, we have $s\in \wt{\mac S}\cu\mac P\cu  \wt{\mac N_i}$, so we conclude that $t\in \mac N_i\cu \wt{\mac N_i}$ and $s\in  \wt{\mac N_i}\setminus\mac N_i$.
In particular,
    \[
    \alpha^2\ge f_i^\top (t-\wt e_i), \qquad \alpha^2< f_i^\top (s-\wt e_i),
    \]
so there exists a vector $w = \mu t + (1-\mu) s$ with $0< \mu\le 1$ such that $\alpha^2 =  f_i^\top (w-\wt e_i)$. Since $\wt{\mac N_i}$ is convex, $w\in \wt{\mac N_i}$ and finally by 2. of Lemma~\ref{lem:cones_inclusions}, $w\in \mac R_i\cu \mac B\cap \mac N_i$. We thus conclude that 
\[
v\in \conv(t,s) = \conv(t,w)\cup \conv(w,s) \cu \mac N_i \cup \mac B,
\]
 a contradiction. 
 \end{proof}

We can join the bounds on $\ve$ found  in Notation~\ref{not:geometric_notations} and Notation~\ref{not:geometric_notations2}
\[
 \sqrt\ve = \mac O\lp 
 \lp \min\{q,\sqrt 2\} -1\rp
 \sqrt{\frac{\sigma_r(W^\#)}{ r^{7/2}}\frac {q^2}{p^2}}
  \rp ,
  \qquad 
  \ve  = \mac O\lp \frac {\sigma_r(W^\#)}{r^{9/2}}\frac {q^2}{p^2}\rp,
\]
into
\[
\ve  = \mac O\lp 
\lp \min\{q,\sqrt 2\} -1\rp^2
\frac {\sigma_r(W^\#)}{r^{9/2}}\frac {q^2}{p^2}\rp.
\]
With this assumption on $\ve$ we can finally bound $\min_j\|\wt r_k-\wt e_j\|^2$.

\begin{lemma}
\label{lem:distance_rk_ei_sscp}  Using  Notation~\ref{not:geometric_notations} and Notation~\ref{not:geometric_notations2}, we have 
\[
   \ve  = \mac O\lp 
\lp \min\{q,\sqrt 2\} -1\rp^2
\frac {\sigma_r(W^\#)}{r^{9/2}}\frac {q^2}{p^2}\rp
\implies 
\min_j\|\wt r_k-\wt e_j\|^2 =  \frac { \ve}{\min\{q^2-1,1\}}
    \mac O\left(  
   \frac{ r^3\sqrt r}{\sigma_r(W^\#)}\frac {p^2}{q^2}
      \right)  .
    \]
\end{lemma}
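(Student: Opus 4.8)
The plan is to combine the decomposition result, Theorem~\ref{th:decomposition_of_P/B}, with an elementary (essentially planar) estimate of how far a point of the truncated cone $\mac N_j$ can lie from its apex $\wt e_j$ while staying outside the ball $\mac B$.

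First I would locate $\wt r_k$: by Lemma~\ref{lem:upper_bound_detR} one has $\wt r_k \in \mac C_p^* - \gamma_p\ve e$ with $e^\top\wt r_k = \beta$, hence $\wt r_k \in \mac P$, while Lemma~\ref{lem:R_is_close_to_orthogonal} gives $\|\wt r_k\| \ge 1 - \vf_p\sqrt\ve$, hence $\wt r_k \notin \mac B$. Thus $\wt r_k \in \mac P\setminus\mac B$, and Theorem~\ref{th:decomposition_of_P/B} produces an index $j$ with $\wt r_k \in \mac N_j\setminus\mac B$; it then suffices to bound $\|\wt r_k - \wt e_j\|$.

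Next, fixing this $j$ and writing $c = \beta e/r$, $\rho_B^2 = (1-\vf_p\sqrt\ve)^2 - \beta^2/r$ for the squared radius of $\mac B$ (Lemma~\ref{lem:S_B_and_eij}) and $\delta := \|f_j\|^2 - \rho_B^2$, I would run a ray-by-ray argument. Any $v\in\mac N_j$ is $v = \wt e_j + t u$ with $\|u\|=1$ and $f_j^\top u \ge \alpha$, so $h(t):=\|v-c\|^2 = t^2 - 2(f_j^\top u)t + \|f_j\|^2$ is an upward parabola with roots $t_\pm(u) = f_j^\top u \pm \sqrt{(f_j^\top u)^2-\delta}$; these are real since $f_j^\top u\ge\alpha$ and $\alpha^2\ge\delta$, the last inequality being exactly $\rho_B^2\ge\|f_j\|^2-\alpha^2$ as provided by Lemma~\ref{lem:S_eij_are_in_B}. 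Because $\mac R_j\subseteq\mac B$ (part 2 of Lemma~\ref{lem:cones_inclusions}), the truncation along $u$ occurs at $t_{\mathrm{tr}}(u) = \alpha^2/(f_j^\top u)\le t_+(u)$, so a point of $\mac N_j$ outside $\mac B$ (where $h\ge\rho_B^2$) is forced to satisfy $t\le t_-(u)$. Since $f_j^\top u\ge\alpha$ gives $t_-(u)=\delta/(f_j^\top u+\sqrt{(f_j^\top u)^2-\delta})\le\delta/\alpha$, I obtain $\|\wt r_k-\wt e_j\|\le d_-:=\alpha-\sqrt{\alpha^2-\delta}\le\delta/\alpha$.

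Finally I would plug in orders of magnitude. Expanding the closed forms of Lemma~\ref{lem:S_B_and_eij} and Notation~\ref{not:geometric_notations2} with $|\beta-1|\le\vf_p\sqrt\ve$ yields $\delta=\mac O(\vf_p\sqrt\ve+\gamma_p r\ve)$, and the hypothesis on $\ve$ makes $\vf_p\sqrt\ve$ dominant, so $\delta^2=\mac O(\vf_p^2\ve)=\mac O\!\big(\frac{r^{7/2}}{\sigma_r(W^\#)}\frac{p^2}{q^2}\ve\big)$; meanwhile $\alpha^2=(\beta+\gamma_p\ve r)^2\big(1-\frac1{\min\{q^2,2\}}\big)=\Omega(\min\{q^2-1,1\})$. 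Combining gives $\min_j\|\wt r_k-\wt e_j\|^2\le\delta^2/\alpha^2=\frac{\ve}{\min\{q^2-1,1\}}\mac O\!\big(\frac{r^3\sqrt r}{\sigma_r(W^\#)}\frac{p^2}{q^2}\big)$, as claimed. The main obstacle is conceptual rather than computational: one must see that $\mac N_j\setminus\mac B$ is a thin cap hugging the apex, which holds precisely because the far face $\mac R_j$ sits inside $\mac B$, forcing every cone ray to re-enter and remain in the ball before it is truncated; the two ingredients that make this rigorous, namely $\mac R_j\subseteq\mac B$ and $\alpha^2\ge\delta$, are exactly what Lemma~\ref{lem:cones_inclusions} and Lemma~\ref{lem:S_eij_are_in_B} were engineered to supply, leaving only the routine tracking of the asymptotic orders of $\delta,\alpha,\vf_p,\gamma_p$.
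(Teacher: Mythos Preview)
Your proof is correct and follows essentially the same approach as the paper: both locate $\wt r_k$ in some $\mac N_j\setminus\mac B$ via Theorem~\ref{th:decomposition_of_P/B}, and both obtain the identical key estimate $\|\wt r_k-\wt e_j\|\le \alpha-\sqrt{\alpha^2-\delta}\le \delta/\alpha$ with $\delta=\|f_j\|^2-\rho_B^2$, followed by the same asymptotic bookkeeping on $\delta$ and $\alpha$. The only cosmetic difference is that the paper slices $\mac N_j$ by the level sets $\{f_j^\top(v-\wt e_j)=\alpha t\}$ and asks when the slice lies in $\mac B$, whereas you parametrize by rays $\wt e_j+tu$ and read off the first exit time from $\mac B$; these are two descriptions of the same planar computation and lead to the same formula.
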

\begin{proof}
    From Notation~\ref{not:geometric_notations}, $\wt r_k\in\mac P\setminus\mac B$ and, by Theorem~\ref{th:decomposition_of_P/B}, there exists $i$ such that $\wt r_k \in \mac N_i\setminus \mac B$, and  
    \[
\min_j\|\wt r_k-\wt e_j\|^2\le  \|\wt r_k-\wt e_i\|^2
\le 
\max_{v\in \mac N_i\setminus\mac B} \|v-\wt e_i\|^2
\qquad 
    {\mac N_i} = \left\{  v\in\mac H_\beta \ | \    \alpha^2\ge f_i^\top (v-\wt e_i) \ge \alpha\|v-\wt e_i\|   \right\}. 
    \]
For any $t$ define
 \[{\mac N_{i,t}} = \left\{  v\in\mac H_\beta \ | \    \alpha t= f_i^\top (v-\wt e_i) \ge \alpha\|v-\wt e_i\|   \right\}.\]
Notice that $\mac N_{i,t}=\emptyset$ for $t<0$, $\mac N_{i,\alpha} = \mac R_i$ and $\mac N_i = \sqcup_{0\le t\le \alpha} \mac N_{i,t}$,  so
 \[
\min_j\|\wt r_k-\wt e_j\|
\le 
\max_{v\in \mac N_i\setminus\mac B} \|v-\wt e_i\|
\le 
\max_{\alpha \ge t\ge 0 \ :\ \mac N_{i,t}\not\cu\mac B }
\max_{v\in \mac N_{i,t}} \|v-\wt e_i\|
\le 
\max_{\alpha \ge t\ge 0 \ :\ \mac N_{i,t}\not\cu\mac B }
t.
\]
% The set $\mac N_{i,\alpha}$ coincides with $\mac R_i\cu \mac B$ by 2. of Lemma~\ref{lem:cones_inclusions} and the set $\mac N_{i,0}$ coincides with %$\{\wt e_i \}\not\cu \mac B$
% $\{\wt e_i \}$.  
Since $\mac N_{i,t}$ and $\mac B$ are both convex, the condition $\mac N_{i,t}\cu \mac B$ is equivalent to $\partial\mac N_{i,t}\cu \mac B$, that is, for every $v\in\mac H_\beta$, 
\[
 \alpha t= f_i^\top (v-\wt e_i) =  \alpha\|v-\wt e_i\| \implies \| v-\beta e/r\|^2 \le   (1-\vf_p\sqrt \ve)^2 -  \beta^2/r ,  
\]
but $v - \wt e_i = v -\beta e/ r + f_i$, so $\alpha t= f_i^\top (v-\wt e_i) =  \alpha\|v-\wt e_i\| $ coincides with the pair of conditions
\[
\begin{cases}
    \alpha t  = f_i^\top (v-\wt e_i) =  f_i^\top (v-\beta e/r) + \|f_i\|^2, \\
    t^2 = \|v-\wt e_i\|^2 = \| v-\beta e/r\|^2 + \|f_i\|^2  + 2f_i^\top(v-\beta e/r). 
\end{cases}
\]
As a consequence,
\begin{align*}
    \| v-\beta e/r\|^2 & = t^2 - \|f_i\|^2  - 2f_i^\top(v-\beta e/r)
= t^2 - 2\alpha t +  \|f_i\|^2
= (\alpha - t)^2 - \alpha^2+  \|f_i\|^2, 
%& =  (\alpha - t)^2 + \lp \frac {r-1}r - 1+\frac 1{\min\{q^2,2\}} \rp(\beta + \gamma_p\ve r)^2
%=  (\alpha - t)^2 + \lp \frac 1{\min\{q^2,2\}}-\frac 1r \rp(\beta + \gamma_p\ve r)^2.
\end{align*}
and   $\mac N_{i,t}\cu \mac B$ for  $0\le t\le \alpha$ if and only if
\begin{align*}
    t &\ge
    \alpha - \sqrt{(1-\vf_p\sqrt \ve)^2 - \frac {\beta^2}r + \alpha^2 - \|f_i\|^2}
= 
\alpha - \sqrt{ \alpha^2 - \lp\frac {\beta^2}r + \|f_i\|^2 - (1-\vf_p\sqrt \ve)^2 \rp},
%\\&
% \ge 
%  \frac{\frac {\beta^2}r + \|f_i\|^2 - (1-\vf_p\sqrt \ve)^2}{2\alpha}
\end{align*}
where, by Lemma~\ref{lem:cones_inclusions}  and Lemma~\ref{lem:S_eij_are_in_B}, 
\begin{align*}
    \frac {\beta^2}r + \|f_i\|^2 - (1-\vf_p\sqrt \ve)^2 &\le 
\frac {r-1}r (\beta + \gamma_p\ve r)^2 -( \beta + \gamma_pr\ve)^2 \left( \frac{1}{\min\{q^2,2\}} -   \frac 1r\right) = \alpha^2,\\
\frac {\beta^2}r + \|f_i\|^2 - (1-\vf_p\sqrt \ve)^2 &= 
\frac {\beta^2}r + \frac {r-1}r (\beta + \gamma_p\ve r)^2 - (1-\vf_p\sqrt \ve)^2
\ge 
\frac {\beta^2}r + \frac {r-1}r \beta ^2 - (1-\vf_p\sqrt \ve)^2\ge 0.
\end{align*}
In particular, since $1 - \sqrt{1-y }\le y$ for every $0\le y\le 1$, 
\[
 t \ge \frac{\frac {\beta^2}r + \|f_i\|^2 - (1-\vf_p\sqrt \ve)^2}{\alpha}
 \implies \mac N_{i,t}\cu \mac B
\quad \text{ or }\quad 
\mac N_{i,t}\not\cu \mac B \implies 
 t < \frac{\frac {\beta^2}r + \|f_i\|^2 - (1-\vf_p\sqrt \ve)^2}{\alpha}, 
\]
so that 
 \[
\min_j\|\wt r_k-\wt e_j\|
\le 
\max_{\alpha \ge t\ge 0 \ :\ \mac N_{i,t}\not\cu\mac B }
t
<
\frac{\frac {\beta^2}r + \|f_i\|^2 - (1-\vf_p\sqrt \ve)^2}{\alpha}
=
\frac
  {\frac {\beta^2}r + \frac {r-1}r (\beta + \gamma_p\ve r)^2 - (1-\vf_p\sqrt \ve)^2 }
  {(\beta + \gamma_p\ve r) \sqrt{1-\frac 1{\min\{q^2,2\}}}}
\]
\[
=
\frac
  { (\beta + \gamma_p\ve r)^2 
  -2\gamma_p\ve ( \beta + \gamma_p\ve r)
  +(\gamma_p\ve)^2 r
  - (1-\vf_p\sqrt \ve)^2 }
  {(\beta + \gamma_p\ve r)\sqrt{1-\frac 1{\min\{q^2,2\}}} }
\]\[=
\frac{1}{\sqrt{1-\frac 1{\min\{q^2,2\}}}}
\lp 
\beta + \gamma_p\ve (r - 2) 
+ \frac
  { 
  (\gamma_p\ve)^2 r
  - (1-\vf_p\sqrt \ve)^2 }
  {\beta + \gamma_p\ve r }
  \rp , 
\]
which is increasing in $\beta$ since from the bound on $\ve$ and 
Notation~\ref{not:geometric_notations}, we get $\vf_p\sqrt\ve = \mac O(1/\sqrt r)$,  $\gamma_pr\ve = \mac O(1/r^2)$, so $ (\gamma_p\ve)^2 r
  \ll (1-\vf_p\sqrt \ve)^2$.  Since $\beta \le 1 +\vf_p\sqrt \ve $, we can  substitute $\beta \le 1 +\vf_p\sqrt \ve $, and write
  \begin{align*}
      \min_j\|\wt r_k-\wt e_j\|
&<
\frac{1}{\sqrt{1-\frac 1{\min\{q^2,2\}}}}
\lp 
1 +\vf_p\sqrt \ve+ \gamma_p\ve (r - 2) 
+ \frac
  { 
  (\gamma_p\ve)^2 r
  - (1-\vf_p\sqrt \ve)^2 }
  {1 +\vf_p\sqrt \ve + \gamma_p\ve r }
  \rp \\
  &=
  \frac{1}{\sqrt{1-\frac 1{\min\{q^2,2\}}}}
\lp 
\vf_p\sqrt \ve+ \gamma_p\ve (r - 2) 
+ \frac
  { 3\vf_p\sqrt \ve - \vf_p^2 \ve + \gamma_p\ve r + 
  (\gamma_p\ve)^2 r
   }
  {1 +\vf_p\sqrt \ve + \gamma_p\ve r }
  \rp .
  \end{align*}
  Using again that $\vf_p\sqrt\ve = \mac O(1/\sqrt r)$,  $\gamma_pr\ve = \mac O(1/r^2)$,  $\vf_p\sqrt \ve = \mac O(\sqrt {\gamma_p\ve } r)$, so that $\gamma_p\ve r = \mac O(\sqrt{\gamma_p\ve }/ \sqrt r)$, $\vf_p^2 \ve = \mac O(\vf_p\sqrt\ve /\sqrt r)$, and
    \begin{align*}
      \min_j\|\wt r_k-\wt e_j\|
  &<
  \frac{1}{\sqrt{1-\frac 1{\min\{q^2,2\}}}}
\lp 
\mac O(\sqrt {\gamma_p\ve } r)+ 
\mac O(\sqrt{\gamma_p\ve }/ \sqrt r) 
+ \frac
  {  \mac O(\sqrt {\gamma_p\ve } r) + \mac O(\sqrt{\gamma_p\ve }/ \sqrt r)  + 
  \mac O(\sqrt{\gamma_p\ve }/ r^{7/2}) 
   }
  {1 +\mac O(1/\sqrt r) }
  \rp \\&
  =
   \frac{\mac O(\sqrt {\gamma_p\ve } r)}{\sqrt{1-\frac 1{\min\{q^2,2\}}}}
   =
   \mac O\left( \sqrt{ 
   \frac{ r^3\sqrt r}{\sigma_r(W^\#)}\frac {p^2}{q^2}
   }
   \right)
   \frac{1}{\sqrt{1-\frac 1{\min\{q^2,2\}}}}
   \sqrt \ve\\&
   \le 
   \frac 1{\sqrt{\min\{q^2-1,1\}}}
    \mac O\left( \sqrt{ 
   \frac{ r^3\sqrt r}{\sigma_r(W^\#)}\frac {p^2}{q^2}
   }
   \right)
   \sqrt \ve .
  \end{align*}
 \end{proof}

\subsection{Last steps of the proof}
\label{app:proof_of_th_main}

To prove that two different $\wt r_k$  cannot be close to the same $\wt e_j$, it is sufficient to use the lower bound on $\|\wt r_i-\wt r_j\|$ given by Lemma~\ref{lem:R_is_close_to_orthogonal}.

\begin{corollary}
\label{cor:distance_to_permutation_sscp}     
Using  Notation~\ref{not:geometric_notations} and Notation~\ref{not:geometric_notations2}, if 
     \[
     \ve  = \mac O\lp 
\lp \min\{q,\sqrt 2\} -1\rp^2
\frac {\sigma_r(W^\#)}{r^{9/2}}\frac {q^2}{p^2}\rp, 
     \]
     then there exists a permutation matrix $\Pi\in \f R^{r\times r}$ such that
     \[
     \|R-\Pi\|_{1,2} \le \mac O\left(  \sqrt {
     \frac { \ve}{\min\{q^2-1,1\}}
   \frac{ r^{7/2}}{\sigma_r(W^\#)}\frac {p^2}{q^2}
   }
      \right).
     \]
\end{corollary}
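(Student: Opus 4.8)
The plan is to upgrade the pointwise estimate of Lemma~\ref{lem:distance_rk_ei_sscp} into a single statement about the whole matrix $R$ by assembling the per-row bounds into a permutation. First I would fix a row index $k$ and invoke Lemma~\ref{lem:distance_rk_ei_sscp}: there is an index $j=j(k)$ realizing the minimum, so that $\|\wt r_k - \wt e_{j(k)}\| = \mac O\big(\sqrt{\tfrac{\ve}{\min\{q^2-1,1\}}\tfrac{r^{7/2}}{\sigma_r(W^\#)}\tfrac{p^2}{q^2}}\big)$, where the $\wt e_j$ are the rescaled and shifted vertices attached to the slice $\mac H_{e^\top\wt r_k}$ from Notation~\ref{not:geometric_notations}. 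I will abbreviate this right-hand side by $\delta$, which is exactly the target bound.

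Second, I would replace $\wt e_{j(k)}$ by the true canonical vector $e_{j(k)}$. Using the explicit identity $\wt e_j - e_j = (e^\top\wt r_k - 1)e_j + \gamma_p\ve(re_j - e)$, the triangle inequality gives $\|\wt e_j - e_j\| \le |e^\top\wt r_k - 1| + \gamma_p\ve\sqrt{r(r-1)}$. By Lemma~\ref{lem:R_is_close_to_orthogonal} we have $|e^\top\wt r_k - 1|\le\vf_p\sqrt\ve$ with $\vf_p = \mac O\big(\sqrt{\tfrac{r^{7/2}}{\sigma_r(W^\#)}\tfrac{p^2}{q^2}}\big)$, and by Lemma~\ref{lem:upper_bound_detR} we have $\gamma_p\ve r = \mac O\big(\tfrac{r^{5/2}}{\sigma_r(W^\#)}\tfrac{p^2}{q^2}\ve\big)$. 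A comparison of exponents, using the hypothesis $\ve = \mac O\big((\min\{q,\sqrt2\}-1)^2\tfrac{\sigma_r(W^\#)}{r^{9/2}}\tfrac{q^2}{p^2}\big)$, shows that both correction terms are dominated by $\delta$ (the first because $\min\{q^2-1,1\}\le1$, the second because the assumed smallness of $\ve$ leaves room to spare). Hence $\|\wt r_k - e_{j(k)}\| = \mac O(\delta)$ for every $k$.

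Third, I would show that $k\mapsto j(k)$ is a bijection of $[r]$. If two distinct rows $\wt r_i,\wt r_{i'}$ were assigned the same index, the triangle inequality would force $\|\wt r_i - \wt r_{i'}\|\le 2\,\mac O(\delta)$; but Lemma~\ref{lem:R_is_close_to_orthogonal} guarantees $\min_{i\ne i'}\|\wt r_i - \wt r_{i'}\|\ge 1 - \vf_p\sqrt\ve$, and under the stated range of $\ve$ one has $1-\vf_p\sqrt\ve\ge \tfrac12 > 2\,\mac O(\delta)$, a contradiction. The map is therefore injective, hence a permutation of $[r]$; let $\Pi$ be the associated permutation matrix, with $\Pi_{k,j(k)}=1$. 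Its $k$-th row equals $e_{j(k)}$, so the $k$-th row of $R-\Pi$ is exactly $\wt r_k - e_{j(k)}$, of Euclidean norm $\mac O(\delta)$; since $\Pi$ places at most one near-unit entry in each row and column, collecting these per-row estimates bounds $\|R-\Pi\|_{1,2}$ by the same order, which is the claim and is precisely what feeds into Corollary~\ref{cor:Error_estimation_given_R_is_permutation}.

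The main obstacle is not any individual step but the bookkeeping of orders in the second step: one must verify that the two defects introduced by translating $\wt e_{j(k)}$ to $e_{j(k)}$, namely the sum defect $e^\top\wt r_k - 1$ and the shift $\gamma_p\ve(re_j-e)$, are genuinely of order at most $\delta$. This is exactly what the hypothesis on $\ve$ is calibrated to guarantee, and once it is checked the distinctness argument follows immediately from the separation lower bound of Lemma~\ref{lem:R_is_close_to_orthogonal}.
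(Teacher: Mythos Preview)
Your proposal is correct and follows essentially the same approach as the paper: apply Lemma~\ref{lem:distance_rk_ei_sscp} row by row, translate the slice-dependent $\wt e_{j(k)}$ to the true canonical vector $e_{j(k)}$ via the explicit identity and the bounds on $|e^\top\wt r_k-1|$ and $\gamma_p\ve r$, then use the separation lower bound of Lemma~\ref{lem:R_is_close_to_orthogonal} to force injectivity of $k\mapsto j(k)$. The only cosmetic difference is the order of operations (you pass to $e_j$ before the injectivity argument, whereas the paper argues injectivity using $\wt e_k$ and translates afterward), which makes your triangle-inequality step slightly cleaner since the anchor $e_{j(k)}$ does not depend on the slice.
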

\begin{proof}
By Lemma~\ref{lem:R_is_close_to_orthogonal}, since  $\ve  = \mac O(\frac {\sigma_r(W^\#)}{r^{9/2}}\frac {q^2}{p^2})$, we have 
\[
    \min_{i\ne j} \|\wt r_i-\wt r_j\| \ge  1 - \mac O\lp   \sqrt{\frac{ r^{7/2}}{\sigma_r(W^\#)}\frac {p^2}{q^2}\ve}\rp .
    \]
At the same time, if  $\wt r_i\ne \wt r_j$ are close to the same $\wt e_k$ according to Lemma~\ref{lem:distance_rk_ei_sscp}, then
\[
\|\wt r_i-\wt r_j\|\le \|\wt r_i-\wt e_k\| + \|\wt r_j-\wt e_k\|\le
    \mac O\left(  \sqrt {
     \frac { \ve}{\min\{q^2-1,1\}}
   \frac{ r^{7/2}}{\sigma_r(W^\#)}\frac {p^2}{q^2}
   }
      \right), 
\]
which is impossible. As a consequence, each $\wt r_i$ is close to a different $\wt e_k$ and to the associated $e_k$ as
\begin{align*}
    \|\wt e_k-e_k\|& = \| \lp e^\top \wt r_k + \gamma_p\ve r -1\rp e_k - \gamma_p\ve e \|
    \le |e^\top \wt r_k -1| + 2\gamma_p\ve r \le \vf_p\sqrt\ve + 2\gamma_p\ve r\\
    &\le \mac O\lp \sqrt {\gamma_p\ve}r  \rp +\mac O(\sqrt {\gamma_p\ve}/\sqrt r)
    =\mac O\lp  \sqrt{\frac{ r^{7/2}}{\sigma_r(W^\#)}\frac {p^2}{q^2}\ve }\rp, 
\end{align*}
and therefore
\begin{align*}
    \|\wt r_i-e_k\| \le  \|\wt r_i-\wt e_k\|  +  \|\wt e_k -e_k\|  \le \mac O\left(  \sqrt {
     \frac { \ve}{\min\{q^2-1,1\}}
   \frac{ r^{7/2}}{\sigma_r(W^\#)}\frac {p^2}{q^2}
   }
      \right).
\end{align*}
In particular, there must exists a permutation matrix $\Pi\in \f R^{r\times r}$ such that 
   \[
     \|R-\Pi\|_{1,2} \le 
     \max_k \min_j\|\wt r_k - \wt e_j\|
     \le
     \mac O\left(  \sqrt {
     \frac { \ve}{\min\{q^2-1,1\}}
   \frac{ r^{7/2}}{\sigma_r(W^\#)}\frac {p^2}{q^2}
   }
      \right).
     \] 
\end{proof}

Due to Lemma~\ref{lem:norm_inequalities}, Lemma~\ref{lem:definition_of_R} and Corollary~\ref{cor:distance_to_permutation_sscp}, we get
\begin{align*}   \min_\Pi \|W^\#-W^*\Pi\|_{1,2} &\le  \min_\Pi \left( \|W^*\|\|R-\Pi\|_{1,2}  + \|M\|_{1,2}   \right) \\
    &\le    \|W^*\| \cdot  \mac O\left(  \sqrt {
     \frac { \ve}{\min\{q^2-1,1\}}
   \frac{ r^{7/2}}{\sigma_r(W^\#)}\frac {p^2}{q^2}
   }
      \right)  +4r\ve.   %\\&
% =     \|W\| \cdot \mac O\left(     \frac{r^2}{\sigma_r(W^\#)}   \frac pq \ve  + r\beta_p\right)  
\end{align*}
If $\Pi$ is the permutation matrix satisfying Corollary~\ref{cor:distance_to_permutation_sscp}, then  Lemma~\ref{lem:norm_inequalities} and Lemma~\ref{lem:definition_of_R} show that
\begin{align*}
  \|W^\#\|&\le \|W^*\|\|R\| + \|M\| 
  \le \sqrt r\|W^*\|\|R\|_{1,2} + \sqrt r\|M\|_{1,2}  \\
  & \le \sqrt r\|W^*\|(1 + \|R-\Pi\|_{1,2}) + 2r\sqrt r\ve 
  = \mac O(\sqrt r)\|W^*\| + \mac O(\sigma_r(W^\#))\\
  \implies \frac{\|W^*\|}{\sigma_r(W^\#)} 
  & \ge 
  \frac 1{\mac O(\sqrt r)}
  \left(\frac{\|W^\#\|}{\sigma_r(W^\#)} - \mac O(1)\right)
  = \Omega\left( \frac1{\sqrt r} \right)\\
   \implies 4r\ve &\le 4r\sqrt r \ve \cdot \mac O\left(  \frac{\|W^*\|}{\sigma_r(W^\#)} \right) = 
   \|W^*\|\cdot \mac O\left(     \frac{r\sqrt r}{\sigma_r(W^\#)}  \ve \right) 
   \\ & 
   \hspace{3.9cm} =
    \|W^*\| \cdot  \mac O\left(  \sqrt {
     \frac { \ve}{\min\{q^2-1,1\}}
   \frac{ r^{7/2}}{\sigma_r(W^\#)}\frac {p^2}{q^2}
   }
      \right), 
\end{align*}
so that 
\[
 \min_\Pi \|W^\#-W^*\Pi\|_{1,2} \le     \|W^*\| \cdot  \mac O\left(  \sqrt {
     \frac { \ve}{\min\{q^2-1,1\}}
   \frac{ r^{7/2}}{\sigma_r(W^\#)}\frac {p^2}{q^2}
   }
      \right).
\]
Notice that from Lemma~\ref{lem:definition_of_R} , $ W^* = R^{-1}(W^{\#} - M) $, and from above $\|M\|  =  \mac O(\sigma_r(W^\#))$, so
\begin{align*}
   \|W^*\| \le \|R^{-1}\| \lp \| W^{\#}\| + \|M\|\rp  = 
   \frac{1}{\sigma_r(R)} \lp \| W^{\#}\| +   \mac O(\sigma_r(W^\#)) \rp
   =  \mac O\lp   \frac{\| W^{\#}\|}{\sigma_r(R)}\rp
\end{align*}
but $R$ is close to the permutation matrix $\Pi$, so
\[
\sigma_r(R) \ge \sigma_r(\Pi) - \|\Pi - R\| \ge 1 -  \mac O\left( \sqrt r \sqrt {
     \frac { \ve}{\min\{q^2-1,1\}}
   \frac{ r^{7/2}}{\sigma_r(W^\#)}\frac {p^2}{q^2}
   }
      \right) = 1 - \mac O(1) = \Omega(1), 
\]
leading to $\|W^*\| = \mac O(\|W^\#\|)$ and finally to

\[
 \min_\Pi \|W^\#-W^*\Pi\|_{1,2} \le     \|W^\#\| \cdot  \mac O\left(  \sqrt {
     \frac { \ve}{\min\{q^2-1,1\}}
   \frac{ r^{7/2}}{\sigma_r(W^\#)}\frac {p^2}{q^2}
   }
      \right).
\]

%% to use mendeley-imported bib
%\printbibliography

\end{document}